\documentclass{article}
\usepackage[margin=1in]{geometry}
\bibliographystyle{alpha}
\usepackage{amsthm}
\usepackage[linesnumbered,ruled,vlined]{algorithm2e}

\usepackage{graphicx}
\usepackage{setspace}
\usepackage{caption}
\usepackage{subcaption}
\usepackage{amsmath}
\usepackage{cancel}
\usepackage{listings}
\usepackage{indentfirst}
\usepackage{algpseudocode}

\usepackage{amssymb}
\usepackage{mathtools}
\usepackage{floatrow}
\usepackage{url}
\urlstyle{same}
\usepackage{hyperref}
\hypersetup{
    colorlinks=true,
    linkcolor=blue,
    filecolor=magenta,      
    urlcolor=cyan,
    citecolor=blue,
}

\newtheorem{theorem}{Theorem}[section]
\newtheorem{corollary}[theorem]{Corollary}
\newtheorem{lemma}[theorem]{Lemma}
\newtheorem{proposition}[theorem]{Proposition}
\newtheorem{definition}[theorem]{Definition}
\newtheorem{remark}[theorem]{Remark}
\newtheorem{claim}[theorem]{Claim}
\newtheorem{conjecture}[theorem]{Conjecture}
\usepackage[dvipsnames]{xcolor}
\usepackage{thm-restate}
\usepackage[normalem]{ulem}

\newcommand{\TV}{d_{\mathrm{TV}}}
\newcommand{\ball}[3]{\mathcal{B}\left( #1, #2, #3 \right)}
\newcommand{\eps}{\varepsilon}
\newcommand{\Prob}{{\mathbf{P}}}
\newcommand{\UnitGauss}{{\mathcal{G}}}
\newcommand{\dGauss}{{\mathcal{G}^{d}}}
\newcommand{\kmix}{\ensuremath{k\textnormal{\normalfont-mix}}}
\newcommand{\twomix}{\ensuremath{2\textnormal{\normalfont-mix}}}
\newcommand{\wsigma}{\widetilde{\sigma}}

\newcommand{\wtilde}{\widetilde}
\newcommand{\what}{\widehat}

\newcommand{\bR}{\mathbb{R}}
\newcommand{\bN}{\mathbb{N}}
\newcommand{\bZ}{\mathbb{Z}}
\newcommand{\cA}{\mathcal{A}}

\newcommand{\cF}{\mathcal{F}}
\newcommand{\cG}{\mathcal{G}}
\newcommand{\cH}{\mathcal{H}}
\newcommand{\cK}{\mathcal{K}}

\newcommand{\cN}{\mathcal{N}}
\newcommand{\cX}{\mathcal{X}}
\newcommand{\cY}{\mathcal{Y}}

\newcommand{\comment}[1]{}

\newcommand{\learnmix}{{\fontfamily{cmtt}\selectfont Learn-Mixture}}
\newcommand{\stable}{{\fontfamily{cmtt}\selectfont Stable-Histogram}}

\newcommand{\DefinitionName}[1]{\label{def:#1}}
\newcommand{\Definition}[1]{Definition~\ref{def:#1}}

\newcommand{\LemmaName}[1]{\label{lem:#1}}
\newcommand{\Lemma}[1]{Lemma~\ref{lem:#1}}

\newcommand{\TheoremName}[1]{\label{thm:#1}}
\newcommand{\Theorem}[1]{Theorem~\ref{thm:#1}}

\newcommand{\PropositionName}[1]{\label{prop:#1}}
\newcommand{\Proposition}[1]{Proposition~\ref{prop:#1}}

\newcommand{\CorollaryName}[1]{\label{cor:#1}}
\newcommand{\Corollary}[1]{Corollary~\ref{cor:#1}}

\newcommand{\ClaimName}[1]{\label{claim:#1}}
\newcommand{\Claim}[1]{Claim~\ref{claim:#1}}

\newcommand{\EquationName}[1]{\label{eq:#1}}
\newcommand{\Equation}[1]{Eq.~\eqref{eq:#1}}

\author{
  Ishaq Aden-Ali\thanks{Department of Computing and Software, McMaster University.  \texttt{adenali@mcmaster.ca}.}
  \and
  Hassan Ashtiani\thanks{Department of Computing and Software, McMaster University. \texttt{zokaeiam@mcmaster.ca}. }
  \and
  Christopher Liaw\thanks{Department of Computer Science, University of Toronto. \texttt{cvliaw@cs.toronto.edu}. }
}

\makeatletter
\renewcommand{\paragraph}{%
  \@startsection{paragraph}{4}%
  {\z@}{1.25ex \@plus 1ex \@minus .2ex}{-1em}%
  {\normalfont\normalsize\bfseries}%
}
\makeatother

\title{Privately Learning Mixtures of Axis-Aligned Gaussians}
\date{\today}

\begin{document}

\maketitle

\begin{abstract}
    We consider the problem of learning mixtures of  Gaussians under the constraint of approximate differential privacy.
    We prove that $\wtilde{O}(k^2 d \log^{3/2}(1/\delta) / \alpha^2 \eps)$ samples are sufficient to learn a mixture of $k$ axis-aligned Gaussians in $\bR^d$ to within total variation distance $\alpha$ while satisfying $(\eps, \delta)$-differential privacy.
    This is the first result for privately learning mixtures of unbounded axis-aligned (or even unbounded univariate) Gaussians.
    If the covariance matrices of each of the Gaussians is the identity matrix, we show that $\wtilde{O}(kd/\alpha^2 + kd \log(1/\delta) / \alpha \eps)$ samples are sufficient.
    
    Recently, the ``local covering'' technique of Bun, Kamath, Steinke, and Wu~\cite{BunKSW19} has been successfully used for privately learning high-dimensional Gaussians with a known covariance matrix and extended to privately learning general high-dimensional Gaussians by Aden-Ali, Ashtiani, and Kamath~\cite{Aden-AliAK21}. Given these positive results, this approach has been proposed as a promising direction for privately learning mixtures of Gaussians. Unfortunately, we show that this is not possible.
    
    We design a new technique for privately learning mixture distributions.
    A class of distributions $\cF$ is said to be list-decodable if there is an algorithm that, given ``heavily corrupted'' samples from $f\in \cF$, outputs a list of distributions, $\what{\cF}$, such that one of the distributions in $\what{\cF}$ approximates $f$. We show that if $\cF$ is privately list-decodable, then we can privately learn mixtures of distributions in $\cF$. 
    Finally, we show axis-aligned Gaussian distributions are privately list-decodable, thereby proving mixtures of such distributions are privately learnable.
\end{abstract}

\section{Introduction}\label{sec:intro}
The fundamental problem of \emph{distribution learning} concerns the design of algorithms (i.e.,~\emph{estimators}) that, given samples generated from an unknown distribution $f$, output an ``approximation'' of $f$.
While the literature on distribution learning is vast and has a long history dating back to the late nineteenth century, the problem of distribution learning under privacy constraints is relatively new and unexplored.

In this paper, we work with the notion of differential privacy which was introduced by Dwork et al.~\cite{DworkMNS06} as a rigorous and practical notion of data privacy.
Roughly speaking, differential privacy guarantees that no single data point can influence the output of an algorithm too much, which intuitively provides privacy by ``hiding'' the contribution of each individual.
Differential privacy is the de facto standard for modern private analysis which has seen widespread impact in both industry and government \cite{ErlingssonPK14,BittauEMMRLRKTS17, DingKY17, AppleDP17, DajaniLSKRMGDGKKLSSVA17}.

In recent years, there has been a flurry of activity in differentially private distribution learning.
A number of techniques have been developed in the literature for this problem.
In the pure differentially private setting, Bun et al.~\cite{BunKSW19} recently introduced a method to learn a class of distributions when the class admits a finite cover, i.e.~when the entire class of distributions can be well-approximated by a finite number of representative distributions.
In fact, they show that this is an exact characterization of distributions which can be learned under pure differential privacy in the sense that a class of distributions is learnable under pure differential privacy if and only if the class admits a finite cover \cite{HardtT10, BunKSW19}.
As a consequence of this result, they obtained pure differentially private algorithms for learning Gaussian distributions provided that the mean of the Gaussians are bounded \emph{and} the covariance matrix of the Gaussians are spectrally bounded.\footnote{When we say that a matrix $\Sigma$ is spectrally bounded, we mean that there are $0 < a_1 \leq a_2$ such that $a_1 \cdot I \preceq \Sigma \preceq a_2 \cdot I$.}
Moreover, such restrictions on the Gaussians are necessary under the constraint of pure differential privacy.

One way to remove the requirement of having a finite cover is to relax to a weaker notion of privacy known as approximate differential privacy.
With this notion, Bun et al.~\cite{BunKSW19} introduced another method to learn a class of distributions that, instead of requiring a finite cover, requires a ``locally small'' cover, i.e.~a cover where each distribution in the class is well-approximated by only a small number of elements within the cover.
They prove that the class of Gaussians with arbitrary mean and a fixed, known covariance matrix has a locally small cover which implies an approximate differentially private algorithm to learn this class of distributions.
Later, Aden-Ali, Ashtiani, and Kamath~\cite{Aden-AliAK21} proved that the class of mean-zero Gaussians (with no assumptions on the covariance matrix) admits a locally small cover.
This can then be used to obtain an approximate differentially private algorithm to learn the class of all Gaussians.

It is a straightforward observation that if a class of distributions admits a finite cover then the class of its mixtures also admits a finite cover.
Combined with the aforementioned work of Bun et al.~this implies a pure differentially private algorithm for learning mixtures of Gaussians with bounded mean and spectrally bounded covariance matrices.
It is natural to wonder whether an analogous statement holds for locally small covers.
In other words, if a class of distributions admits a locally small cover then does the class of mixtures also admit a locally small cover?
If so, this would provide a fruitful direction to design differentially private algorithms for learning mixtures of arbitrary Gaussians.
Unfortunately, there are simple examples of classes of distributions that admit a locally small cover yet their mixture do \emph{not}.
This leaves open the question of designing private algorithms for many classes of distributions that are learnable in the non-private setting.
One concrete open problem is for the class of mixtures of two arbitrary univariate Gaussian distributions. A more general problem is private learning of mixtures of $k$ axis-aligned (or general) Gaussian distributions.

\subsection{Main Results}
We demonstrate that it is indeed possible to privately learn mixtures of unbounded univariate Gaussians. More generally, we give sample complexity upper bounds for learning mixtures of unbounded $d$-dimensional axis-aligned Gaussians.
In the following theorem and the remainder of the paper, $n$ denotes the number of samples that is given to the algorithm.

\begin{theorem}[Informal]
    \TheoremName{main-informal}
    Let $\eps, \alpha, \beta \in (0,1)$ and $\delta \in (0, 1/n)$.
    The sample complexity of learning a mixture of $k$ $d$-dimensional axis-aligned Gaussians to $\alpha$-accuracy in total variation distance under $(\eps, \delta)$-differential privacy and success probability $1- \beta$ is
    \[
        \widetilde{O}\left(\frac{k^{2}d\log^{3/2}(1/\beta\delta)}{\alpha^{2}\eps}\right).
    \]
\end{theorem}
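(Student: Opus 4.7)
The plan is to prove the theorem via a generic reduction from private list-decoding to privately learning mixtures, combined with a private list-decoding algorithm for the class of axis-aligned Gaussians. The argument has three stages: formalize the reduction; give a private list-decoder for univariate axis-aligned Gaussians; and carefully assemble per-coordinate univariate lists into a $d$-dimensional list without paying an exponential cost in $d$.

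First, I would make precise the notion of private list-decodability of a class $\cF$: there is an $(\eps,\delta)$-DP algorithm that, given $n_0$ samples from a distribution within total variation distance $1-\alpha/k$ of some target $f\in\cF$ (the regime that arises when a component of weight $\alpha/k$ is surrounded by the other components of a $k$-mixture), outputs a list of size $m=\mathrm{poly}(k)$ containing an $\alpha$-close approximation of $f$. The reduction is then: run the list-decoder on the mixture samples to obtain a candidate list $\what{\cF}$, enumerate all $m^{k}$ mixtures of $k$ elements from $\what{\cF}$ together with a fine net over the mixing-weight simplex, and apply a private hypothesis-selection procedure (e.g., of Bun et al.\ or Aden-Ali et al.) to output a mixture within $O(\alpha)$ of the true one. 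Since hypothesis selection contributes only $\tilde O(k\log m/\alpha^{2}\eps)=\tilde O(k\log k/\alpha^{2}\eps)$ samples, the total sample complexity is dominated by the list-decoding step.

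Second, I would build a private list-decoder for axis-aligned Gaussians coordinate by coordinate. In one dimension, for each component of weight at least $\alpha/k$ there is an interval $[\mu-c\sigma,\mu+c\sigma]$ of probability mass $\Omega(\alpha/k)$. I would privately identify these heavy regions on a geometric grid over location and scale using a stable-histogram mechanism, and within each heavy region privately estimate the mean and variance via DP moment/quantile estimators. This yields a list of $\mathrm{poly}(k)$ candidate $(\mu,\sigma)$-pairs covering every heavy univariate component. Two effects drive the sample bound: recovering a component of mass $\alpha/k$ to accuracy $\alpha$ needs $\tilde\Omega(k/\alpha)$ effective samples, and the mixture must supply this many from each of $k$ components, giving the $k^{2}$ dependence; the stable-histogram and DP moment subroutines contribute the $\log^{3/2}(1/\delta)/\eps$ factor.

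Third, and this is the main obstacle, I need to assemble the $d$ per-coordinate candidate lists into a single multivariate list of axis-aligned Gaussians without paying the $k^{d}$ cost of a Cartesian product. The key is that joint samples correlate the marginals: a sample drawn from component $i$ has all $d$ coordinates simultaneously consistent with the parameters of component $i$. I would exploit this by fixing the first coordinate's list as an anchor and, for each anchor candidate, privately re-estimating the remaining coordinates' parameters using only samples whose first coordinate falls in the anchor's high-probability interval, via a stable subsample-and-aggregate style mechanism. Iterating (or running in parallel) across coordinates yields a final list of size $\mathrm{poly}(k)$ of full $d$-dimensional axis-aligned Gaussians, and advanced composition over $d$ such queries produces the linear $d$ factor in the sample complexity. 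I expect the most delicate work to be the joint privacy-and-correctness analysis of this coordinate-matching step: ensuring the list stays $\mathrm{poly}(k)$, every true component is covered by some list element, and the per-coordinate conditioning does not reveal individual-sample information or multiplicatively blow up the privacy budget.
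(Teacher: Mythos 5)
Your first two stages line up with the paper's actual strategy: a reduction from private list-decoding to privately learning mixtures, and a stable-histogram-based univariate list decoder in which the $(1-\gamma)^{-2}$ dependence (with $1-\gamma=\Theta(\alpha/k)$) supplies the $k^2/\alpha^2$ factor.

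Your third stage, however, is based on a misdiagnosis, and the fix you propose is both unnecessary and unlikely to go through as stated. You correctly observe that taking a Cartesian product of $d$ per-coordinate candidate lists produces a list whose size is exponential in $d$, but you then treat this as fatal and set out to force the list size to stay $\mathrm{poly}(k)$. The paper simply accepts the exponential-in-$d$ list: the multivariate list decoder runs the univariate decoder on each coordinate (with privacy budget $\eps/d$, accuracy $\alpha/d$, failure $\beta/d$) and returns the product, giving $L=O(d^2/((1-\gamma)^5\alpha^2))^d$. This is harmless because private hypothesis selection costs only $\tilde O((k\log L+\log(1/\beta))/\alpha^2\eps)$ samples, and $\log L=O(d\log(dk/\alpha))$, so the hypothesis-selection step contributes $\tilde O(kd/\alpha^2\eps)$ — strictly lower order than the list-decoding step's $\tilde O(k^2 d\log^{3/2}(1/\beta\delta)/\alpha^2\eps)$, which is where the linear $d$ actually comes from (basic composition over $d$ coordinate-wise calls). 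So the ``main obstacle'' you identified is not an obstacle at all, and even if your coordinate-matching worked perfectly it would only shave a lower-order term.

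Beyond being unnecessary, the coordinate-matching scheme has concrete problems you would have to confront. Conditioning a $k$-mixture on the first coordinate landing in an anchor interval reweights the mixture (components whose first-coordinate marginal has little mass in the interval are severely down-weighted), so you would not in general retain $\Omega(\alpha/k)$ mass from every component after conditioning, which is exactly what the list-decoding guarantee requires. You would also be truncating the first coordinate, so the conditional law is no longer a mixture of axis-aligned Gaussians. Finally, the composition arithmetic is reversed: advanced composition over $d$ adaptive calls gives a $\sqrt{d}$ degradation in the privacy budget (hence a $\sqrt{d}$ factor in samples), not the linear $d$ you claim; it is \emph{basic} composition that gives linear $d$, and that is what the paper uses. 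The cleanest route is therefore the paper's: union-bound the per-coordinate guarantees at accuracy $\alpha/d$, invoke the TV subadditivity bound for product distributions, take the product list, and let the logarithmic dependence of hypothesis selection on the list size absorb the $d$ in the exponent.
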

The formal statement of this theorem can be found in \Theorem{multivariate-pac-learner}.
We note that the condition on $\delta \in (0, 1/n)$ is standard in the differential privacy literature.
Indeed, for useful privacy, $\delta$ should be ``cryptographically small'', i.e., $\delta \ll 1/n$.

Even for the univariate case, our result is the \emph{first} sample complexity upper bound for learning mixture of Gaussians under differential privacy for which the variances are unknown and the parameters of the Gaussians may be unbounded.
In the non-private setting, it is known that $\wtilde{\Theta}(kd/\alpha^2)$ samples are necessary and sufficient to learn an axis-aligned Gaussian in $\bR^d$ \cite{SureshOAJ14,AshtianiBHLMP20}.
In the private setting, the best known sample complexity lower bound is $\Omega( d/\alpha \eps \log(d))$ under $(\eps, \delta)$-DP when $\delta \leq \wtilde{O}(\sqrt{d} / n)$~\cite{KamathLSU19}.
Obtaining improved upper or lower bounds in this setting remains an open question.

If the covariance matrix of each component of the mixture is the same and known or, without loss of generality, equal to the identity matrix, then we can improve the dependence on the parameters and obtain a result that is in line with the non-private setting.
\begin{theorem}[Informal]
    \TheoremName{identity-informal}
    Let $\eps, \alpha, \beta \in (0,1)$ and $\delta \in (0, 1/n)$.
    The sample complexity of learning a mixture of $k$ $d$-dimensional Gaussians with identity covariance matrix to $\alpha$-accuracy in total variation distance under $(\eps, \delta)$-differential privacy and success probability $1- \beta$ is
    \[
        \widetilde{O}\left(\frac{kd+\log(1/\beta)}{\alpha^2} + \frac{k d\log(1/\beta\delta)}{\alpha \eps}\right).
    \]
\end{theorem}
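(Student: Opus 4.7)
The plan is to specialize the private-list-decoding framework of this paper to the subclass of identity-covariance Gaussians, where the only unknown per component is a mean vector in $\bR^d$.

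First, I would invoke the general reduction from privately list-decodable base classes to privately learnable mixtures (the same reduction used to prove \Theorem{main-informal}). This reduces the problem to producing a short private list of candidate Gaussians $\cN(\mu, I)$ (of total size $\mathrm{poly}(k)$) such that each true mixture component is $\alpha$-close in total variation distance to some element of the list, followed by a round of private hypothesis selection (e.g., a private Scheff\'e-tournament / minimum-distance estimator) over the $k$-way Cartesian product of the list to pick out the best mixture. The selection step costs only $\widetilde{O}(\log(k)/\alpha^2 + \log(k)/(\alpha\eps))$ samples by standard private hypothesis-selection results, so the dominant cost will be the list-decoder itself.

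Second, I would design a private list-decoder tailored to $\{\cN(\mu, I) : \mu \in \bR^d\}$. Given $n$ samples in which an $\Omega(1/k)$-fraction are drawn from $\cN(\mu, I)$ and the rest are arbitrary, the goal is to output $\mathrm{poly}(k)$ candidate means with at least one within $\ell_2$-distance $\alpha$ of $\mu$. Exploiting the known identity covariance, an $\Omega(1/k)$-fraction of the samples concentrates inside a ball of radius $O(\sqrt{d\log k})$ around $\mu$. A two-stage procedure should suffice: a private coarse-localization step (using per-coordinate \stable\ histograms together with a private alignment across coordinates) to bracket each component's mean within an $O(1)$-diameter box, and a private refinement step (a clipped empirical mean with calibrated Gaussian noise) to push the accuracy down to $\alpha$. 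The refinement stage contributes the $\widetilde{O}(kd/\alpha^2 + kd\log(1/\beta\delta)/(\alpha\eps))$ terms, matching both the non-private mixture rate and the standard private Gaussian mean-estimation rate.

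The principal obstacle is the coarse-localization step: a naive $d$-dimensional private histogram even over the $O(\sqrt{d\log k})$-radius ball has an exponential-in-$d$ number of cells and would destroy the linear $d$-dependence in the sample complexity. I plan to handle this by running one-dimensional \stable\ histograms per coordinate in parallel, each identifying $O(k)$ candidate coordinate-values, and then using a separate private ``alignment'' step that uses samples from each candidate column of one coordinate to determine the corresponding entries in the remaining coordinates. The payoff is a final list of $O(k)$ genuine candidate mean vectors rather than the combinatorial $\Omega(k^d)$ one gets from taking all Cartesian products. Arguing that this alignment can be performed privately, robustly to the $(1-1/k)$-fraction of corrupted samples, and without inflating the sample complexity beyond the claimed bound is where the main technical work will lie.
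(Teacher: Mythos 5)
Your high-level plan (private list-decoding of the base class followed by private hypothesis selection over mixtures of list elements, with a per-coordinate \stable{} histogram for coarse localization) matches the paper's, but you then introduce two components that are not needed and that create genuine gaps: the ``private alignment across coordinates'' step and the ``clipped empirical mean with Gaussian noise'' refinement.

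The central misconception is your claim that taking the Cartesian product of per-coordinate candidate lists would ``destroy the linear $d$-dependence.'' The reduction (\Theorem{reduction}) only pays $\log|\cK|$ in the private-hypothesis-selection step. The paper's identity-covariance list-decoder (\Corollary{identity-covariance}) runs a {\fontfamily{cmtt}\selectfont Univariate-Mean-Decoder} per coordinate, refines each of the $O(1/(1-\gamma))$ coarse intervals to a fine $(\alpha/d)$-net of size $O(d/\alpha)$ \emph{with no further data access}, and then takes the full Cartesian product $\what{M}=\prod_i\what{M}_i$. This yields $L = O(d/(1-\gamma)\alpha)^d$, which is indeed exponential in $d$, but with $(1-\gamma)=\Theta(\alpha/k)$ this gives $\log|\cK| = O(kd\log(kd/\alpha))$ and hence a PHS cost of $\widetilde{O}(kd/\alpha^2 + kd/(\alpha\eps))$ samples --- exactly the bound in the statement. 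Since the result is information-theoretic, the large list is free. Your worry about $\Omega(k^d)$-sized lists confuses computational efficiency with sample complexity, and your estimate that PHS ``costs only $\widetilde{O}(\log(k)/\alpha^2 + \log(k)/(\alpha\eps))$'' is wrong even under your own $\mathrm{poly}(k)$ list-size assumption ($\log|\cK|$ scales at least like $k\log L$, and you need the $kd$ to come from somewhere).

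Consequently, your alignment and noisy-clipped-mean steps are unnecessary; worse, neither is established. The alignment step you flag as ``the main technical work'' would have to associate coordinate-candidates across dimensions from data in which a $(1-1/k)$-fraction of points is adversarial, which is itself essentially the clustering problem list-decoding is meant to sidestep. Likewise, a clipped-mean estimator presupposes access to a (mostly) clean subset of samples near the true mean; in the corruption model of \Definition{CorruptedDist} you have no such subset without solving the problem first. The paper avoids both by putting the $\widetilde{O}(kd)$ sample cost into the $\log|\cK|$ factor of the selection step rather than into any estimation step, and by doing all refinement purely by netting (a data-independent post-processing step that costs no privacy and no samples).
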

We relegate the formal statement and the proof of this theorem to the appendix (see Appendix~\ref{app:identity-gaussians}).
Note that the work of~\cite{NissimRS07} implies an upper bound of
$O(k^2 d^3\log^2(1/\delta) / \alpha^2 \eps^2)$ for private learning of the same class albeit in the incomparable setting of parameter estimation.

\paragraph{Comparison with locally small covers.}
While the results in~\cite{BunKSW19,Aden-AliAK21} for learning Gaussian distributions under approximate differential privacy do not yield finite-time algorithms, they do give strong information-theoretic upper bounds. This is achieved by showing that certain classes of Gaussians admit locally small covers. It is thus natural to ask whether it is possible to use this approach based on locally small covers to obtain sharper upper bounds than our main result. Unfortunately, we cannot hope to do so because it is not possible to construct locally small covers for mixture classes in general. While univariate Gaussians admit locally small covers~\cite{BunKSW19}, the following simple example shows that mixtures of univariate Gaussians do not.
\begin{proposition}[Informal version of \Proposition{impossibility}]
Every cover for the class of mixtures of two univariate Gaussians is not locally small.
\end{proposition}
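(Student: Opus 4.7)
The plan is to exhibit, for any $\gamma$-cover $\mathcal{C}$ of the class $\mathcal{F}$ of mixtures of two univariate Gaussians, a single distribution $\phi \in \mathcal{F}$ around which infinitely many cover elements accumulate in total variation; this contradicts local smallness for every finite bound $K$. Throughout I take $\gamma$ to be a sufficiently small absolute constant (say $\gamma < 1/8$).

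The construction is elementary. Let $\phi := \mathcal{N}(0,1)$, which is a degenerate $2$-mixture and so lies in $\mathcal{F}$. Fix a constant $w \in (4\gamma, 1)$ and, for each $\mu \in \bR$, set $f_\mu := (1-w)\,\mathcal{N}(0,1) + w\,\mathcal{N}(\mu,1) \in \mathcal{F}$. A one-line computation from the definition of total variation gives $\TV(f_\mu, \phi) = w \cdot \TV(\mathcal{N}(\mu,1), \phi) \leq w$ and $\TV(f_\mu, f_{\mu'}) = w \cdot \TV(\mathcal{N}(\mu,1), \mathcal{N}(\mu',1))$, with the latter tending to $w$ as $|\mu - \mu'| \to \infty$. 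By the cover property, for each $\mu$ choose $g_\mu \in \mathcal{C}$ with $\TV(g_\mu, f_\mu) \leq \gamma$; the triangle inequality then gives $\TV(g_\mu, \phi) \leq w + \gamma$.

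To finish, I would pick a sequence $\mu_1, \mu_2, \dots$ with $|\mu_i - \mu_j|$ large enough that $\TV(f_{\mu_i}, f_{\mu_j}) > 2\gamma$ for all $i \neq j$; this is possible because those distances approach $w > 2\gamma$. For distinct indices no single element of $\mathcal{C}$ can be within $\gamma$ of both $f_{\mu_i}$ and $f_{\mu_j}$, so the $g_{\mu_i}$ are pairwise distinct. Consequently the TV-ball of fixed radius $w + \gamma$ around $\phi \in \mathcal{F}$ contains infinitely many elements of $\mathcal{C}$, contradicting any local-smallness bound. The main obstacle I anticipate is purely quantitative: one must calibrate $w$ and $\gamma$ against the local-size radius used in the paper's formal definition of ``locally small.'' Since $w$ ranges freely over any constant exceeding $2\gamma$, the argument goes through whenever $\gamma$ is smaller than that radius, which is the natural regime for the definition; a minor bookkeeping step replaces ``infinitely many'' with ``at least $K+1$'' to literally negate the quantifier.
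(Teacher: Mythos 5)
Your construction and the mechanism driving it are the same as the paper's: fix $\phi=\cN(0,1)$, vary $\mu$ in $(1-w)\cN(0,1)+w\cN(\mu,1)$, and use the identity $\TV(f_\mu,f_{\mu'})=w\cdot\TV(\cN(\mu,1),\cN(\mu',1))$ to obtain a family that accumulates near $\phi$ while remaining pairwise separated. The paper packages the separated family as an infinite packing inside $\ball{\gamma}{\phi}{\twomix(\UnitGauss)}$ and appeals to \Proposition{pack-n-cover}; you count the cover elements directly. The two are equivalent, and your route is, if anything, a bit more transparent.

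Two things are worth fixing. First, the accumulation point $\phi$ need not lie in the cover $\mathcal{C}$, but $\gamma$-local-smallness of $\mathcal{C}$ takes the supremum over centers \emph{in $\mathcal{C}$}; you need to recenter at some $c^*\in\mathcal{C}$ close to $\phi$ (which exists by the cover property), absorbing one more cover-radius into the locality radius. Without this step, your final sentence does not literally negate the definition. Second, and more substantively, you decouple the mixture weight $w$ from $\gamma$ and require $w>4\gamma$, so the cover points only land within radius $w+\gamma>5\gamma$ of $\phi$, and what you actually establish is ``not $(w+\gamma)$-locally small.'' That suffices for the informal statement you were given, but it is quantitatively weaker than the formal \Proposition{impossibility}, and the reason the paper's version is tighter is precisely that it sets $w=\gamma$: this keeps the entire accumulating family inside $\ball{\gamma}{\phi}{\twomix(\UnitGauss)}$, so the locality radius can be taken comparable to the cover radius. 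Calling this bookkeeping undersells it; $w=\gamma$ is the one design choice that makes the radii line up, and your constraint $w>4\gamma$ gives it away.
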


\subsection{Techniques}
To prove our result, we devise a novel technique which reduces the problem of privately learning mixture distributions to the problem of private list-decodable learning of distributions.
The framework of list-decodable learning was introduced by Balcan, Blum, and Vempala \cite{BalcanBS08} and Balcan, R\"{o}glin, and Teng \cite{BalcanRT09} in the context of clustering but has since been studied extensively in the literature in a number of different contexts \cite{CharikarSV17,DiakonikolasKS18b,KarmalkarKK19,CherapanamjeriMY20,DiakonikolasKK20,RaghavendraY20, RaghavendraY20b,BakshiK21}.
The problem of list-decodable learning of distributions is as follows.
There is a distribution $f$ of interest that we are aiming to learn.
However, we do not receive samples from $f$; rather we receive samples from a \emph{corrupted} distribution $g = (1-\gamma) f + \gamma h$ where $h$ is some arbitrary distribution.
In our application, $\gamma$ will be quite close to $1$.
In other words, \emph{most} of the samples are corrupted.
The goal in list-decodable learning is to output a \emph{short} list of distributions $f_1, \ldots, f_m$ with the requirement that $f$ is close to at least one of the $f_i$'s.
The formal definition of list-decodable learning can be found in \Definition{list-decodable}.
Informally, the reduction can be summarized by the following theorem which is formalized in Section~\ref{sec:reduction}.

\begin{theorem}[Informal]
    If a class of distributions $\cF$ is privately list-decodable then mixtures of distributions from $\cF$ are privately learnable.
\end{theorem}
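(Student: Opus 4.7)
The plan is to reduce private learning of a mixture $f = \sum_{i=1}^{k} w_i f_i$ with $f_i \in \cF$ to two subroutines: the assumed private list-decoding algorithm for $\cF$ and a generic private hypothesis-selection procedure. The key observation is that for any index $i$ with $w_i \geq \gamma$ we may write $f = w_i f_i + (1-w_i) h_i$, where $h_i$ is the mixture of the remaining components; this is exactly the corruption model of list-decodable learning with inlier fraction $w_i$. Choosing a threshold $\gamma = \Theta(\alpha/k)$ ensures that every component with weight above $\gamma$ is in principle recoverable, while the combined mass of the discarded light components is $O(\alpha)$, which can be absorbed into the target accuracy.

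First I would split the sample into two disjoint halves. Feed the first half to the assumed private list-decoder for $\cF$ with inlier parameter $\gamma$; by assumption this is $(\eps/2, \delta/2)$-DP and returns a finite list $L$ of distributions such that, with high probability, for every $i$ with $w_i \geq \gamma$ there exists $\wtilde f_i \in L$ with $\TV(f_i, \wtilde f_i) \leq \alpha/(4k)$. A subtle but important point is that the list-decoding guarantee of \Definition{list-decodable} is phrased for a single designated target, yet the output $L$ is a fixed (random) function of the input only; consequently, the event that $L$ contains a good approximation to the target holds simultaneously for every admissible target, and in particular for every heavy mixture component at once.

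Next I would form the finite hypothesis class
\[
    \mathcal{M} = \Bigl\{ \sum_{j=1}^{k} \wtilde w_j g_j \;:\; g_j \in L,\; \wtilde w \in W \Bigr\},
\]
where $W$ is an $(\alpha/k)$-net of the $k$-dimensional probability simplex, so that $|\mathcal{M}| \leq |L|^{k} \cdot (O(k/\alpha))^{k}$. By the previous step, $\mathcal{M}$ contains a mixture that is $O(\alpha)$-close to $f$ in total variation distance: replace each heavy component by its list representative, redirect the weight of each light component to an arbitrary heavy representative, and round the weight vector on $W$. I would then run a standard $(\eps/2, \delta/2)$-DP hypothesis-selection routine on the second half of the samples against $\mathcal{M}$; such routines return a distribution within a constant factor of the best candidate using $\wtilde O(\log|\mathcal{M}|/\alpha^{2} + \log|\mathcal{M}|/\alpha\eps)$ samples. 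Privacy of the overall procedure follows by parallel composition, since each datum participates in only one of the two stages.

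I expect the main technical obstacle to be the correctness of the list-decoding step in the mixture setting, namely formally verifying that the single-target guarantee of a private list-decoder transfers, without additional loss, to the simultaneous recovery of every component whose weight exceeds the threshold $\gamma$. A secondary difficulty is the careful bookkeeping of parameters --- the choice of $\gamma$, the resolution of the simplex net, the list size as a function of $\gamma$, and the failure probabilities of both subroutines --- so that the approximation error and sample complexity compose into the clean bound claimed in \Theorem{main-informal}.
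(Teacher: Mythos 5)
Your proposal mirrors the paper's reduction (Algorithm~\ref{alg:reduction} and \Theorem{reduction}) almost step for step: view the mixture $g = \sum_i w_i f_i$ as a $(1-w_i)$-corrupted version of each heavy component $f_i$, run a private list-decoder on one half of the data, take the Cartesian product of the returned list with a net of the probability simplex to build a finite hypothesis class, and finish with private hypothesis selection on the other half. The high-level plan and both subroutines are exactly right.

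The one genuine gap is in your ``subtle but important point'' that the single-target list-decoding guarantee transfers ``without additional loss'' to simultaneous recovery of every heavy component. That inference is not correct. The guarantee in \Definition{list-decodable} is per target: for each fixed $f_i$ and each $g' \in \cH_\gamma(f_i)$, with probability at least $1-\beta'$ over the samples and the algorithm's internal coins, $\TV(f_i, L) \leq \alpha'$. For different heavy indices $i$ these are different failure events, and the fact that $L$ is a fixed (random) function of the input does not make them coincide --- there can be inputs on which the decoder recovers $f_1$ but misses $f_2$. You therefore need a union bound over the at most $k$ heavy components, which is exactly why the paper invokes the list decoder with failure probability $\beta/(2k)$ (see \Claim{reduction1}); this is a small but real loss (a $\log k$ factor in the list decoder's sample complexity), not zero. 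A secondary, non-fatal inefficiency: you ask for per-component accuracy $\alpha/(4k)$, which is stronger than necessary. Because the TV error of the reconstituted mixture aggregates as a \emph{weighted average} of per-component errors plus the weight-vector error, per-component accuracy $\Theta(\alpha)$ already suffices (the paper uses $\alpha/18$; see \Proposition{TV-mixtures}). Your stronger requirement would inflate the list size $L$ by a $\mathrm{poly}(k)$ factor, though since $L$ only enters through $\log |\mathcal{M}|$, this costs just additional logarithmic terms.
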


Roughly speaking, the reduction from learning mixtures of distribution to list-decodable learning works as follows.
Suppose that there is an unknown distribution $f$ which is a mixture of $k$ distributions $f_1, \ldots, f_k$.
A list-decodable learner would then receive samples from $f$ as input and output a short list of distributions $\what{\cF}$ so that for every $f_i$ there is some element in $\what{\cF}$ that is close to $f_i$.
In particular, some mixture of distributions from $\what{\cF}$ must be close to the true distribution $f$.
Since $\what{\cF}$ is a small finite set, the set of possible mixtures must also be relatively small.
This last observation allows us to make use of private hypothesis selection which selects a good hypothesis from a small set of candidate hypotheses \cite{BunKSW19, Aden-AliAK21}.
In Section~\ref{sec:reduction}, we formally describe the aforementioned reduction.
We note that a similar connection between list-decodable learning and learning mixture distributions was also used by Diakonikolas et al.~\cite{DiakonikolasKS18b}.
However, our reduction is focused on the private setting.

The reduction shows that to privately learn mixtures, it is sufficient to design differentially private list-decodable learning algorithms that work for (corrupted versions of) the individual mixture components. To devise list-decodable learners for (corrupted) univariate Gaussian, we utilize ``stability-based'' histograms~\cite{KorolovaKMN09,BunNS16} that satisfy approximate differential privacy.

To design a list-decodable learner for corrupted univariate Gaussians, we follow a three-step approach that is inspired by the seminal work of Karwa and Vadhan \cite{KarwaV18}.
First, we use a histogram to output a list of variances one of which approximates the true variance of the Gaussian.
As a second step, we would like to output a list of means which approximate the true mean of the Gaussian.
This can be done using histograms provided that we roughly know the variance of the Gaussian.
Since we have candidate variances from the first step, we can use a sequence of histograms where
the width of the bins of each of the histograms is determined by the candidate variances from the first step.
As a last step, using the candidate variances and means from the first two steps, we are able to construct a small set of distributions one of which approximates the true Gaussian to within accuracy $\alpha$.
In the axis-aligned Gaussians setting, we use our solution for the univariate case as a subroutine on each dimension separately. 
Now that we have a list-decodable learner for axis-aligned Gaussians, we use our reduction to obtain a private learning algorithm for learning mixtures of axis-aligned Gaussians.

\subsection{Open Problems}
Many interesting open problems remain for privately learning mixtures of Gaussians. The simplest problem is to understand the exact sample complexity (up to constants) for learning mixtures of univariate Gaussians under approximate differential privacy. We make the following conjecture based on known bounds for privately learning a single Gaussian~\cite{KarwaV18}.
\begin{conjecture}[Informal]
The sample complexity of learning a mixture of $k$, univariate Gaussians to within total variation distance $\alpha$ with high probability under $(\eps,\delta)$-DP is
\[\Theta\left(\frac{k}{\alpha^2}+\frac{k}{\alpha\eps}+\frac{\log(1/\delta)}{\eps}\right). \]
\end{conjecture}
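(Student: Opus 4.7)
The plan is to prove the conjecture as matching upper and lower bounds, with essentially all of the difficulty concentrated on the upper bound.

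\textbf{Upper bound.} Specializing \Theorem{main-informal} to $d=1$ yields only $\wtilde{O}(k^{2}\log^{3/2}(1/\delta)/\alpha^{2}\eps)$, so the task is to save one factor of $k$ \emph{and} to turn one factor of $1/\alpha$ into $1/\eps$. I would attack the two sources of slack in our reduction separately. First, the univariate list-decoder is currently invoked with corruption level $\gamma\approx 1-1/k$ and must output a list large enough to cover each component's $(\mu_{i},\sigma_{i})$ individually; a better route is to build a single ``simultaneous'' stability-histogram pass that emits all $k$ candidate variances, then a single nested pass that emits all $k$ candidate means on a variance-adaptive grid, so that the $\log(1/\delta)/\eps$ Karwa--Vadhan-style overhead is paid once rather than $k$ times. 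Second, the private hypothesis-selection step is currently fed a candidate set of mixtures whose cardinality is $\mathrm{poly}(k)$; replacing the crude union bound with a mixture-aware density-estimation selector whose sample cost scales with $\log|\what{\cF}|$ would shave the remaining factor of $k$ and, for the same reason, permit the private term to scale as $k/\alpha\eps$ rather than $k^{2}/\alpha^{2}\eps$.

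\textbf{Lower bound.} The three terms are independent and I would prove each separately. The $\Omega(k/\alpha^{2})$ term is inherited verbatim from the non-private lower bound for mixtures of univariate Gaussians and needs no new idea. For $\Omega(k/\alpha\eps)$, the plan is to build a $2^{\Omega(k)}$-sized packing of $k$-component univariate Gaussian mixtures whose pairwise total variation distance is $\Theta(\alpha)$: place $k$ well-separated ``slots'' at unit spacing, each with a unit-variance component, and toggle each slot's mean by $\pm\Theta(\alpha/k)$ to form a hypercube-like family; feeding this packing into the standard differentially private packing / secrecy-of-the-sample argument (as in Kamath--Li--Singhal--Ullman) yields the term. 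The $\Omega(\log(1/\delta)/\eps)$ term already holds for $k=1$ via a group-privacy fingerprinting argument on two shifted Gaussians and lifts trivially to $k\ge 1$.

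\textbf{Main obstacle.} The hardest part is the coupled improvement from $k^{2}/\alpha^{2}\eps$ to $k/\alpha\eps$ in the upper bound. Our reduction deliberately decouples list-decoding from mixture selection, and each black-box step is individually tight; any single-shot scheme that discovers all $k$ components at once must simultaneously confront the fact that each component lives at its own unknown scale, so no global bin width is simultaneously coarse enough to give privacy amplification on a stable bin and fine enough to localise the smallest-variance component. A direct lift of the identity-covariance argument from \Theorem{identity-informal} therefore fails, and I suspect the right tool is a genuinely scale-adaptive private histogram (or a locally-small-cover-style argument that uses the one-dimensional ordering of means to dodge list-decoding entirely); designing such a primitive, and proving its privacy guarantee composes only logarithmically across the $k$ components, is where I expect the real work to lie.
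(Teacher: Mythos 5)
The statement you are addressing is labelled a \emph{conjecture} in the paper's Open Problems subsection, and the paper offers no proof of it --- it is stated purely by analogy with the single-Gaussian bounds of Karwa and Vadhan. There is therefore no paper proof to compare your attempt against. Your own writeup is candid that it is a research plan rather than a proof: you correctly identify the decisive missing ingredient on the upper-bound side (a scale-adaptive private histogram, or some substitute, that localizes all $k$ components in a single pass with only logarithmic privacy-composition overhead, rather than paying the Karwa--Vadhan overhead per component and the $\log|\what{\cF}|$ overhead in hypothesis selection), and you state that designing and analysing such a primitive is where the real work lies. Until that object exists, the upper bound side of the conjecture is genuinely open, which is exactly the status the paper assigns it.

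On the lower-bound sketch, the three-term decomposition is the right shape, but the parameters in the $\Omega(k/\alpha\eps)$ construction do not come out as written. If each of the $k$ slots carries weight $1/k$ and a unit-variance component, and you perturb a slot's mean by $\pm\tau$, then a single flipped slot contributes $\Theta(\tau/k)$ to the mixture's total variation distance, so two random hypercube codewords (which disagree in $\Theta(k)$ slots) sit at TV distance $\Theta(\tau)$; to land at $\Theta(\alpha)$ you therefore want $\tau = \Theta(\alpha)$, not $\Theta(\alpha/k)$ as in your sketch. You also need the slot centres to be separated by $\omega(1)$ standard deviations, not unit spacing, so that the per-slot TV contributions add rather than wash out through overlap. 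These are parameter corrections rather than conceptual objections, but they illustrate that even the ``routine'' lower-bound half has not actually been carried out here. In short: your diagnosis of where the difficulty lies agrees with the paper's, but neither you nor the paper proves the statement, and the conjecture remains open.
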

Another wide open question is whether it is even possible to privately learn mixtures of high-dimensional Gaussians when each Gaussian can have an arbitrary covariance matrix. We believe it is possible, and make the following conjecture, again based on known results for privately learning a single high-dimensional Gaussian with no assumptions on the parameters~\cite{BunKSW19,Aden-AliAK21}.
\begin{conjecture}[Informal]
The sample complexity of learning a mixture of $k$, $d$-dimensional Gaussians to with total variation distance $\alpha$ with high probability under $(\eps,\delta)$-DP is
\[\Theta\left(\frac{kd^{2}}{\alpha^2}+\frac{kd^{2}}{\alpha\eps}+\frac{\log(1/\delta)}{\eps}\right). \]
\end{conjecture}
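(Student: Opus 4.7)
The plan is to attack the upper and lower bounds in the conjecture separately, leveraging the framework of this paper for the upper bound and standard differential privacy machinery for the lower bound.

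For the upper bound of $\wtilde{O}(kd^2/\alpha^2 + kd^2/\alpha\eps + \log(1/\delta)/\eps)$ samples, I would follow the paper's reduction from private mixture learning to private list-decodable learning. Concretely, I would design a private list-decodable learner for general (not merely axis-aligned) $d$-dimensional Gaussians and then invoke the reduction theorem of Section~\ref{sec:reduction}. The list-decodable learner would follow a blueprint analogous to the axis-aligned case but with two phases adapted to general covariances: first, output a short list of candidate covariance matrices $\Sigma_1, \dots, \Sigma_L$ such that one spectrally approximates the true $\Sigma$; second, for each $\Sigma_j$, whiten the data and output a short list of candidate means. Unlike the axis-aligned setting, the covariance step cannot be decomposed coordinatewise, so I would try to combine the locally small cover of~\cite{Aden-AliAK21} for mean-zero Gaussians with noisy stability-based selection: use the corrupted empirical second-moment matrix to privately identify the small set of cover elements that are plausible candidates for $\Sigma$.

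For the lower bound $\Omega(kd^2/\alpha^2 + kd^2/\alpha\eps + \log(1/\delta)/\eps)$, I would combine three ingredients. The $kd^2/\alpha^2$ term follows from the known non-private sample complexity of learning mixtures of $k$ Gaussians with arbitrary covariance. For the $kd^2/\alpha\eps$ term, I would embed $k$ ``disjoint'' copies of the single-Gaussian privacy lower bound into a single mixture: fix $k$ well-separated reference Gaussians and place hard packings in a neighborhood of each, so that distinguishing any two mixtures in the product packing requires resolving a single-Gaussian hard instance on roughly a $1/k$ fraction of the samples. The $\log(1/\delta)/\eps$ term follows from standard group-privacy or fingerprinting arguments as in~\cite{KarwaV18}.

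The main obstacle I would expect is achieving a linear, rather than quadratic, dependence on $k$ in the upper bound. The paper's reduction, applied naively, yields a candidate mixture cover of size $L^k$ where $L$ is the output size of the list-decodable learner; even with $L=\mathrm{poly}(k)$, private hypothesis selection then pays $\log(L^k) = k \log L$, which translates into a $k^2$ factor as in \Theorem{main-informal}. Matching the conjectured linear $k$ dependence seems to require either a sharper analysis of private hypothesis selection over the structured set of mixture candidates (exploiting that the $L^k$ mixtures are far from arbitrary), or an entirely different private approach that avoids enumerating mixture components — perhaps a direct private minimum-distance estimator over the parametric family, coupled with a stability argument tailored to mixtures. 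A secondary obstacle is private list-decoding of covariance matrices: even non-privately this is more delicate than list-decodable mean estimation, and imposing differential privacy while keeping the list short appears to demand new tools beyond those developed here.
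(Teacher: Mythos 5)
This statement is a \emph{conjecture} that the paper explicitly poses as an open problem in the ``Open Problems'' subsection; the paper offers no proof of it, so there is nothing to compare your attempt against. What you have written is a research roadmap, not a proof, and you have (correctly, it seems) recognized this by flagging the places where the argument breaks down rather than claiming to close them.

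Your roadmap is sensible and accurately locates the two barriers that make this conjecture genuinely open. First, the paper's own upper bound (\Theorem{main-informal}) has a $k^2$ dependence precisely for the reason you identify: the reduction of \Theorem{reduction} produces a candidate set of size roughly $L^k$, and private hypothesis selection then charges $\log(L^k) = k\log L$, which multiplies the per-component cost by an extra factor of $k$. Reducing this to the conjectured linear-in-$k$ bound would require either exploiting the structure of the candidate set to sharpen private hypothesis selection, or abandoning the explicit-enumeration approach altogether, exactly as you say. Second, extending the list-decodable learner to arbitrary covariance matrices is not a coordinatewise problem; the stable-histogram machinery that works dimension-by-dimension in the axis-aligned case has no obvious analogue for the $d(d+1)/2$-dimensional cone of positive-definite matrices, and combining the locally small cover of~\cite{Aden-AliAK21} with a private stability argument in the corrupted (list-decodable) regime would require new ideas that the paper does not supply. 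Your lower-bound sketch (embedding $k$ well-separated single-Gaussian hard instances and combining with fingerprinting for the $\log(1/\delta)/\eps$ term) is the natural heuristic, but the claim that each component must consume an $\Omega(1/k)$ fraction of the privacy budget is itself unproven, and making it rigorous is a separate open question. In short: you have not proven the conjecture, the paper has not either, and your diagnosis of why it is hard matches the paper's implicit reasons for leaving it open.
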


\subsection{Additional Related Work}\label{sec:intro-related}
Recently,~\cite{BunKSW19} showed how to learn spherical Gaussian mixtures where each Gaussian component has bounded mean under pure differential privacy. Acharya, Sun and Zhang~\cite{AcharyaSZ20} were able to obtain lower bounds in the same setting that nearly match the upper bounds of Bun, Kamath, Steinke and Wu~\cite{BunKSW19}.
Both~\cite{NissimRS07,KamathSSU19} consider differentially private learning of Gaussian mixtures, however their focus is on parameter estimation and therefore require additional assumptions such as separation or boundedness of the components.

There has been a flurry of activity on differentially private distribution learning and parameter estimation in recent years for many problem settings~\cite{NissimRS07, BunUV14, DiakonikolasHS15, SteinkeU17a, SteinkeU17b, DworkSSUV15, BunSU17, KarwaV18, KamathLSU19, CaiWZ19, BunKSW19, DuFMBG20, AcharyaSZ20, KamathSU20, BiswasDKU20, LiuKKO21}. There has also been a lot of work in the locally private setting~\cite{DuchiJW17, WangHWNXYLQ16, KairouzBR16, AcharyaSZ19, DuchiR18, DuchiR19, JosephKMW19, YeB18, GaboardiRS19}. Other work on differentially private estimation include~\cite{DworkL09, Smith11, BarberD14, AcharyaSZ18, BunS19, CanonneKMUZ19, ZhangKKW20}. For a more comprehensive review of differentially private statistics, see~\cite{KamathU20}. 

\section{Preliminaries}\label{sec:preliminaries}
For any $m \in \mathbb{N}$, $[m]$ denotes the set $\{1, 2,\dots, m\}$. Let $X \sim f$ denote a random variable $X$ sampled from the distribution $f$. Let $(X^{i})_{i=1}^m\sim f^m$ denote an i.i.d.\ random sample of size $m$ from distribution $f$. For a vector $x \in \bR^{d}$, we refer to the $i$th element of vector $x$ as $x_{i}$. For any $k\in\mathbb{N}$, we define the $k$-dimensional probability simplex to be $\Delta_{k} \coloneqq \{(w_{1},\dots,w_{k}) \in \bR^k_{\geq 0} : \textstyle\sum_{i=1}^{k}w_{i}=1\}$.
For a vector $\mu \in \bR^d$ and a positive semidefinite matrix $\Sigma$, we use $\cN(\mu, \Sigma)$ to denote the multivariate normal distribution with mean $\mu$ and covariance matrix $\Sigma$.

We define $\UnitGauss$ to be the class of univariate Gaussians and $\dGauss =  \{\cN\left(\mu,\Sigma\right) \, :\, \Sigma_{ij} = 0 \,\, \forall i \neq j \text{ and } \Sigma_{ii} > 0 \,\,\forall i\}$ to be the class of axis-aligned Gaussians.

\begin{definition}[$\alpha$-net]
    Let $(X, d)$ be a metric space. A set $N \subseteq X$ is an $\alpha$-net for $X$ under the metric $d$ if for all $x \in X$, there exists $y \in N$ such that $d(x,y) \leq \alpha$.
\end{definition}
\begin{restatable}{proposition}{alphaNet}
    \PropositionName{alphaNet}
    For any $\alpha \in (0,1]$ and $k \geq 2$, there exists an $\alpha$-net of $\Delta_k$ under the $\ell_\infty$-norm of size at most $(3/\alpha)^k$.
\end{restatable}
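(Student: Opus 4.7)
The plan is to construct the net explicitly as a rescaled integer lattice intersected with the simplex. Set $M = \lceil 1/\alpha \rceil$ and define
\[
    N \;=\; \Bigl\{ (a_1/M,\, a_2/M,\, \ldots,\, a_k/M) \,:\, a_i \in \bZ_{\geq 0},\ \textstyle\sum_{i=1}^k a_i = M \Bigr\}.
\]
Every element of $N$ has nonnegative coordinates summing to $1$, so $N \subseteq \Delta_k$ as required by the definition of an $\alpha$-net.

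The main step is the rounding argument showing $N$ is $\alpha$-dense. Given any $p = (p_1, \ldots, p_k) \in \Delta_k$, I would set $a_i = \lfloor M p_i \rfloor$ so that $a_i/M \le p_i < (a_i + 1)/M$, hence $p_i$ is within $1/M \le \alpha$ of both $a_i/M$ and $(a_i+1)/M$. The subtlety is that $\sum_i a_i$ need not equal $M$; however, since each $Mp_i - a_i < 1$, the residual $r := M - \sum_i a_i$ lies in $\{0, 1, \ldots, k-1\}$. I would then pick any subset $S \subseteq [k]$ with $|S| = r$ and define $q_i = (a_i+1)/M$ for $i \in S$ and $q_i = a_i/M$ otherwise. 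By construction $\sum_i q_i = 1$, so $q \in N$; and each coordinate of $q$ equals either $a_i/M$ or $(a_i+1)/M$, each of which is within $\alpha$ of $p_i$. So $\|p - q\|_\infty \le \alpha$.

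For the cardinality bound, I would note that an element of $N$ is determined by its first $k-1$ coordinates (since the last is forced by the sum-to-one constraint), and each of those first $k-1$ integer coordinates lies in $\{0, 1, \ldots, M\}$. This gives $|N| \le (M+1)^{k-1}$. Using $\alpha \in (0,1]$ so that $1/\alpha \ge 1$, I have $M+1 \le 1/\alpha + 2 \le 3/\alpha$, and hence $|N| \le (3/\alpha)^{k-1} \le (3/\alpha)^k$ as claimed.

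There is no real obstacle here; the only thing to watch is that the residual $r$ is bounded by $k-1$ (not $k$), which is exactly what allows a legal subset $S$ to be chosen without any $a_i$ needing to be incremented more than once, keeping the per-coordinate error below $1/M \le \alpha$.
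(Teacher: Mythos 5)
Your proof is correct and follows essentially the same approach as the paper's: both construct the net as the rescaled integer lattice $\{a/M : a \in \bZ_{\geq 0}^k, \sum_i a_i = M\}$ with $M = \lceil 1/\alpha \rceil$, round down each coordinate, and distribute the integer residual by incrementing a subset of coordinates. The only cosmetic differences are that you pick an arbitrary subset of size $r$ rather than the first $\ell$ indices, and you bound the cardinality by the slightly tighter $(M+1)^{k-1}$ rather than $(M+1)^k$.
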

For completeness, a simple proof of \Proposition{alphaNet} can be found in Appendix~\ref{app:useful}.

\begin{definition}[$\kmix(\mathcal{F})$]
	Let $\mathcal{F}$ be a class of probability distributions. Then the class of $k$-mixtures of $\mathcal{F}$, written \emph{$\kmix(\mathcal{F}$)}, is defined as 
	$$
	\kmix(\mathcal{F}) ~\coloneqq~ \{\: \textstyle\sum_{i=1}^{k} w_{i}f_{i} \::\:
	(w_1,\dots,w_k)\in \Delta_k ,\, f_1,\dots,f_k\in\mathcal F \:\}.
	$$
\end{definition}

\subsection{Distribution Learning}\label{sec:preliminaries-dist-learning}

A \emph{distribution learning method} is a (potentially randomized) algorithm that, given a sequence of i.i.d.\ samples from a distribution $f$, outputs a distribution $\widehat{f}$ as an estimate of $f$. The focus of this paper is on absolutely continuous probability distributions (distributions that have a density with respect to the Lebesgue measure), so we refer to a probability distribution and its probability density function interchangeably. The specific measure of ``closeness'' between distributions that we use is the \emph{total variation (TV) distance}.
\begin{definition}
  Let $g$ and $f$ be two probability distributions defined over $\mathcal{X}$ and let $\Omega$ be the Borel sigma-algebra on $\mathcal{X}$. The \emph{total variation distance} between $g$ and $f$ is defined as 
  $$\TV(g,f) = \sup_{S \in \Omega} |\Prob_g(S)-\Prob_f(S)| =  \frac{1}{2} \int_{x \in \mathcal{X}} |g(x) - f(x)|\mathrm{d}x = \frac{1}{2}\|g - f\|_1 \in [0,1]. $$
  where $\Prob_g(S)$ denotes the probability measure that $g$ assigns to $S$. Moreover, if $\cF$ is a set of distributions over a common domain, we define $\TV(g, \cF) = \inf_{f \in \cF} \TV(g, f)$.
\end{definition}
We now formally define a PAC learner.
\begin{definition}[PAC learner]
  We say Algorithm $\cA$ is a \emph{PAC-learner} for a class of distributions $\cF$ which uses $m(\alpha,\beta)$ samples, if for every $\alpha,\beta \in (0,1)$, every $f\in\cF$, and every $n\geq m(\alpha,\beta)$ the following holds: if the algorithm is given parameters $\alpha,\beta$ and a sequence of $n$ i.i.d.\ samples from $f$ as inputs, then it outputs an approximation $\widehat{f}$ such that $\TV(f,\widehat{f})\leq \alpha$ with probability at least $1-\beta$.\footnote{The probability is over $m(\alpha, \beta)$ samples drawn from $f$ and the randomness of the algorithm.}
\end{definition}

We work with a standard additive corruption model often studied in the list-decodable setting that is inspired by the work of Huber~\cite{Huber64}. 
In this model, a sample is drawn from a distribution of interest with some probability, and with the remaining probability is drawn from an arbitrary distribution.
Our list-decodable learners take samples from these ``corrupted'' distributions as input.
\begin{definition}[$\gamma$-corrupted distributions]
\DefinitionName{CorruptedDist}
Fix some distribution $f$ and let $\gamma \in (0,1)$. We define a $\gamma$-\emph{corrupted distribution} of $f$ as as any distribution $g$ such that $$g = (1-\gamma)f+\gamma h,$$
for an arbitrary distribution $h$. We define $\cH_{\gamma}(f)$ to be the set of all $\gamma$-corrupted distributions of $f$ .
\end{definition}
\begin{remark}\label{rem:corruption}
Observe that $\cH_{\gamma}(f)$ is monotone increasing in $\gamma$, i.e.~$\cH_{\gamma}(f) \subset \cH_{\gamma'}(f)$ for all $\gamma' \in (\gamma, 1)$.
To see this, note that if $g = (1-\gamma) f + \gamma h$ then we can also rewrite $$g = (1-\gamma') f + (\gamma'-\gamma) f + \gamma h  = (1-\gamma') f + \gamma' \left(\frac{(\gamma'-\gamma)}{\gamma'} f + \frac{\gamma}{\gamma'} h\right) = (1-\gamma') f + \gamma' h',$$
where $h' = \frac{\gamma'-\gamma}{\gamma} f + \frac{\gamma}{\gamma'} h$.
Hence, $g \in C_{\gamma'}(f)$.
\end{remark}
We note that in this work, we will most often deal with $\gamma$-corrupted distribution where $\gamma$ is quite close to $1$; in other words, the vast majority of the samples are corrupted.

Now we define list-decodable learning.
In this setting, the goal is to learn a distribution $f$ given samples from a $\gamma$-corrupted distribution $g$ of $f$.
Since $\gamma$ is close to $1$, instead of finding a single distribution $\widehat{f}$ that approximates $f$, our goal is to output a list of distributions, one of which is accurate.
This turns out to be a useful primitive to design algorithms for learning mixture distributions.

\begin{definition}[list-decodable learner]\DefinitionName{list-decodable}
We say algorithm $\cA_{\textsc{List}}$ is an $L$-\emph{list-decodable learner} for a class of distributions $\mathcal{F}$ using $m_{\textsc{List}}(\alpha, \beta, \gamma)$ samples if for every $\alpha,\beta,\gamma \in (0,1)$, $n \geq m_{\emph{\text{List}}}(\alpha, \beta, \gamma)$, $f\in \mathcal{F}$, and $g\in \cH_{\gamma}(f)$, the following holds: given parameters $\alpha, \beta, \gamma$ and a sequence of $n$ i.i.d.\ samples from $g$ as inputs, $\cA_{\textsc{List}}$ outputs a set of distributions $\widetilde{\cF}$ with $|\widetilde{\cF}| \leq L$ such that with probability no less than $1-\beta$ we have $\TV(f, \widetilde{\cF}) \leq \alpha$.
\end{definition}

\subsection{Differential Privacy}
Let $X^*=\cup_{i=1}^{\infty} X^i$ be the set of all datasets of arbitrary size over a domain set $X$. 
We say two datasets $D,D' \in X^*$ are neighbours if $D$ and $D'$ differ by at most one data point.
Informally, an algorithm is differentially private if its output on neighbouring databases are similar. Formally, differential privacy (DP)\footnote{We will use the acronym DP to refer to both the terms ``differential privacy'' and ``differentially private''. Which term we are using will be clear from the specific sentence.} has the following definition.
\begin{definition}[\cite{DworkMNS06, DworkKMMN06}]\label{def:DP}
  A randomized algorithm $T : X^* \rightarrow \cY$ is
  \emph{$(\eps, \delta)$-differentially private} if for all $n\geq 1$,
  for all neighbouring datasets $D,D'\in X^n$, and for all measurable
  subsets $S\subseteq \cY$,
  $$\Pr\left[T(D)\in S\right]\leq e^\eps \Pr[T(D')\in S] + \delta\,. $$
  If $\delta = 0$, we say that $T$ is $\eps$-differentially private.
\end{definition}
We refer to $\varepsilon$-DP as \emph{pure} DP, and $(\varepsilon, \delta)$-DP for $\delta > 0$ as \emph{approximate} DP.
We make use of the following property of differentially private algorithms which asserts that adaptively composing differentially private algorithms remains differentially private.
By adaptive composition, we mean that we run a sequence of algorithms $M_1(D), \ldots, M_T(D)$ where the choice of algorithm $M_t$ may depend on the outputs of $M_1(D), \ldots, M_{t-1}(D)$.
\begin{lemma}[Composition of DP~\cite{DworkMNS06,DworkRV10}]\LemmaName{composition}
    If $M$ is an adaptive composition of differentially
    private algorithms $M_1,\dots,M_T$ then the following two statements hold:
    \begin{enumerate}
        \item If $M_1,\dots,M_T$ are $(\eps_1,\delta_1),\dots,(\eps_T,\delta_T)$-differentially private, then $M$ is $(\eps,\delta)$-differentially private for 
    $$\eps = \textstyle\sum_{t=1}^{T} \eps_{t} \quad \text{and} \quad \delta = \textstyle\sum_{t=1}^{T} \delta_{t}. $$
        \item If $M_1,\dots,M_T$ are  $(\eps_0,\delta_1),\dots,(\eps_0,\delta_T)$-differentially private for some $\eps_0 \leq 1$, then for any $\delta_{0} > 0$, $M$ is $(\eps,\delta)$-differentially private for 
    $$\eps = \eps_{0}\sqrt{6T\log(1/\delta_{0})} \quad \text{and} \quad \delta=\delta_{0}+\textstyle\sum_{t=1}^{T} \delta_t. $$
    \end{enumerate}
\end{lemma}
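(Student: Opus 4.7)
The plan is to prove the two parts separately, with Part 1 being a direct induction and Part 2 requiring the privacy-loss random variable (PLRV) machinery.

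For Part 1 (basic composition), I would proceed by induction on $T$. The base case $T=1$ is definitional. For the inductive step, fix neighbouring datasets $D, D'$ and a measurable set $S \subseteq \cY$. Write the output of $M$ as a pair $(Y_{1:T-1}, Y_T)$ where $Y_{1:T-1}$ is the transcript of the first $T-1$ mechanisms and $Y_T$ is the output of $M_T$ (which may depend adaptively on $Y_{1:T-1}$). Conditioning on $Y_{1:T-1}=y$, the mechanism $M_T$ is $(\eps_T, \delta_T)$-DP, so for each fixed $y$ the conditional probability that $Y_T$ lands in the corresponding slice $S_y$ satisfies the DP inequality. Integrating over $y$ against the marginal of $Y_{1:T-1}$ under $D$ versus $D'$, and invoking the inductive hypothesis that the marginal is $(\sum_{t<T}\eps_t, \sum_{t<T}\delta_t)$-DP, gives the claim. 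The only care needed is the routine step of splitting the analysis according to a ``good event'' on which the density ratio is bounded; this pushes the $\delta$-slacks additively.

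For Part 2 (advanced composition), I would use the PLRV approach of Dwork--Rothblum--Vadhan. Define for each $t$ and each realized transcript $y_{1:t-1}$ the conditional privacy-loss variable $L_t(y) = \log(\Pr[M_t(D)=y \mid y_{1:t-1}]/\Pr[M_t(D')=y \mid y_{1:t-1}])$. The first step is a standard lemma: an $(\eps_0,\delta_t)$-DP mechanism can be written as a convex combination of an $\eps_0$-DP mechanism and a ``bad'' event of mass $\delta_t$, so up to absorbing $\sum_t \delta_t$ into the final $\delta$, we may pretend each $M_t$ is pure $\eps_0$-DP. For a pure $\eps_0$-DP mechanism, $L_t \in [-\eps_0, \eps_0]$ almost surely, and a short calculation shows $\mathbb{E}[L_t \mid y_{1:t-1}] \leq \eps_0(e^{\eps_0}-1) \leq 2\eps_0^2$ for $\eps_0 \leq 1$. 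Then $\sum_{t=1}^T (L_t - \mathbb{E}[L_t \mid \cdot])$ is a martingale with bounded increments $2\eps_0$, so Azuma--Hoeffding gives that $\sum_{t=1}^T L_t \leq 2T\eps_0^2 + \eps_0\sqrt{8T\log(1/\delta_0)}$ except with probability $\delta_0$. The standard ``tails-to-DP'' lemma then converts a high-probability bound on the total privacy loss into an $(\eps,\delta_0)$-DP guarantee with $\eps$ equal to this tail bound, and tuning constants/collecting the lower-order $2T\eps_0^2$ term into the leading $\eps_0\sqrt{6T\log(1/\delta_0)}$ expression (which is the looser form of the cleaner DRV bound, and the reason the constant $6$ rather than $8$ appears).

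The main obstacle is Part 2, and within it the key technical step is establishing the mean bound $\mathbb{E}[L_t \mid y_{1:t-1}] \leq O(\eps_0^2)$ for pure $\eps_0$-DP mechanisms (the so-called KL-vs.-max-divergence gap), and then correctly stitching this into a martingale concentration argument under the adaptive choice of $M_t$. The adaptivity means one must be careful that each $L_t$ is measurable with respect to the filtration generated by $y_{1:t-1}$ and that the bounds on its conditional mean and range hold pointwise in $y_{1:t-1}$; both hold because the DP property of $M_t$ holds for every fixed choice of the earlier transcript. Once this is set up, the conversion from the tail bound on $\sum_t L_t$ back to an $(\eps,\delta)$-DP statement for the full composition is the standard argument that bounded privacy loss with high probability implies approximate DP, and requires no new ideas.
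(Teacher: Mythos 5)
The paper does not prove \Lemma{composition}; it is cited from \cite{DworkMNS06,DworkRV10}, so there is no in-paper argument to compare against. Your sketch is the standard route to both statements and is conceptually sound: induction plus conditioning on the transcript for basic composition, and the privacy-loss random variable with a martingale/Azuma argument for advanced composition.

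Two points worth flagging. First, for Part 1, the ``condition on $Y_{1:T-1}$ and integrate'' step does not literally go through with the raw $(\eps,\delta)$ inequality, because the $\delta$-slack in the definition is an event-level slack, not a pointwise density-ratio bound; the honest version routes through the characterization of $(\eps,\delta)$-DP as ``there exists an event of mass $\geq 1-\delta$ on which the pointwise log-ratio is at most $\eps$'' (the Kasiviswanathan--Smith / DRV lemma), which is exactly what makes your ``good event'' remark precise, and the same decomposition is what you invoke again at the start of Part 2. It would be worth stating this as an explicit lemma rather than a parenthetical. Second, your Azuma computation yields a bound of the form $\sum_t L_t \leq 2T\eps_0^2 + \eps_0\sqrt{8T\log(1/\delta_0)}$, and it is not true that this is always dominated by $\eps_0\sqrt{6T\log(1/\delta_0)}$: folding the $2T\eps_0^2$ drift term into the square-root term requires an additional assumption of the form $\eps_0\sqrt{T}\lesssim\sqrt{\log(1/\delta_0)}$, or a tighter version of the mean bound and concentration inequality than the ones you quote. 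The $\sqrt{6}$ form stated in the lemma is a specific simplification from the literature; as written, your constants do not reproduce it, so either carry the additive $O(T\eps_0^2)$ term explicitly or record the regime in which it can be absorbed.
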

The first statement in \Lemma{composition} is often referred to as \emph{basic} composition and the second statement is often referred to as \emph{advanced} composition.
We also make use of the fact that post-processing the output of a differentially private algorithm does not impact privacy.
\begin{lemma}[Post Processing]\LemmaName{post-processing}
    If $M:\mathcal{X}^n \rightarrow \mathcal{Y}$ is
    $(\eps,\delta)$-differentially private, and $P:\mathcal{Y} \rightarrow \mathcal{Z}$
    is any randomized function, then the algorithm
    $P \circ M$ is $(\eps,\delta)$-differentially private.
\end{lemma}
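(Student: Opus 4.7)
The plan is to show that post-processing cannot amplify the distinguishing advantage between neighboring datasets beyond what $M$ already allows. First I would reduce to the case where $P$ is deterministic. Any randomized $P: \mathcal{Y} \to \mathcal{Z}$ can be written as $P(y) = \widetilde{P}(y, R)$ where $R$ is a random string drawn from some distribution over an auxiliary space $\mathcal{R}$, independent of everything else, and $\widetilde{P}(\cdot, r)$ is deterministic for every fixed $r \in \mathcal{R}$. If I can show the deterministic case, then the randomized case follows by conditioning on $R = r$, applying the deterministic bound, and integrating over the distribution of $R$ (which is independent of $D$ and $D'$), since a convex combination of inequalities of the form $a \leq e^{\eps} b + \delta$ yields the same inequality.

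Second, I would handle the deterministic case directly from the definition. Fix neighboring datasets $D, D' \in \mathcal{X}^n$ and a measurable subset $S \subseteq \mathcal{Z}$. Let $T = P^{-1}(S) = \{y \in \mathcal{Y} : P(y) \in S\}$, which is a measurable subset of $\mathcal{Y}$ because $P$ is a (measurable) deterministic map. Then
\[
    \Pr[P(M(D)) \in S] \;=\; \Pr[M(D) \in T] \;\leq\; e^{\eps} \Pr[M(D') \in T] + \delta \;=\; e^{\eps} \Pr[P(M(D')) \in S] + \delta,
\]
where the middle inequality is precisely the $(\eps, \delta)$-DP guarantee of $M$ applied to the event $T$. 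Since $S$ was an arbitrary measurable subset of $\mathcal{Z}$, this establishes that $P \circ M$ is $(\eps, \delta)$-differentially private in the deterministic case, and combined with the first step, in general.

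There is no real obstacle here; the only mild subtlety is the measurability of $T = P^{-1}(S)$ and the justification for exchanging the integration over the randomness of $P$ with the probability bound. Both are standard: one assumes the usual measure-theoretic setup in which $\widetilde{P}$ is jointly measurable and invokes Fubini's theorem to integrate the pointwise inequality $\Pr[\widetilde{P}(M(D), r) \in S] \leq e^{\eps} \Pr[\widetilde{P}(M(D'), r) \in S] + \delta$ against the law of $R$.
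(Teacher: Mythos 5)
Your proof is correct and is the standard textbook argument for post-processing; the paper itself states this lemma as a known fact without proof, so there is nothing to compare against. The two-step structure you use --- first reducing randomized $P$ to deterministic $P$ by conditioning on the auxiliary randomness and averaging the resulting inequalities, then pulling back the event $S$ through the deterministic map to a measurable set $T = P^{-1}(S) \subseteq \mathcal{Y}$ and invoking the $(\eps,\delta)$-DP guarantee of $M$ on $T$ --- is exactly how this is proved in the standard references (e.g., Dwork--Roth), and your remark about measurability of $\widetilde{P}$ and the use of Fubini to justify the integration is the right level of care.
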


We now define $(\eps,\delta)$-DP PAC learners and
$(\eps, \delta)$-DP $L$-List-Decodable learners.
\begin{definition}[$(\eps,\delta)$-DP PAC learner]
We say algorithm $\cA$ is an \emph{$(\eps,\delta)$-DP PAC learner} for a class of distributions $\cF$ that uses $m(\alpha,\beta,\eps,\delta)$ samples if:
\begin{enumerate}
    \item Algorithm $\cA$ is a PAC Learner for $\cF$ that uses $m(\alpha,\beta,\eps,\delta)$ samples.
    \item Algorithm $\cA$ satisfies $(\eps,\delta)$-DP.
\end{enumerate}
\end{definition}

\begin{definition}[$(\eps,\delta)$-DP list-decodable learner]
We say algorithm $\cA_{\emph{\textsc{List}}}$ is an $(\eps, \delta)$-DP $L$-list-decodable learner for a class of distributions $\mathcal{F}$ that uses $m_{\emph{\text{List}}}(\alpha,\beta,\gamma,\eps,\delta)$ samples if:
\begin{enumerate}
\item Algorithm $\cA_{\emph{\textsc{List}}}$ is a $L$-list-decodable learner for $\cF$ that uses $m_{\emph{\textsc{List}}}(\alpha,\beta,\gamma,\eps,\delta)$ samples.
\item Algorithm $\cA_{\emph{\textsc{List}}}$ satisfies $(\eps,\delta)$-DP.
\end{enumerate}
\end{definition}
\section{List-decodability and Learning Mixtures}\label{sec:reduction}
In this section, we describe our general technique which reduces the problem of private learning of mixture distributions to private list-decodable learning of distributions.
We show that if we have a differentially private list-decodable learner for a class of distributions then this can be transformed, in a black-box way, to a differentially private PAC learner for the class of \emph{mixtures} of such distributions.
In the next section, we describe private list-decodable learners for the class of Gaussians and
thereby obtain private algorithms for learning mixtures of Gaussians.

First, let us begin with some intuition in the \emph{non}-private setting.
Suppose that we have a distribution $g$ which can be written as $g = \sum_{i=1}^k \frac{1}{k} f_i$.
Then we can view $g$ as a $\frac{k-1}{k}$-corrupted distribution of $f_i$ for each $i \in [k]$.
Any list-decodable algorithm that receives samples from $g$ as input is very likely to output a candidate set $\what{\cF}$ which contains distributions that are close to $f_i$ for each $i \in [k]$.
Hence, if we let $\cK = \{ \sum_{i \in [k]} \frac{1}{k} \what{f}_i \,:\, \what{f}_i \in \what{\cF} \}$, then $g$ must be close to some distribution in $\cK$.
The only remaining task is to find a distribution in $\cK$ that is close to $g$; this final task is known as hypothesis selection and has a known solution \cite{DevroyeL01}.
We note that the above argument can be easily generalized to the setting where $g$ is a non-uniform mixture, i.e.~$g = \sum_{i=1}^k w_i f_i$ where $(w_1, \ldots, w_k) \in \Delta_k$.

The above establishes a blueprint that we can follow in order to obtain a private learner for mixture distributions.
In particular, we aim to come up with a private list-decoding algorithm which receives samples from $g$ to produce a set $\what{\cF}$.
Thereafter, one can construct a candidate set $\cK$ as mixtures of distributions from $\what{\cF}$.
Note that this step does not access the samples and therefore maintains privacy. In order to choose a good candidate from $\cK$, we make use of private hypothesis selection \cite{BunKSW19, Aden-AliAK21}.

We now formalize the above argument.
Algorithm~\ref{alg:reduction} shows how a list-decodable learner can be used as a subroutine for learning mixture distributions.
In the algorithm, we also make use of a subroutine for private hypothesis selection~\cite{BunKSW19,Aden-AliAK21}.
In hypothesis selection, an algorithm is given i.i.d.\ sample access to some unknown distribution as well as a list of distributions to pick from. The goal of the algorithm is to output a distribution in the list that is close to the unknown distribution.

\begin{lemma}[\cite{Aden-AliAK21},Theorem 27]\LemmaName{PHS}
Let $n\in\bN$. There exist an $(\eps/2)$-DP algorithm $\emph{\text{PHS}}(\eps,\alpha,\beta,\cF,D)$ with the following property: for every $\eps,\alpha,\beta\in (0,1)$, and every set of distributions $\cF = \{f_{1},\dots,f_{M}\}$, when \emph{PHS} is given $\eps,\alpha,\beta, \cF$, and a dataset $D$ of $n$ i.i.d.\ samples from an unknown (arbitrary) distribution $g$ as input, it outputs a distribution $f_{j} \in \cF$ such that \[ \TV\left(g,f_{j}\right) \leq 3\cdot \TV\left(g,\cF\right)+\alpha/2,\] with probability no less than $1-\beta/2$ so long as $$n = \Omega\left(\frac{\log(M/\beta)}{\alpha^{2}} + \frac{\log(M/\beta)}{\alpha\eps}\right).$$
\end{lemma}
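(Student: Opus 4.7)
The plan is to adapt the classical Scheff\'e tournament / minimum-distance estimator of Devroye and Lugosi to the private setting by replacing its deterministic argmin with the exponential mechanism. For every pair $f_i, f_j \in \cF$, the Scheff\'e set $S_{ij} = \{x : f_i(x) > f_j(x)\}$ satisfies $\Prob_{f_i}(S_{ij}) - \Prob_{f_j}(S_{ij}) = \TV(f_i, f_j)$, and comparing an empirical estimate $\hat{\Prob}_g(S_{ij})$ computed from $D$ to the exactly computable values $\Prob_{f_i}(S_{ij})$ and $\Prob_{f_j}(S_{ij})$ decides, up to an error governed by the estimation accuracy, which of $f_i, f_j$ is closer to $g$ in total variation.

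First I would control the Scheff\'e estimates uniformly over all $\binom{M}{2}$ pairs: by Hoeffding plus a union bound, with probability at least $1 - \beta/4$, the bound $|\hat{\Prob}_g(S_{ij}) - \Prob_g(S_{ij})| \leq \eta$ holds simultaneously for every $(i,j)$ provided $n = \Omega(\log(M/\beta)/\eta^2)$. Choosing $\eta$ on the order of $\alpha$ produces the non-private $\log(M/\beta)/\alpha^2$ summand in the sample complexity.

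To privatize the selection step, define the score
\[
    \tau(D, f_j) \;=\; -\max_{i \in [M]} \max\!\left(0,\ \left|\hat{\Prob}_g(S_{ij}) - \Prob_{f_j}(S_{ij})\right| - \left|\hat{\Prob}_g(S_{ij}) - \Prob_{f_i}(S_{ij})\right|\right),
\]
which is a privatized version of the Devroye--Lugosi ``evidence against $f_j$'' from its worst opponent, and run the exponential mechanism at privacy level $\eps/2$ over the candidate set $\cF$. Because each set $S_{ij}$ and each probability $\Prob_{f_i}(S_{ij})$ is data-independent, a single-sample swap perturbs every $\hat{\Prob}_g(S_{ij})$ by at most $1/n$, so the absolute-value terms and both levels of maximum move by at most $1/n$, giving sensitivity $\Delta(\tau) = O(1/n)$. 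The standard exponential-mechanism utility guarantee then ensures that, with probability $1-\beta/4$, the sampled $f_j$ has score within $O(\log(M/\beta)/(n\eps))$ of the optimum, which accounts for the $\log(M/\beta)/(\alpha\eps)$ summand.

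Finally, invoking the Devroye--Lugosi three-approximation inequality against a minimizer $f^\star$ of $\TV(g, \cdot)$ over $\cF$: any $f_j$ whose pairwise Scheff\'e comparisons against $f^\star$ are correct up to tolerance $\eta + O(\log(M/\beta)/(n\eps))$ satisfies $\TV(g, f_j) \leq 3\,\TV(g, \cF) + \alpha/2$, after allocating each of the two error terms a budget of $\alpha/8$. Privacy is $\eps/2$ because only the exponential mechanism accesses $D$ and the rest is post-processing via \Lemma{post-processing}. The main obstacle is the $O(1/n)$ sensitivity claim for the nested maximum: I expect this to go through cleanly because the outer $\max_i$ is $1$-Lipschitz in its arguments and each inner term is a Lipschitz function of the single empirical average $\hat{\Prob}_g(S_{ij})$, so neighbouring-dataset perturbations propagate additively through the $\binom{M}{2}$ pairs rather than accumulating, leaving the sensitivity independent of $M$.
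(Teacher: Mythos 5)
This lemma is imported from Aden-Ali, Ashtiani, and Kamath (cited as Theorem~27); the paper offers no proof of its own, so the comparison is against that external argument. Your reconstruction assembles the right ingredients: Scheff\'e sets, a Hoeffding bound unioned over the $O(M^2)$ pairs to control the empirical estimates within $\eta$, and an exponential-mechanism selection step whose score has $O(1/n)$ sensitivity, so that the utility loss is $O(\log(M/\beta)/(n\eps))$. Your sensitivity argument for the nested maxima is correct: a single swap shifts every $\what{\Prob}_g(S_{ij})$ by at most $1/n$, each inner difference by at most $2/n$, and both $\max(0,\cdot)$ and $\max_i$ are $1$-Lipschitz, so the sensitivity is $O(1/n)$ independently of $M$.

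There is one concrete gap, however: the margin-of-loss score you write down,
\[
\tau(D,f_j)=-\max_i\,\max\!\bigl(0,\;|\what{\Prob}_g(S_{ij})-\Prob_{f_j}(S_{ij})|-|\what{\Prob}_g(S_{ij})-\Prob_{f_i}(S_{ij})|\bigr),
\]
yields a $4$-approximation, not the stated $3$. The exponential mechanism's utility guarantee compares the output to the best available score, and for the minimizer $f^\star$ the most you can certify is $\tau(D,f^\star)\geq -(\eta+\TV(g,\cF))$; this $\TV(g,\cF)$ slack then gets paid a second time when you bound $|\what{\Prob}_g(S_{*\what f})-\Prob_{\what f}(S_{*\what f})|$ by passing through $|\what{\Prob}_g(S_{*\what f})-\Prob_{f^\star}(S_{*\what f})|\leq \eta+\TV(g,\cF)$, giving overall $\TV(g,\what f)\leq 4\TV(g,\cF)+O(\eta)+O(\log(M/\beta)/(n\eps))$. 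The fix is to use the standard minimum-distance-estimate score $\tau(D,f_j)=-\max_i|\what{\Prob}_g(S_{ij})-\Prob_{f_j}(S_{ij})|$, which has the same $O(1/n)$ sensitivity but for which the mechanism's output directly satisfies $|\what{\Prob}_g(S_{*\what f})-\Prob_{\what f}(S_{*\what f})|\leq \eta+\TV(g,\cF)+O(\log(M/\beta)/(n\eps))$; the triangle inequality through $\Prob_g$ and $f^\star$ then costs only one more $\TV(g,\cF)$, recovering the factor $3$. This is a repairable detail rather than a conceptual flaw, and with that substitution your blueprint is a faithful reconstruction of the cited result.
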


We now formally relate the two problems via the theorem below. 
\begin{algorithm}
\caption{\learnmix($\alpha,\beta,\eps,\delta,k,D$).}
\label{alg:reduction}
\DontPrintSemicolon
\SetKwInOut{Input}{Input}\SetKwInOut{Output}{Output}
\setstretch{1.35}
\Input{Parameters $\alpha,\beta,\eps,\delta>0$, $k \in \bN$ and dataset $D$ of $n$ i.i.d.\ samples generated $g$.}
\Output{mixture $\widehat{g} = \textstyle\sum_{i=1}^{n}\widehat{w}_{i}\widehat{f}_{i}$.}

Split $D$ into $D_{1},D_{2}$ where $|D_{1}| = n_{1}$, $|D_{2}| = n-n_{1}$ \tcp*{$n_{1} = m_{\text{List}}\left(\frac{\eps}{2},\delta,\frac{\alpha}{18},\frac{\beta}{2k},1-\frac{\alpha}{18k}\right)$.}

$\what{\cF} = \{\what{f}_{1},\dots,\what{f}_{L}\} \leftarrow \cA_{\textsc{List}}(\alpha/18,\beta/2k,1-\alpha/18k,\eps/2,\delta, D_{1})$ \tcp*{$\left(\frac{\eps}{2},\delta\right)$-DP $L$-list-decodable learner.}\label{alg:learn-mixture-line2}

Set $\widehat{\Delta}_k$ as $(18k/\alpha)$-net of $\Delta_k$ from \Proposition{alphaNet} \label{alg:learn-mixture-line3}

Set $\cK = \{ \sum_{i=1}^k \what{w}_i \what{f}_i \,:\, \what{w} \in \what{\Delta}_k, \what{f}_i \in \cK \}$\label{alg:learn-mixture-line4} 

$\widehat{g} \leftarrow \text{PHS}(\eps/2,\alpha,\beta/2,\cK,D_{2})$\label{alg:learn-mixture-line5}

\textbf{Return} $\widehat{g}$
\end{algorithm}

\begin{theorem}\TheoremName{reduction}
Let $k\in\bN$ and $\eps, \delta \in (0,1)$.
Suppose that $\cF$ is $(\eps/2, \delta)$-DP $L$-list-decodable using $m_{\emph{\textsc{List}}}$ samples.
Then Algorithm~\ref{alg:reduction} is an
$(\eps,\delta)$-DP PAC learner for $\kmix\left(\mathcal{F}\right)$ that uses \[m(\alpha,\beta,\eps,\delta) = m_{\emph{\text{List}}}\left(\frac{\alpha}{18},\frac{\beta}{2k},1-\frac{\alpha}{18k},\frac{\eps}{2},\delta\right)+ O\left(\frac{k\log(Lk/\alpha)+\log(1/\beta)}{\alpha^{2}}+\frac{k\log(Lk/\alpha)+\log(1/\beta)}{\alpha\eps}\right)\]
samples.
\end{theorem}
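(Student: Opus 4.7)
The plan is to handle privacy and accuracy separately, with the substantive work in the accuracy analysis.

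\emph{Privacy.} I would invoke the composition lemma (\Lemma{composition}) across lines~\ref{alg:learn-mixture-line2} and \ref{alg:learn-mixture-line5}. Line~\ref{alg:learn-mixture-line2} runs the $(\eps/2,\delta)$-DP list-decodable learner on $D_1$; lines~\ref{alg:learn-mixture-line3}--\ref{alg:learn-mixture-line4} are data-independent post-processing of $\what{\cF}$ and preserve privacy by \Lemma{post-processing}; and \Lemma{PHS} gives that invoking $\mathrm{PHS}$ in line~\ref{alg:learn-mixture-line5} with first parameter $\eps/2$ is $(\eps/4)$-DP. Basic composition then yields $(3\eps/4,\delta)$-DP, which is in particular $(\eps,\delta)$-DP.

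\emph{Accuracy.} Fix a target $g=\sum_{i=1}^{k} w_i f_i \in \kmix(\cF)$ and set $I=\{i \in [k] : w_i \ge \alpha/(18k)\}$. The key observation is that for every $i \in I$, writing $g = w_i f_i + (1-w_i)h_i$ with $h_i$ the renormalized mixture of the remaining components exhibits $g$ as a $(1-w_i)$-corrupted version of $f_i$; by Remark~\ref{rem:corruption} and monotonicity of $\cH_\gamma$, $g \in \cH_{1-\alpha/(18k)}(f_i)$. Applying the list-decodable guarantee to $D_1$ once with parameters $(\alpha/18,\beta/(2k),1-\alpha/(18k))$ and union-bounding over the at most $k$ indices in $I$, I get with probability at least $1-\beta/2$ that $\what{\cF}$ contains, for every $i \in I$, some $\what{f}_{j(i)}$ with $\TV(f_i,\what{f}_{j(i)}) \le \alpha/18$. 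For $i \notin I$ I pick $\what{f}_{j(i)} \in \what{\cF}$ arbitrarily; the total weight on such indices is at most $k \cdot \alpha/(18k)= \alpha/18$, so the mixture $\widetilde{g}=\sum_i w_i \what{f}_{j(i)}$ satisfies $\TV(g,\widetilde{g}) \le \alpha/18+\alpha/18=\alpha/9$. Rounding $w$ to its nearest element $\what{w}$ of the $(\alpha/(18k))$-net $\what{\Delta}_k$ from \Proposition{alphaNet} contributes at most $\tfrac{1}{2}\|\what{w}-w\|_1 \le \alpha/36$ of additional TV, so the triangle inequality gives $\TV(g,\cK) \le \alpha/9+\alpha/36 < \alpha/6$. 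Plugging this into \Lemma{PHS} (called with failure parameter $\beta/2$) gives $\TV(g,\what{g}) \le 3\cdot(\alpha/6)+\alpha/2 \le \alpha$ with probability $\ge 1-\beta/4$, and combining with the list-decodable failure gives overall failure $\le 3\beta/4 \le \beta$.

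\emph{Sample complexity and main obstacle.} The size of $D_1$ is exactly $m_{\text{List}}(\alpha/18,\beta/(2k),1-\alpha/(18k),\eps/2,\delta)$. For $D_2$, \Proposition{alphaNet} gives $|\cK| \le L^k \cdot (54k/\alpha)^k$, so $\log|\cK|=O(k\log(Lk/\alpha))$, and \Lemma{PHS} then yields the additive term in the theorem statement. The main obstacle is purely bookkeeping: three independent error sources (list-decoding error, aggregated mass on low-weight components, and weight-net discretization) must each be budgeted so their sum is compatible with the factor-$3$ blow-up of \Lemma{PHS}, while the corruption level $\gamma=1-\alpha/(18k)$ must be tight enough that \emph{every} non-negligible component is covered by the list-decodable guarantee. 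These coupled choices are exactly what force the $\alpha/18$ and $\alpha/(18k)$ constants in the algorithm and what produce the $k$-dependence inside the logarithm of the final bound.
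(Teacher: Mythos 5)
Your proof is correct and follows essentially the same route as the paper's: privacy via composition of the $(\eps/2,\delta)$-DP list-decoder and the PHS call, plus post-processing; accuracy by viewing $g$ as a $(1-w_i)$-corrupted version of each component, applying the list-decoding guarantee with a union bound over non-negligible components, rounding the weights on a net of $\Delta_k$, and finishing with private hypothesis selection. The only cosmetic differences are that you inline the triangle-inequality bookkeeping (arriving at $\TV(g,\cK) \le \alpha/9+\alpha/36 < \alpha/6$) where the paper routes it through its Proposition on TV of mixtures, and you read the PHS lemma slightly more carefully (getting $(3\eps/4,\delta)$-DP overall and failure $3\beta/4$ rather than the paper's looser $(\eps,\delta)$ and $\beta$); both are fine.
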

\begin{proof}
We begin by briefly showing that Algorithm~\ref{alg:reduction} satisfies $(\eps, \delta)$-DP before arguing about its accuracy.

\paragraph{Privacy.}
We first prove that Algorithm~\ref{alg:reduction} is $(\eps,\delta)$-DP.  Step~\ref{alg:learn-mixture-line2} of the algorithm satisfies $(\eps/2,\delta)$-DP by the fact that $\cA_{\text{List}}$ is an $(\eps/2,\delta)$-DP $L$-list-decodable learner. Steps~\ref{alg:learn-mixture-line3} and~\ref{alg:learn-mixture-line4} maintain $(\eps/2,\delta)$-DP by post processing (\Lemma{post-processing}). Finally, step~\ref{alg:learn-mixture-line5} satisfies $(\eps/2)$-DP by \Lemma{PHS}. By basic composition (\Lemma{composition}) the entire algorithm is $(\eps,\delta)$-DP.

\paragraph{Accuracy.}
We now proceed to show that Algorithm~\ref{alg:reduction} PAC learns \kmix$(\cF)$.
In step~\ref{alg:learn-mixture-line2} of Algorithm~\ref{alg:reduction}, we use the $(\eps/2, \delta)$-DP $L$-list-decodable learner to obtain a set of distributions $\what{\cF}$ of size at most $L$.
Note that for any mixture component $f_{j}$, $g$ is a $(1-w_{j})$-corrupted distribution of $f_{j}$ since
\begin{align*}
g &=w_{j}f_{j} + \textstyle\sum_{i\not= j}w_{i}f_{i}
= w_{j}f_{j} + (1-w_{j})\textstyle\sum_{i\not= j}\frac{w_{i}f_{i}}{1-w_{j}} =w_{j}f_{j} + (1-w_{j})h,
\end{align*}
where $h = \sum_{i \neq j} \frac{w_i f_i}{1-w_j}$.

Let $N = \{i \in [k]\,:\, w_i \geq \alpha / 18k\}$ denote the set of \emph{non-negligible} components.
We first show that for any non-negligible component $i \in N$, there exists $\what{f} \in \what{\cF}$ that is close to $f_i$.

\begin{claim}
    \label{claim:reduction1}
    If $|D_1| \geq m_{\textsc{List}}(\alpha/18, \beta/2k, 1-\alpha/18k, \eps/2, \delta)$ then
    $\TV(f_i, \what{\cF}) \leq \alpha / 18$ for all $i \in N$
    with probability at least $1-\beta/2$.
\end{claim}
\begin{proof}
    Fix $i \in N$.
    Note that $1-w_i \leq 1-\alpha/18k$ so $f \in \cH_{1-\alpha/18k}(f_i)$.
    Since step~\ref{alg:learn-mixture-line2} of Algorithm~\ref{alg:reduction} makes use of a list-decodable learner, as long as $|D_1| \geq m_{\textsc{List}}( \alpha/18, \beta/2k, 1-\alpha/18k,\eps/2, \delta)$ we have $\TV(f_i, \what{\cF}) \leq \alpha / 18$ with probability at least $1-\beta/2k$.
    Since this is true for any fixed $i \in N$, a union bound gives that $\TV(f_i, \what{\cF}) \leq \alpha/18$ for all $i \in N$ with probability at least $1-\beta/2$.
\end{proof}

Steps~\ref{alg:learn-mixture-line3} and \ref{alg:learn-mixture-line4} of Algorithm~\ref{alg:reduction} constructs a candidate set $\cK$ of mixture distributions using $\what{\cF}$ and a net of the probability simplex $\Delta_k$.
The next claim shows that as long as $\TV(f_i, \what{\cF})$ is small for every non-negligible $i \in N$, $\TV(g, \cK)$ is small as well.
\begin{claim}
    \label{claim:reduction2}
    If $\TV(f_i, \what{\cF}) \leq \alpha /18$ for every $i \in N$,
    then $\TV(g, \cK) \leq \alpha / 6$.
    In addition, $|\cK| \leq \left( \frac{54Lk}{\alpha}\right)^k$.
\end{claim}
\begin{proof}
    Step~\ref{alg:learn-mixture-line3} constructs a set $\widehat{\Delta}_k$ which is an $(18k/\alpha)$-net of the probability simplex $\Delta_k$ in the $\ell_{\infty}$-norm.
    By the hypothesis of the claim,
    for each $i \in N$, there exists $\what{f}_i \in \what{\cF}$ such that $\TV(f_i, \what{f}_i)\leq \alpha/18$.
    Recall that $g = \sum_{i \in [k]} w_i f_i$.
    Let $\what{w} \in \widehat{\Delta}_k$ such that $\|\what{w} - w\|_{\infty} \leq \alpha / 18k$.
    Now let $\wtilde{g} = \sum_{i\in[k]} \what{w}_i \what{f}_i$.
    Note that $\wtilde{g} \in \cK$.
    Moreover, a straightforward calculation shows that $\TV(g, \wtilde{g}) \leq \alpha / 6$ (see \Proposition{TV-mixtures} for the detailed calculations).
    This proves that $\TV(g, \cK) \leq \alpha / 6$.
    
    Lastly, to bound $|\cK|$ we have $|\cK| \leq |\what{\cF}|^k \cdot |\widehat{\Delta}_k|$.
    Note that $|\what{\cF}| \leq L$ since it is the output of an $L$-list-decodable learner and $|\widehat{\Delta}_k| \leq (54k / \alpha)^k$ by \Proposition{alphaNet}.
    This implies the claimed bound on $|\cK|$.
\end{proof}

The only remaining step is to select a good hypothesis from $\cK$.
This is achieved using the private hypothesis selection algorithm from \Lemma{PHS} which guarantees that step~\ref{alg:learn-mixture-line5} of Algorithm~\ref{alg:reduction} returns $\what{g}$ satisfying $\TV(g, \what{g}) \leq 3 \cdot \TV(g, \cK) + \alpha / 2$ with probability $1-\beta/2$ as long as
\begin{equation}
    \EquationName{reduction}
    |D_{2}|
    = \Omega\left(\frac{\log(|\cK|/\beta)}{\alpha^{2}}+\frac{\log(|\cK|/\beta)}{\alpha\eps}\right)
    =\Omega\left(\frac{k\log(Lk/\alpha)+\log(1/\beta)}{\alpha^{2}}+\frac{k\log(Lk/\alpha)+\log(1/\beta)}{\alpha\eps}\right).
\end{equation}
Combining this with Claim~\ref{claim:reduction1}, Claim~\ref{claim:reduction2}, and a union bound, we have that with probability $1-\beta$,
\[
    \TV(g, \what{g}) \leq 3 \cdot \TV(g, \cK) + \alpha/2 \leq \alpha,
\]
where the first inequality follows from private hypothesis selection and the second inequality follows from Claim~\ref{claim:reduction1} and Claim~\ref{claim:reduction2}.

Finally, the claimed sample complexity bound follows from the samples required to construct $\what{\cF}$ (which follows from Claim~\ref{claim:reduction1}) and the samples required for private hypothesis selection which is given in \Equation{reduction}.
\end{proof}

This reduction is quite useful because it is conceptually much simpler to devise list-decodable learners for a given class $\cF$. In what follows, we will devise such list-decodable learners for certain classes and use \Theorem{reduction} to obtain private PAC learners for mixtures of these classes.

\section{Learning Mixtures of Univariate Gaussians}\label{sec:univariate}

Let $\UnitGauss$ be the class of all univariate Gaussians. In this section we consider the problem of privately learning univariate Guassian Mixtures, \kmix$(\UnitGauss)$. In the previous section, we showed that it is sufficient to design  private list-decodable learners for univariate Gaussians. As a warm-up and to build intuition about our techniques, we begin with the simpler problem of constructing private list-decodable learners for Gaussians with a single known variance $\sigma^{2}$.
In what follows, we often use ``tilde'' (e.g.~$\wtilde{M}, \wtilde{V}$) to denote sets that are meant to be \emph{coarse}, or \emph{constant}, approximations and ``hat'' (e.g.~$\what{\cF}, \what{M}, \what{V}$) to denote sets that are meant to be \emph{fine}, say $O(\alpha)$, approximations.

\subsection{Warm-up: Learning Gaussian Mixtures with a Known, Shared Variance}
In this sub-section we will construct a private list-decodable learner for univariate Gaussians with a known variance $\sigma^{2}$. A useful algorithmic primitive that we will use throughout this section and the next is the \emph{stable histogram} algorithm.
\begin{lemma}[Histogram learner~\cite{KorolovaKMN09,BunNS16}]\LemmaName{stable-histogram}
Let $n\in\mathbb{N}$, $\eta,\beta,\eps \in (0,1)$ and $\delta \in (0,1/n)$. Let $D$ be a dataset of $n$ points over a domain $\cX$.
Let $K$ be a countable index set and $\mathbf{B} = \{B_i\}_{i \in K}$ be a collection of disjoint bins defined on $\cX$, i.e.~$B_i \subseteq \cX$ and $B_i \cap B_j = \emptyset$ for $i \neq j$.
Finally, let $\overline{p}_{i} = \frac{1}{n}\cdot|D \cap B_{i}|$.
There is an $(\eps,\delta)$-DP algorithm $\text{\emph{\stable}}(\eps,\delta,\eta,\beta,D,\mathbf{B})$ that takes as input parameters $\eps,\delta,\eta,\beta$, dataset $D$ and bins $\mathbf{B}$, and outputs estimates $\{\widetilde{p}_{i}\}_{i\in K}$ such that for all $i\in K$, $$|\overline{p}_{i} - \widetilde{p}_{i}| \leq \eta,$$
with probability no less than $1-\beta$ so long as 
\[
n = \Omega\left(\frac{\log(1/\beta\delta)}{\eta\eps}\right).
\]
\end{lemma}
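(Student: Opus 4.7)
The plan is to prove this via the classical \emph{stable histogram} construction, which releases noisy counts only for bins that are not too small. The algorithm computes the true count $c_i = |D \cap B_i|$ for every bin that contains at least one data point (there are at most $n$ such bins, so the procedure is well-defined even when $|K|$ is infinite), adds independent Laplace noise $Z_i \sim \mathrm{Lap}(2/\eps)$ to form $\widehat{c}_i = c_i + Z_i$, and releases $\widetilde{p}_i = \widehat{c}_i/n$ whenever $\widehat{c}_i/n$ exceeds a threshold $\tau = \Theta(\log(1/\delta)/(\eps n))$; otherwise it releases $\widetilde{p}_i = 0$. Bins never touched by the data are reported as $0$ without being enumerated, which is what lets the algorithm handle an infinite index set $K$.

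For privacy, the danger on a neighbouring dataset $D'$ is that some bin goes from count zero to count one (or vice versa), which could reveal the differing record if we naively released all observed bins. The threshold $\tau$ is chosen so that, by the Laplace tail bound, a bin containing only a single data point crosses the threshold with probability at most $\delta$: one needs $\tau n \gtrsim (1/\eps)\log(1/\delta)$. Conditioning on this $\delta$-failure event not occurring, the set of bins above threshold on $D$ and on $D'$ coincides, and the vector of released noisy counts is just the standard Laplace mechanism applied to counts of sensitivity at most $2$, which yields pure $\eps$-DP. Combining these two pieces gives $(\eps,\delta)$-DP overall.

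For utility, apply the Laplace tail bound and a union bound over the at most $n$ observed bins: with probability at least $1-\beta$, every $|Z_i|$ is at most $(2/\eps)\log(2n/\beta)$. For any bin that we release, this yields $|\widetilde{p}_i - \overline{p}_i| \leq (2/\eps n)\log(2n/\beta) \leq \eta/2$; for a bin thresholded to $0$, the same event together with $\widehat{c}_i < \tau n$ gives $\overline{p}_i \leq \tau + (2/\eps n)\log(2n/\beta) \leq \eta$ by the choice of $\tau$. Bins never touched by the data trivially satisfy $\overline{p}_i = \widetilde{p}_i = 0$. These inequalities hold provided $n = \Omega\bigl(\log(n/\beta)/(\eta\eps) + \log(1/\delta)/(\eta\eps)\bigr)$, and since $\delta \leq 1/n$ implies $\log n \leq \log(1/\delta)$, this simplifies to the claimed $n = \Omega(\log(1/\beta\delta)/(\eta\eps))$.

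The main subtlety lies in the interplay between privacy and the potentially infinite index set $K$: we cannot enumerate all bins, so the algorithm must touch only the data-dependent set of nonempty bins, yet the privacy argument must still handle the fact that which bins are ``touched'' depends on the data. The threshold $\tau$ resolves this tension by hiding every low-count bin with probability $1-\delta$, and this is precisely what forces the $\log(1/\delta)$ factor (rather than $\log|K|$) into the sample complexity.
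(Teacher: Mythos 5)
The paper states \Lemma{stable-histogram} as a result imported from \cite{KorolovaKMN09,BunNS16} and gives no proof of its own, so there is no in-paper argument to compare against. Your reconstruction is the standard stable-histogram proof that those references use, and it is correct in outline: Laplace noise of scale $2/\eps$ on the nonempty bins handles the $\ell_1$-sensitivity-$2$ count vector under the swap-one-point neighbouring relation that the paper's \Definition{DP} uses, the threshold $\tau=\Theta(\log(1/\delta)/(\eps n))$ absorbs the up-to-two bins whose nonemptiness status can differ between neighbours, the at-most-$n$ nonempty bins let you union-bound for utility without reference to $|K|$, and $\delta<1/n$ converts the resulting $\log(n/\beta)$ into $\log(1/\beta\delta)$ as claimed.

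One point worth tightening if you were to write this out in full: the phrase ``conditioning on this $\delta$-failure event not occurring \ldots yields pure $\eps$-DP'' is the usual informal shorthand, but conditioning does not commute with the DP inequality in the naive way. The rigorous version decomposes the output law of the mechanism on $D$ as a mixture $(1-\delta')P_1 + \delta' P_2$ where under $P_1$ the (at most one) bin in $A\setminus A'$ is suppressed, shows $P_1(S)\le e^{\eps}\Pr[M(D')\in S]$ by comparing Laplace densities bin by bin on $A\cap A'$ (post-processing by the threshold), and then concludes $\Pr[M(D)\in S]\le e^\eps\Pr[M(D')\in S]+\delta'$; symmetrizing over which of $D,D'$ has the extra nonempty bin gives the full guarantee. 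This is a presentation gap, not a mathematical one, and your choice of threshold and noise scale are exactly what that argument needs.
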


For any fixed $\sigma^{2} > 0$ we define $\UnitGauss_{\sigma}$ to be the set of all univariate Gaussians with variance $\sigma^{2}$. For the remainder of this section, we let $g = \cN(\mu, \sigma^2) \in \UnitGauss_{\sigma}$ and $g' \in \cH_{\gamma}(g)$. (Recall that $g' \in \cH_{\gamma}(g)$ means that $g' = (1-\gamma) g + \gamma h$ for some distribution $h$.)
Algorithm~\ref{alg:univariate-mean} shows how we privately output a list of real numbers, one of which is close to the mean of $g$ given samples from $g'$. 
\begin{algorithm}
\caption{{\fontfamily{cmtt}\selectfont Univariate-Mean-Decoder}$(\beta,\gamma,\eps,\delta,\wsigma,D)$.}
\label{alg:univariate-mean}
\DontPrintSemicolon
\SetKwInOut{Input}{Input}\SetKwInOut{Output}{Output}
\setstretch{1.35}
\Input{Parameters $\eps,\beta,\gamma \in (0,1)$, $\delta \in (0,1/n)$, $\wsigma$ and dataset $D$}
\Output{Set of approximate means $ \wtilde{M}$.}

Partition $\bR$ into bins $\mathbf{B} = \{B_{i}\}_{i\in\bN}$ where $B_{i} = ((i-0.5)\wsigma,(i+0.5)\wsigma]$.\label{alg:univariate-mean-line2}

$\{\widetilde{p}_{i}\}_{i\in\bN} \leftarrow \text{\stable}(\eps,\delta,(1-\gamma)/24,\beta/2,D,\mathbf{B})$.\label{alg:univariate-mean-line3}

$H \gets \{i\,:\, \widetilde{p}_i > (1-\gamma)/8\}$ \label{alg:univariate-mean-line4}

If $|H| > 12/(1-\gamma)$ \textbf{fail} and return $\wtilde{M} = \emptyset$ \label{alg:univariate-mean-line5}

$\wtilde{M} \gets \{i \widetilde{\sigma}\,:\, i \in H\}$

\textbf{Return} $\wtilde{M}$.\label{alg:univariate-mean-line9}
\end{algorithm}
The following lemma shows that the output of Algorithm~\ref{alg:univariate-mean} is a list of real numbers with the guarantee that at least one element in the list is close to the true mean of a Gaussian which has been corrupted.
Note that the lemma assumes the slightly weaker condition where the algorithm receives an approximation to the standard deviation instead of the true standard deviation.
This additional generality is used in the next section.
\begin{lemma}\LemmaName{univariate-mean}
Algorithm~\ref{alg:univariate-mean} is an $(\eps,\delta)$-DP algorithm such that for any $g = \cN(\mu,\sigma^{2})$ and $g'\in\cH_{\gamma}(g)$, when it is given parameters $\eps, \beta, \gamma \in (0,1)$, $\delta\in (0,1/n)$, $\wsigma\in [\sigma, 2\sigma)$ and dataset $D$ of $n$ i.i.d.\ samples from $g'$ as input, it outputs a set $\wtilde{M}$ of real numbers of size
\[
    |\wtilde{M}| \leq \frac{12}{1-\gamma}.
\]
Furthermore, with probability no less than $1-\beta$ there is an element $\wtilde{\mu} \in \wtilde{M}$ such that 
\[
    |\wtilde{\mu}-\mu| \leq \sigma,
\]
so long as
\[
n = \Omega\left(\frac{\log(1/\beta\delta)}{(1-\gamma)\eps}\right).
\]
\end{lemma}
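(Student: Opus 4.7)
The plan is to break the argument into three pieces: privacy, the size bound on $\wtilde{M}$, and the existence of a good $\wtilde{\mu}$. Privacy is immediate: the only interaction with the data is the call to \stable{} in line~\ref{alg:univariate-mean-line3}, which is $(\eps,\delta)$-DP by \Lemma{stable-histogram}; the thresholding in line~\ref{alg:univariate-mean-line4}, the failure check in line~\ref{alg:univariate-mean-line5}, and the construction of $\wtilde{M}$ are all deterministic post-processing, so \Lemma{post-processing} closes the privacy argument.

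For the size bound, I would argue directly from the stable histogram guarantee. Condition on the event that $|\overline{p}_i - \wtilde{p}_i| \leq (1-\gamma)/24$ for all $i$, which holds with probability at least $1-\beta/2$ by \Lemma{stable-histogram} provided $n = \Omega(\log(1/\beta\delta)/((1-\gamma)\eps))$. Under this event, any $i \in H$ satisfies $\overline{p}_i > (1-\gamma)/8 - (1-\gamma)/24 = (1-\gamma)/12$, and since $\sum_i \overline{p}_i \leq 1$, at most $12/(1-\gamma)$ indices can lie in $H$. So the failure branch is never triggered when the histogram event holds, and $|\wtilde{M}| \leq 12/(1-\gamma)$.

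For accuracy, let $i^*$ be the index of the bin containing $\mu$, so that $|i^*\wsigma - \mu| \leq \wsigma/2 < \sigma$ (using $\wsigma < 2\sigma$). It suffices to show that $i^* \in H$ with high probability. A short Gaussian tail calculation shows that $\Prob_g[B_{i^*}] \geq c_0$ for an absolute constant $c_0 > 1/3$: the bin $B_{i^*}$ contains either $[\mu, \mu + \wsigma/2]$ or $[\mu - \wsigma/2, \mu]$, and since $\wsigma \geq \sigma$ each of these has $g$-mass at least $\Phi(1/2) - \Phi(0)$, and the full bin picks up additional mass on the other side of $\mu$. Then $\Prob_{g'}[B_{i^*}] \geq (1-\gamma)c_0$, and by a multiplicative Chernoff bound the empirical mass $\overline{p}_{i^*}$ is at least half of this with probability $1-\beta/2$ provided $n = \Omega(\log(1/\beta)/((1-\gamma)c_0))$, which is dominated by the histogram requirement. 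Combining this with the histogram guarantee yields $\wtilde{p}_{i^*} \geq (1-\gamma)c_0/2 - (1-\gamma)/24 > (1-\gamma)/8$, so $i^* \in H$ and $\wtilde{\mu} := i^*\wsigma \in \wtilde{M}$ satisfies $|\wtilde{\mu} - \mu| \leq \sigma$. A union bound over the two failure events gives overall success probability at least $1-\beta$.

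The argument has no real obstacle; the only item requiring a tiny bit of care is choosing the constants in the threshold and the histogram accuracy so that the chain $c_0 \to \Prob_{g'}[B_{i^*}] \to \overline{p}_{i^*} \to \wtilde{p}_{i^*}$ still exceeds the threshold $(1-\gamma)/8$ after losing a factor of $2$ to Chernoff and an additive $(1-\gamma)/24$ to the stable histogram. The constants $1/24$ and $1/8$ chosen in Algorithm~\ref{alg:univariate-mean} are tuned precisely so that a Gaussian-mass constant $c_0 > 1/3$ suffices, which is comfortably satisfied for bins of width between $\sigma$ and $2\sigma$.
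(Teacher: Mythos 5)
Your argument follows the same structure as the paper's: privacy by post-processing, the size bound from the failure check, and accuracy by chaining $\Prob_g[B_{i^*}]$ to the empirical count $\overline{p}_{i^*}$ via Chernoff and then to the private estimate $\wtilde{p}_{i^*}$ via the stable-histogram guarantee, with the same constants $(1-\gamma)/3 \to (1-\gamma)/6 \to (1-\gamma)/8$. The one step that does not hold as written is the Gaussian bin-mass bound. You argue $\Prob_g[B_{i^*}] > 1/3$ by saying the bin contains one half-interval of width $\wsigma/2$ adjacent to $\mu$ (mass $\geq \Phi(1/2)-\Phi(0) \approx 0.19$) ``and the full bin picks up additional mass on the other side of $\mu$.'' But when $\mu$ sits at the boundary of $B_{i^*}$ there is \emph{no} bin mass on the other side of $\mu$, so your argument as stated only gives $\approx 0.19$, which is below the $1/3$ you need to push $\wtilde{p}_{i^*}$ past the $(1-\gamma)/8$ threshold. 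The bound is still true --- when $\mu$ is at the boundary the bin contains the full-width interval $[\mu,\mu+\wsigma]$ (or its reflection), of mass $\geq \Phi(\wsigma/\sigma)-\Phi(0) \geq \Phi(1)-\Phi(0) \approx 0.34 > 1/3$ --- and the paper's \Proposition{gaussian-mean-bin} makes this rigorous by noting $f(\xi) = \Phi(\xi+\wsigma/2\sigma)-\Phi(\xi-\wsigma/2\sigma)$ decreases in $|\xi|$, so the boundary case $|\xi| = \wsigma/2\sigma$ is the worst one. Replace your half-interval heuristic with this monotonicity argument and the rest of the chain goes through exactly as you wrote. One small organizational note: the bound $|\wtilde{M}| \leq 12/(1-\gamma)$ claimed in the lemma is deterministic (guaranteed by the failure branch on line~\ref{alg:univariate-mean-line5}), so it does not need the histogram event; what you actually need the histogram event for is the complementary observation you also make there, namely that the failure branch is \emph{not} triggered, which you then rely on when concluding $i^*\wsigma \in \wtilde{M}$.
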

Let us begin by gathering several straightforward observations about the algorithm.
Let $p_i = \Prob_{X \sim g'}[X \in B_i]$ be the probability that a sample drawn from $g'$ lands in bin $B_i$.
Let $\overline{p}_i = \frac{1}{n} |D \cap B_i|$ be the actual number of samples drawn from $g'$ that have landed in $B_i$.
Let $j = \lceil \mu / \wtilde{\sigma} \rfloor$.
It is a simple calculation to check that $|j\wtilde{\sigma} - \mu| \leq \sigma$.
Thus, we would like to show that $j\wtilde{\sigma} \in \wtilde{M}$ or, equivalently, that $j \in H$.
As a first step, we show that many samples actually land in bin $B_j$.
\begin{claim}
    \ClaimName{univariate1}
    If $n = \Omega( \log(1/\beta)/(1-\gamma) )$ then $\overline{p}_j\, > (1-\gamma) / 6$ with probability at least $1-\beta / 2$.
\end{claim}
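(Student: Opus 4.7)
The plan is to establish two things and then combine them: (i) the bin $B_j$ captures at least a $(1-\gamma)/3$ fraction of the mass of $g'$, and (ii) a multiplicative Chernoff bound then guarantees that the empirical frequency $\overline{p}_j$ stays close to its expectation, yielding the desired lower bound at the claimed sample complexity.

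For (i), recall that $j = \lceil \mu/\wsigma \rfloor$ is the nearest integer to $\mu/\wsigma$, so $\mu \in B_j$ and $|j\wsigma - \mu| \leq \wsigma/2$. Writing $B_j$ in the form $(\mu - a, \mu + b]$ with $a, b \geq 0$ and $a + b = \wsigma$, we have
\[
    \Prob_{X \sim g}[X \in B_j] \;=\; \Phi(b/\sigma) - \Phi(-a/\sigma),
\]
where $\Phi$ is the standard normal CDF. The main (and only delicate) point of the proof is to argue that this quantity is bounded below by a \emph{fixed} positive constant, uniformly over the allowed values of $a,b$ and $\wsigma/\sigma \in [1,2)$; this matters because the final bound on $\overline{p}_j$ is linear in $1-\gamma$ with no hidden dependence on scale. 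Since widening a bin around $\mu$ can only increase the captured Gaussian mass, the worst case is $\wsigma = \sigma$ with $\mu$ sitting at the boundary of $B_j$ (i.e.\ $\{a,b\} = \{\sigma,0\}$), in which case $\Prob_{X \sim g}[X \in B_j] \geq \Phi(1) - 1/2 > 1/3$. Combining with the decomposition $g' = (1-\gamma)g + \gamma h$ yields
\[
    p_j \;:=\; \Prob_{X \sim g'}[X \in B_j] \;\geq\; (1-\gamma)\,\Prob_{X \sim g}[X \in B_j] \;\geq\; (1-\gamma)/3.
\]

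For (ii), $n\overline{p}_j = |D \cap B_j|$ is a binomial with mean $np_j \geq n(1-\gamma)/3$. The multiplicative Chernoff bound (with deviation factor $1/2$) gives
\[
    \Pr[\,\overline{p}_j \leq p_j/2\,] \;\leq\; \exp(-np_j/8) \;\leq\; \exp\!\bigl(-n(1-\gamma)/24\bigr),
\]
which is at most $\beta/2$ provided $n = \Omega(\log(1/\beta)/(1-\gamma))$, matching the claimed sample complexity. Under this condition, with probability at least $1 - \beta/2$ we have $\overline{p}_j > p_j/2 \geq (1-\gamma)/6$, as required.
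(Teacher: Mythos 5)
Your proof is correct and follows essentially the same route as the paper: bound $\Prob_{X \sim g}[X \in B_j]$ below by a universal constant $\geq 1/3$, transfer this to $p_j \geq (1-\gamma)/3$ via the corruption decomposition $g' = (1-\gamma)g + \gamma h$, and then apply a multiplicative Chernoff bound with deviation factor $1/2$. The only cosmetic difference is that you prove the $\geq 1/3$ lower bound inline by parametrizing $B_j$ as $(\mu - a, \mu + b]$ and arguing the worst case is $\{a,b\} = \{\sigma, 0\}$, whereas the paper delegates this to \Proposition{gaussian-mean-bin}, which parametrizes by the offset $\xi = (j\wsigma - \mu)/\sigma$ of $\mu$ from the bin center and uses monotonicity of $\Phi(\xi + \wsigma/2\sigma) - \Phi(\xi - \wsigma/2\sigma)$ in $|\xi|$ -- an equivalent computation yielding the same extremal value $\Phi(1) - \Phi(0)$.
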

\begin{proof}
    First, observe that for a bin $B_i = ((i-0.5) \wtilde{\sigma}, (i+0.5) \wtilde{\sigma}]$ and $X \sim g'$, we have (recalling \Definition{CorruptedDist}), $p_i = \Prob_{X \sim g'}[X \in B_i] \geq (1-\gamma) \Prob_{X \sim g}[X \in B_i]$.
    A fairly straightforward calculation (see \Proposition{gaussian-mean-bin}) gives that $\Prob_{X \sim g}[X \in B_j] \geq 1/3$ so that $p_j \geq (1-\gamma) / 3$.
    
    A standard Chernoff bound (\Lemma{Chernoff}) implies that $|\overline{p}_j - p_j|  <  p_j / 2$ with probability at least $1 - \beta/2$ provided $n \geq C \log(1/\beta) / (1-\gamma)$ for some constant $C > 0$.
    As $p_j \geq (1-\gamma) / 3$ this implies $\overline{p}_j  > (1-\gamma) / 6$.
\end{proof}
Next, we claim that the output of the stable histogram approximately preserves the weight of all the bins and, moreover, that the output does not have too many heavy bins.
The first assertion implies that since bin $B_j$ is heavy, the stable histogram also determines that bin $B_j$ is heavy.
The second assertion implies that the algorithm does not fail.
Let $\{\wtilde{p}_i\}_{i \in \bN}$ be the output of the stable histogram, as defined in Algorithm~\ref{alg:univariate-mean}.
\begin{claim}
    \ClaimName{univariate2}
    If $n = \Omega(\log(1/\beta \delta) / (1-\gamma) \eps)$ then with probability $1-\beta/2$, we have (i) $|\overline{p}_i - \wtilde{p}_i| \leq (1-\gamma) / 24$ for all $i \in \bN$ and
    (ii) $|H| = |\{i \in \bN \,:\, \wtilde{p}_i > (1-\gamma) / 8\}| \leq 12/(1-\gamma)$.
\end{claim}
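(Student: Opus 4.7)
The plan is to prove the two parts separately, with part (i) following directly from an application of the stable histogram lemma and part (ii) following as an easy corollary via a pigeonhole-style counting argument.

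For part (i), I would simply invoke \Lemma{stable-histogram} applied to the partition $\mathbf{B} = \{B_i\}_{i \in \bN}$ with accuracy parameter $\eta = (1-\gamma)/24$ and failure probability $\beta/2$, matching the arguments passed to \stable\ on line~\ref{alg:univariate-mean-line3} of Algorithm~\ref{alg:univariate-mean}. The sample complexity required by \Lemma{stable-histogram} is then
\[
    n = \Omega\left(\frac{\log(1/\beta\delta)}{\eta\eps}\right) = \Omega\left(\frac{\log(1/\beta\delta)}{(1-\gamma)\eps}\right),
\]
which matches the hypothesis of the claim. This directly yields $|\overline{p}_i - \wtilde{p}_i| \leq (1-\gamma)/24$ for every $i \in \bN$ with probability at least $1-\beta/2$.

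For part (ii), I would condition on the event from part (i) and argue deterministically. If $\wtilde{p}_i > (1-\gamma)/8$, then by (i),
\[
    \overline{p}_i \;\geq\; \wtilde{p}_i - \tfrac{1-\gamma}{24} \;>\; \tfrac{1-\gamma}{8} - \tfrac{1-\gamma}{24} \;=\; \tfrac{1-\gamma}{12}.
\]
Since the bins $\{B_i\}_{i\in\bN}$ are disjoint subsets of $\bR$, the empirical fractions satisfy $\sum_{i \in \bN} \overline{p}_i \leq 1$. Hence
\[
    |H| \cdot \tfrac{1-\gamma}{12} \;<\; \sum_{i \in H} \overline{p}_i \;\leq\; \sum_{i \in \bN} \overline{p}_i \;\leq\; 1,
\]
which rearranges to $|H| < 12/(1-\gamma)$, yielding (ii). A union bound is not even necessary since (ii) is deterministic given the event in (i), so both hold simultaneously with probability $1-\beta/2$.

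There is no real obstacle here; the only thing to be careful about is the choice of thresholds, which are tuned precisely so that the gap between $(1-\gamma)/8$ (the threshold used to define $H$) and the estimation error $(1-\gamma)/24$ leaves enough room to conclude a lower bound of $(1-\gamma)/12$ on $\overline{p}_i$ for each $i \in H$, which then combines cleanly with $\sum_i \overline{p}_i \leq 1$ to give the stated cap on $|H|$.
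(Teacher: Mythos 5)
Your proof is correct and follows essentially the same route as the paper: invoke \Lemma{stable-histogram} with $\eta = (1-\gamma)/24$ for part (i), then deduce (ii) deterministically by noting each $i \in H$ must have $\overline{p}_i > (1-\gamma)/12$ and summing. The only cosmetic difference is that you argue this implication directly while the paper phrases it as a contrapositive.
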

\begin{proof}
    The first assertion directly follows from \Lemma{stable-histogram} with $\eta = (1-\gamma) / 24$.
    In the event that $|\overline{p}_i - \wtilde{p}_i| \leq (1-\gamma) / 24$, we now show that $|H| \leq 12/(1-\gamma)$.
    Note that it suffices to argue that if $i \in H$ then $\overline{p}_i > (1-\gamma) / 12$.
    Since $\sum_{i \in \bN} \overline{p}_i = 1$, this implies that $|H| \leq 12/(1-\gamma)$.
    Indeed, we argue the contrapositive.
    If $\overline{p}_i \leq (1-\gamma) / 12$ then $\widetilde{p}_{i} \leq\overline{p}_{i} + (1-\gamma)/24 \leq (1-\gamma)/8$ and, hence, $i \notin H$.
\end{proof}
With \Claim{univariate1} and \Claim{univariate2} in hand, we are now ready to prove \Lemma{univariate-mean}.
\begin{proof}[Proof of \Lemma{univariate-mean}]
We briefly prove that the algorithm is private before proceeding to the other assertions of the lemma.
\paragraph{Privacy.}
Line~\ref{alg:univariate-mean-line3} is the only part of the algorithm that looks at the data and it is $(\eps, \delta)$-DP by \Lemma{stable-histogram}.
The remainder of the algorithm can be viewed as post-processing (\Lemma{post-processing}) so it does not affect the privacy.

\paragraph{Bound on $|\wtilde{M}|$.}
For the bound on $|\wtilde{M}|$, observe that if $|H| > 12/(1-\gamma)$ then the algorithm fails so $|\wtilde{M}| \leq 12/(1-\gamma)$ deterministically.

\paragraph{Accuracy.}
Let $g, g', \mu$ be as defined in the statement of the lemma.
We now show that there exists $\wtilde{\mu} \in \wtilde{M}$ such that $|\wtilde{\mu} - \mu| \leq \sigma$.
Let $j = \lceil \mu / \wtilde{\sigma} \rfloor$.
For the remainder of the proof, we assume that $n = \Omega( \log(1/\beta \delta) / (1-\gamma) \eps)$.

\Claim{univariate1} asserts that, with probability $1-\beta/2$, we have $\overline{p}_j > (1-\gamma) / 6$.
\Claim{univariate2} asserts that, with probability $1-\beta/2$, $\wtilde{p}_j \geq \overline{p}_j - (1-\gamma) / 24$ \emph{and} that $|H| \leq 12/(1-\gamma)$. By a union bound, with probability $1-\beta$, we have that $\overline{p}_j > (1-\gamma) / 8$ and the algorithm does not fail.
This implies that $j \in H$ so $j \wtilde{\sigma} \in \wtilde{M}$.
Finally, note that $|j \wtilde{\sigma} - \mu| \leq \wtilde{\sigma}/2 \leq \sigma$ where the last inequality uses the assumption that $\wtilde{\sigma} \leq 2\sigma$.
\end{proof}

\begin{corollary}\CorollaryName{univariate-list-decoder-uniform}
For any $\eps\in (0,1)$ and $\delta\in (0,1/n)$, there is an $(\eps,\delta)$-DP $L$-list-decodable learner for $\UnitGauss_{\sigma}$ with known $\sigma > 0$ where $L = O(1/(1-\gamma)\alpha)$, and the number of samples used is 
\[
m_{\textsc{List}}(\alpha,\beta,\gamma,\eps,\delta) = O\left(\frac{\log(1/\beta\delta)}{(1-\gamma)\eps}\right).
\]
\end{corollary}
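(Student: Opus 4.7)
The plan is to use \Lemma{univariate-mean} as a black box to produce a coarse list of candidate means, then refine each candidate into a finely spaced grid, and finally turn each grid point into a Gaussian with the (known) variance $\sigma^{2}$. Since $\sigma$ is known to the algorithm, we can invoke \Lemma{univariate-mean} with $\wsigma = \sigma$ (which trivially satisfies $\wsigma \in [\sigma, 2\sigma)$) to obtain a set $\wtilde{M}$ of real numbers of size $|\wtilde{M}| \leq 12/(1-\gamma)$ such that, with probability at least $1-\beta$, some $\wtilde{\mu} \in \wtilde{M}$ satisfies $|\wtilde{\mu} - \mu| \leq \sigma$, using $n = \Omega(\log(1/\beta\delta)/((1-\gamma)\eps))$ samples.

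The coarse guarantee $|\wtilde{\mu} - \mu| \leq \sigma$ is far too weak to give TV distance $\alpha$ between $\cN(\wtilde{\mu},\sigma^2)$ and $\cN(\mu,\sigma^2)$; a standard computation shows the TV distance between two univariate Gaussians sharing variance $\sigma^{2}$ is at most $|\mu_1-\mu_2|/(\sigma\sqrt{2\pi})$, so we need some candidate mean within $O(\alpha\sigma)$ of $\mu$. To get this, I would replace each $\wtilde{\mu} \in \wtilde{M}$ by the arithmetic progression
\[
\what{M}_{\wtilde{\mu}} \;=\; \bigl\{\, \wtilde{\mu} + j\,c\alpha\sigma \,:\, j \in \bZ,\ |j| \leq \lceil 1/(c\alpha)\rceil \,\bigr\},
\]
for a sufficiently small absolute constant $c > 0$, and then output
\[
\widetilde{\cF} \;=\; \bigl\{\, \cN(\what{\mu}, \sigma^{2}) \,:\, \what{\mu} \in \textstyle\bigcup_{\wtilde{\mu}\in \wtilde{M}} \what{M}_{\wtilde{\mu}} \,\bigr\}.
\]
Each $\what{M}_{\wtilde{\mu}}$ has size $O(1/\alpha)$, so $|\widetilde{\cF}| = O(1/((1-\gamma)\alpha))$, matching the claimed bound on $L$.

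For correctness, condition on the (probability $\geq 1-\beta$) event of \Lemma{univariate-mean} that some $\wtilde{\mu} \in \wtilde{M}$ has $|\wtilde{\mu} - \mu| \leq \sigma$. Since $\what{M}_{\wtilde{\mu}}$ is an $(c\alpha\sigma)$-net of the interval $[\wtilde{\mu} - \sigma, \wtilde{\mu}+\sigma]$ and $\mu$ lies in this interval, some $\what{\mu} \in \what{M}_{\wtilde{\mu}}$ satisfies $|\what{\mu}-\mu| \leq c\alpha\sigma$. Choosing $c$ so that $c/\sqrt{2\pi} \leq 1$ then gives $\TV(\cN(\what{\mu},\sigma^{2}), \cN(\mu,\sigma^{2})) \leq \alpha$ via the standard bound mentioned above, hence $\TV(g, \widetilde{\cF}) \leq \alpha$ as required.

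Privacy is essentially free: the only step that touches the data is the invocation of Algorithm~\ref{alg:univariate-mean}, which is $(\eps,\delta)$-DP by \Lemma{univariate-mean}. Construction of $\what{M}_{\wtilde{\mu}}$ and of $\widetilde{\cF}$ is deterministic post-processing of $\wtilde{M}$ and of the (publicly known) parameter $\sigma$, so by \Lemma{post-processing} the overall algorithm is $(\eps,\delta)$-DP. The sample complexity is exactly the sample complexity of \Lemma{univariate-mean}, namely $O(\log(1/\beta\delta)/((1-\gamma)\eps))$. There is no real obstacle here; the only point needing care is the constant in the net spacing to ensure the TV-to-mean-difference conversion stays below $\alpha$, which is a routine one-line calculation.
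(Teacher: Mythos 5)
Your proposal is correct and follows essentially the same approach as the paper: invoke \Lemma{univariate-mean} with $\wsigma = \sigma$ to get a coarse list $\wtilde{M}$, refine each coarse mean into an $O(\alpha\sigma)$-spaced grid covering the length-$2\sigma$ interval, and output Gaussians centered at grid points; privacy is post-processing and the sample complexity is inherited from the mean decoder. The only superficial difference is that you bound the TV distance directly via $\TV(\cN(\mu_1,\sigma^2),\cN(\mu_2,\sigma^2)) \leq |\mu_1-\mu_2|/(\sigma\sqrt{2\pi})$ whereas the paper cites \Proposition{TV-univariate-gaussians}; both give the needed conclusion.
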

\begin{proof}
The algorithm is simple; we run {\fontfamily{cmtt}\selectfont Univariate-Mean-Decoder}$(\eps,\delta,\beta,\gamma,\sigma,D)$ and obtain the set $\wtilde{M}$.
Let $\what{M}$ be an $\alpha\sigma$-net of the set of intervals $\{[\widetilde{\mu}-\sigma,\widetilde{\mu}+\sigma] \,:\, \wtilde{\mu} \in \wtilde{M}\}$ of size $|\wtilde{M}| \cdot (2 \cdot \lceil 1/2\alpha \rceil + 1)$, i.e.~
\[
    \what{M} = \{ \widetilde{\mu} + 2j\alpha\sigma \,:\, \widetilde{\mu} \in 
    \wtilde{M}, \, j \in \{0, \pm 1, \ldots, \pm \lceil 1/2\alpha \rceil \}.
\]
We then return $\what{\cF} = \{ \cN(\widehat{\mu}, \sigma^2) \,:\, \widehat{\mu} \in \what{M}\}$.
Finally, \Lemma{univariate-mean} and post-processing (\Lemma{post-processing}) imply that the algorithm is $(\eps, \delta)$-DP while \Lemma{univariate-mean} and \Proposition{TV-univariate-gaussians} imply the accuracy guarantee.\footnote{Note that we can only use \Proposition{TV-univariate-gaussians} for target $\alpha$ as large as $2/3$. For any target $\alpha > 2/3$, we can simply run the algorithm with $\alpha = 2/3$.}
\end{proof}

Finally, we use \Corollary{univariate-list-decoder-uniform} and \Theorem{reduction} to construct an $(\eps,\delta)$-DP PAC learner for \kmix$(\UnitGauss_{\sigma})$. 

\begin{theorem}\TheoremName{univariate-pac-learner-uniform}
For any $\eps\in(0,1)$ and $\delta\in(0,1/n)$, there is an $(\eps,\delta)$-DP PAC learner for \emph{\kmix}$\left(\UnitGauss_{\sigma}\right)$ with known $\sigma > 0$ that uses $$m(\alpha,\beta,\eps,\delta) = O\left(\frac{k\log(k/\alpha)+\log(1/\beta)}{\alpha^{2}}+\frac{k\log(k/\alpha\beta\delta)}{\alpha\eps}\right) =\widetilde{O}\left(\frac{k+\log(1/\beta)}{\alpha^{2}}+\frac{k\log(1/\beta\delta)}{\alpha\eps}\right)$$ samples.
\end{theorem}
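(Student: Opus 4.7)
The proof is essentially a direct invocation of \Theorem{reduction} using the list-decodable learner given by \Corollary{univariate-list-decoder-uniform}. The plan is to plug the parameters produced by the reduction into the list-decoder, control the two resulting sample complexity contributions separately, and simplify.

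First, I would instantiate the $(\eps/2,\delta)$-DP $L$-list-decodable learner from \Corollary{univariate-list-decoder-uniform} with the parameter profile $(\alpha/18, \beta/2k, 1-\alpha/18k, \eps/2, \delta)$ required by \Theorem{reduction}. Substituting $\gamma = 1-\alpha/18k$ into the list size bound $L = O(1/(1-\gamma)\alpha)$ yields $L = O(k/\alpha^2)$, and substituting it into the sample complexity gives
\[
    m_{\textsc{List}}\!\left(\tfrac{\alpha}{18},\tfrac{\beta}{2k},1-\tfrac{\alpha}{18k},\tfrac{\eps}{2},\delta\right) = O\!\left(\frac{\log(k/\beta\delta)}{(\alpha/18k)(\eps/2)}\right) = O\!\left(\frac{k\log(k/\beta\delta)}{\alpha\eps}\right).
\]

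Next, I would add the hypothesis selection cost from \Theorem{reduction}, which contributes
\[
    O\!\left(\frac{k\log(Lk/\alpha)+\log(1/\beta)}{\alpha^{2}}+\frac{k\log(Lk/\alpha)+\log(1/\beta)}{\alpha\eps}\right).
\]
Since $L = O(k/\alpha^2)$, we have $\log(Lk/\alpha) = O(\log(k/\alpha))$, so this contribution simplifies to
\[
    O\!\left(\frac{k\log(k/\alpha)+\log(1/\beta)}{\alpha^{2}}+\frac{k\log(k/\alpha)+\log(1/\beta)}{\alpha\eps}\right).
\]

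Finally, I would combine the two contributions. The second $1/\alpha\eps$ term is dominated by the list-decoding term $O(k\log(k/\beta\delta)/\alpha\eps)$, so the total sample complexity collapses to
\[
    O\!\left(\frac{k\log(k/\alpha)+\log(1/\beta)}{\alpha^{2}}+\frac{k\log(k/\alpha\beta\delta)}{\alpha\eps}\right),
\]
matching the stated bound. Privacy of the overall algorithm follows immediately from the privacy guarantee of \Theorem{reduction}. There is essentially no obstacle here beyond careful arithmetic: the only thing to watch is that after the reduction inflates the corruption parameter to $\gamma = 1-\alpha/18k$, the factor $1/(1-\gamma) = 18k/\alpha$ appears cleanly in both $L$ and $m_{\textsc{List}}$, so no term blows up unexpectedly, and the $\log(k/\alpha\beta\delta)$ factor absorbs the $\log(k/\alpha)$ arising from $\log(Lk/\alpha)$.
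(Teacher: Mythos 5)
Your proposal is correct and follows exactly the route the paper intends: the paper states \Theorem{univariate-pac-learner-uniform} with no written proof beyond ``use \Corollary{univariate-list-decoder-uniform} and \Theorem{reduction},'' and your arithmetic is precisely the omitted bookkeeping, including the correct substitution $1-\gamma = \alpha/18k$ giving $L = O(k/\alpha^2)$ and $m_{\textsc{List}} = O(k\log(k/\beta\delta)/\alpha\eps)$. One small imprecision: the $(k\log(k/\alpha)+\log(1/\beta))/\alpha\eps$ hypothesis-selection term is not literally dominated by the list-decoding term $k\log(k/\beta\delta)/\alpha\eps$ (e.g.\ when $\alpha$ is tiny and $\beta,\delta$ are moderate), but the two $1/(\alpha\eps)$ contributions still \emph{combine} into $O(k\log(k/\alpha\beta\delta)/\alpha\eps)$, so the final bound is unaffected.
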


Similar ideas can also be used to privately learn the class $\kmix(\cG_1^d)$.
The details can be found in Appendix~\ref{app:identity-gaussians}.

\subsection{Learning Arbitrary Univariate Gaussian Mixtures}
In this section, we construct a list-decodable learner for $\UnitGauss$, the class of all univariate Gaussians.
First, in Algorithm~\ref{alg:univariate-variance}, we design an $(\eps, \delta)$-DP algorithm that receives samples from $g' \in \cH_{\gamma}(g)$ where $g \in \UnitGauss$ and outputs a list of candidate values for the standard deviation, one of which approximates the standard deviation of $g$ with high probability.
Then, in Algorithm~\ref{alg:gaussian-decoder}, we use Algorithm~\ref{alg:univariate-mean} and Algorithm~\ref{alg:univariate-variance} to design an $(\eps, \delta)$-DP list-decoder for $\UnitGauss$.

\subsubsection{Estimating the variance}
We begin with a method to estimate the variance.
Algorithm~\ref{alg:univariate-variance} shows how to take a set of samples and output a list of standard deviations, one of which approximates the true standard deviation up to a factor of $2$.
\begin{algorithm}
\caption{{\fontfamily{cmtt}\selectfont Univariate-Variance-Decoder}$(\beta,\gamma,\eps,\delta,D)$.}
\label{alg:univariate-variance}
\DontPrintSemicolon
\SetKwInOut{Input}{Input}\SetKwInOut{Output}{Output}
\setstretch{1.35}
\Input{Parameters $\eps,\beta,\gamma \in (0,1)$, $\delta \in (0,1/n)$, and a dataset $D$}
\Output{Set of approximate standard deviations $ \wtilde{V} = \{\widetilde{\sigma}_{1},\dots,\widetilde{\sigma}_{L}\}$.}

$Y_{k} \gets |(X^{2k}-X^{2k-1})/\sqrt{2}|$ for $k\in[n]$. \tcp*{$X^{i}$s from Dataset $D = \{X^{1},\dots,X^{2n}\}$}\label{alg:univariate-variance-line2}

$D' \gets \{Y_{1},\dots,Y_{n}\}$.\label{alg:univariate-variance-line3}

Partition $\bR_{>0}$ into bins $\mathbf{B} = \{B_{i}\}_{i\in\bZ}$ where $B_{i} = (2^{i},2^{i+1}]$.\label{alg:univariate-variance-line4}

$\{\widetilde{p}_{i}\}_{i\in\bZ} \leftarrow \text{\stable}(\eps,\delta,(1-\gamma)^{2}/24,\beta/2,D',\mathbf{B})$.\label{alg:univariate-variance-line5}

$H \gets \{i\,:\, \widetilde{p}_i > (1-\gamma)^{2}/8\}$\label{alg:univariate-variance-line6}

If $|H| > 12/(1-\gamma)^{2}$ \textbf{fail} and return $\wtilde{V} = \emptyset$\label{alg:univariate-variance-line7}

$\wtilde{V} \gets \{2^{i+1}\,:\, i\in H\}$.\label{alg:univariate-variance-line8}

\textbf{Return} $\wtilde{V}$\label{alg:univariate-variance-line9}
\end{algorithm}

\begin{lemma}\LemmaName{univariate-variance}
Algorithm~\ref{alg:univariate-variance} is an $(\eps,\delta)$-DP algorithm such that for any $g=\cN(\mu,\sigma^{2})$ and $g'\in\cH_{\gamma}(g)$, when it is given parameters $\eps,\beta, \gamma \in (0,1)$, $\delta\in (0,1/n)$ and dataset $D$ of $2n$ i.i.d.\ samples from $g'$ as input, it outputs a set $\wtilde{V}$ of positive real numbers of size
\[
    |\wtilde{V}| \leq \frac{12}{(1-\gamma)^{2}}.
\]
Furthermore, with probability no less than $1-\beta$ there is an element $\wsigma \in \wtilde{V}$ such that $$\sigma \leq \wsigma < 2\sigma,$$ so long as $$n = \Omega\left(\frac{\log(1/\beta\delta)}{(1-\gamma)^{2}\eps}\right).$$ 
\end{lemma}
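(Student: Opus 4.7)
The plan is to follow the template of the proof of \Lemma{univariate-mean}, introducing one new ingredient, a symmetrization trick, to eliminate dependence on the unknown mean $\mu$. The symmetrization is carried out by the pairing on line~\ref{alg:univariate-variance-line2}: whenever both $X^{2k-1}$ and $X^{2k}$ are drawn from the clean component $g=\cN(\mu,\sigma^{2})$, the variable $Y_{k}=|X^{2k}-X^{2k-1}|/\sqrt{2}$ is distributed as $\sigma|Z|$ with $Z\sim\cN(0,1)$, completely independent of $\mu$. Each $X^{i}$ is drawn from $g$ with probability $1-\gamma$ independently, so the ``clean-pair'' event happens with probability $(1-\gamma)^{2}$ per pair, and the $Y_{k}$ are i.i.d.\ samples from a $(1-(1-\gamma)^{2})$-corrupted version of the half-normal $\sigma|\cN(0,1)|$. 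From here we can essentially reuse the histogram-based argument from the mean case, but with the corruption parameter $\gamma$ effectively replaced by $1-(1-\gamma)^{2}$.

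For privacy, the preprocessing map $D\mapsto D'$ has Hamming sensitivity $1$: changing one $X^{i}$ alters exactly one $Y_{k}$. Therefore the stable histogram run on $D'$ is still $(\eps,\delta)$-DP as a function of $D$, and lines~\ref{alg:univariate-variance-line6}--\ref{alg:univariate-variance-line9} are post-processing. The deterministic bound $|\wtilde{V}|\leq 12/(1-\gamma)^{2}$ is immediate from the failure check on line~\ref{alg:univariate-variance-line7}.

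For accuracy, let $j^{\star}$ be the unique integer with $\sigma\in(2^{j^{\star}},2^{j^{\star}+1}]$; then $\wsigma:=2^{j^{\star}+1}$ lies in $[\sigma,2\sigma)$, so it suffices to show $j^{\star}\in H$. Writing $p_{i}=\Pr[Y_{k}\in B_{i}]$ and $\overline{p}_{i}=|D'\cap B_{i}|/n$, the symmetrization gives
\[ p_{j^{\star}}\ \geq\ (1-\gamma)^{2}\cdot\Pr_{Z\sim\cN(0,1)}\!\left[|Z|\in(2^{j^{\star}}/\sigma,\,2^{j^{\star}+1}/\sigma]\right]. \]
The inner interval is of the form $(a,2a]$ with $a\in[1/2,1)$, and a direct Gaussian computation bounds its mass below by an absolute constant $c_{0}$; hence $p_{j^{\star}}\geq c_{0}(1-\gamma)^{2}$. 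A multiplicative Chernoff on the i.i.d.\ indicators $\mathbf{1}[Y_{k}\in B_{j^{\star}}]$ then yields $\overline{p}_{j^{\star}}>(1-\gamma)^{2}/6$ with probability $\geq 1-\beta/2$ provided $n=\Omega(\log(1/\beta)/(1-\gamma)^{2})$. Applying \Lemma{stable-histogram} with $\eta=(1-\gamma)^{2}/24$ gives, with probability $\geq 1-\beta/2$, $|\wtilde{p}_{i}-\overline{p}_{i}|\leq(1-\gamma)^{2}/24$ uniformly in $i$, provided $n=\Omega(\log(1/\beta\delta)/((1-\gamma)^{2}\eps))$. Union bounding, $\wtilde{p}_{j^{\star}}>(1-\gamma)^{2}/8$, so $j^{\star}\in H$ and $\wsigma\in\wtilde{V}$. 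In the same event, any $i\in H$ satisfies $\overline{p}_{i}>(1-\gamma)^{2}/8-(1-\gamma)^{2}/24=(1-\gamma)^{2}/12$, and summing over disjoint bins forces $|H|\leq 12/(1-\gamma)^{2}$, so the algorithm does not abort.

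The main obstacle is quantitative: one must verify that the constant $c_{0}$ from the Gaussian integral is comfortably large enough to absorb both the multiplicative Chernoff slack and the histogram error, uniformly over the worst-case position of $\sigma$ within its dyadic bin. Numerically, $\Pr[|Z|\in(a,2a]]\geq 0.27$ for $a\in[1/2,1)$, which exceeds $1/6+1/24=5/24$ and so the thresholds $(1-\gamma)^{2}/8$ and $(1-\gamma)^{2}/24$ chosen by the algorithm are consistent; the bookkeeping is routine but must be done carefully. A second, minor point is recognizing that the disjoint pairing keeps the $Y_{k}$ jointly i.i.d., so a single Chernoff bound suffices and no cross-correlation needs to be handled.
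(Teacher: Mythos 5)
Your proposal is correct and follows essentially the same route as the paper's proof: the same pairing/symmetrization to remove dependence on $\mu$ (the paper packages this as \Proposition{centered-samples-prob} via a coupling), the same lower bound on the mass of the dyadic bin containing $\sigma$ (\Proposition{gaussian-variance-bin} gives $\geq 1/4$, consistent with your $0.27$), and the same Chernoff plus stable-histogram plus union-bound bookkeeping with thresholds $(1-\gamma)^2/6$, $(1-\gamma)^2/24$, and $(1-\gamma)^2/8$. Your explicit remark that the pairing map $D\mapsto D'$ has Hamming sensitivity $1$, so that the stable histogram's privacy transfers to the original dataset, is a point the paper leaves implicit, and is a welcome bit of care.
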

The proof of \Lemma{univariate-variance} mirrors that of \Lemma{univariate-mean}.
Let $g = \cN(\mu, \sigma^2)$ and $g' \in \cH_{\gamma}(g)$.
Let $X, X' \sim g'$ and let $Y = |X - X'| / \sqrt{2}$.
For an integer $i$, let $p_i = \Prob[Y \in B_i]$ where $B_i = (2^i, 2^{i+1}]$.
Let $j$ be the (unique) integer such that $\sigma \in (2^j, 2^{j+1}]$.
\begin{claim}
    \ClaimName{univariate-var1}
    If $n = \Omega(\log(1/\beta) / (1-\gamma)^2)$ then $\overline{p}_j > (1-\gamma)^2/ 6$ with probability $1-\beta/2$.
\end{claim}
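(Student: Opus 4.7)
\textbf{Proof proposal for \Claim{univariate-var1}.} The plan is to mirror the structure of \Claim{univariate1}: first obtain a population lower bound $p_j = \Pr_{X,X'\sim g'}[Y\in B_j] \geq c\,(1-\gamma)^2$ for an absolute constant $c$, then promote this to the empirical statement via a multiplicative Chernoff bound. All the probabilistic structure for the first step comes from a standard ``clean pair'' coupling, and the concentration step is entirely routine once the mean is pinned down.

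For the first (structural) step, observe that $X^1,\dots,X^{2n}$ are i.i.d.\ from $g' = (1-\gamma) g + \gamma h$, so the pairs $(X^{2k-1},X^{2k})$ feeding into $Y_k$ are i.i.d.\ draws from $g'\otimes g'$. The event $E_k$ that \emph{both} $X^{2k-1}$ and $X^{2k}$ are drawn from the $g$-component has probability $(1-\gamma)^2$, and conditional on $E_k$ we have $X^{2k}-X^{2k-1}\sim \cN(0, 2\sigma^2)$, so $Y_k = |(X^{2k}-X^{2k-1})/\sqrt{2}| \sim |\cN(0,\sigma^2)|$. Hence
\[
p_j \;\geq\; (1-\gamma)^2 \cdot \Pr_{Z\sim\cN(0,\sigma^2)}\!\left[\,|Z|\in(2^j,2^{j+1}]\,\right].
\]

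The anti-concentration factor is the part that requires a small computation. Writing $r = 2^j/\sigma$, the assumption $\sigma\in(2^j,2^{j+1}]$ forces $r\in[1/2,1)$, and a change of variables gives $\Pr[|Z|\in(2^j,2^{j+1}]] = 2[\Phi(2r)-\Phi(r)]$ for the standard normal CDF $\Phi$. Differentiation shows this function is unimodal on $[1/2,1)$ with minima at the two endpoints, and both endpoint values are bounded below by some absolute constant $c_0 > 1/4$ (for instance, $2[\Phi(2)-\Phi(1)] > 0.27$). Therefore $p_j \geq c_0\,(1-\gamma)^2$.

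For the final step, the indicators $\mathbf{1}\{Y_k\in B_j\}$ are i.i.d.\ Bernoullis with mean $p_j$, so a standard multiplicative Chernoff bound (\Lemma{Chernoff}, as used in the analogous \Claim{univariate1}) shows that $\overline{p}_j \geq (1-\eta)\,p_j$ with probability at least $1-\beta/2$ provided $n = \Omega(\log(1/\beta)/p_j) = \Omega(\log(1/\beta)/(1-\gamma)^2)$. Choosing $\eta$ small enough that $(1-\eta)c_0 > 1/6$ (e.g.\ $\eta = 1/3$ works since $2c_0/3 > 1/6$) then yields $\overline{p}_j > (1-\gamma)^2/6$, which is the desired conclusion. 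The only non-cosmetic obstacle is the constant in the anti-concentration step; everything else is a direct transcription of the argument for \Claim{univariate1}.
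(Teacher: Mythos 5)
Your proposal is correct and follows the same line of reasoning as the paper. The paper factors the two computational ingredients into \Proposition{centered-samples-prob} (your ``clean pair'' coupling, giving $p_j \geq (1-\gamma)^2 \Pr_{Z\sim\cN(0,\sigma^2)}[|Z|\in B_j]$) and \Proposition{gaussian-variance-bin} (your anti-concentration calculation on $[1/2,1)$, giving the bound $\geq 1/4$), then applies \Lemma{Chernoff} exactly as you do with multiplicative error $1/3$; you have simply inlined those two propositions and arrived at the identical constants.
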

\begin{proof}
    Since, $X, X' \sim g'$ and $Y = |X - X'| / \sqrt{2}$, a straightforward calculation shows that $p_j \geq (1-\gamma)^2 / 4$ (see \Proposition{gaussian-variance-bin} and \Proposition{centered-samples-prob} for details).
    
    Next, a standard Chernoff bound (\Lemma{Chernoff}) implies that $|\overline{p}_j - p_j| <  p_j / 3$ with probability at least $1 - \beta/2$ provided $n \geq C \log(1/\beta) / (1-\gamma)^2$ for some constant $C > 0$.
    As $p_j \geq (1-\gamma)^2 / 4$ this implies $\overline{p}_j > (1-\gamma)^2 / 6$.
\end{proof}

\begin{claim}
    \ClaimName{univariate-var2}
    If $n = \Omega(\log(1/\beta \delta) / (1-\gamma)^2\eps)$ then with probability $1-\beta/2$, we have (i) $|\overline{p}_i - \wtilde{p}_i| \leq (1-\gamma)^2 / 24$ for all $i \in \bN$ and (ii) $|H| = |\{i \in \bN \,:\, \wtilde{p}_i > (1-\gamma)^2 / 8\}| \leq 12/(1-\gamma)^2$.
\end{claim}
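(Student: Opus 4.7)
The plan is to prove this claim by a direct parallel to the proof of \Claim{univariate2} from the warm-up section, since the only change is replacing the error parameter $1-\gamma$ with $(1-\gamma)^2$ (because after the pairing step in Lines~\ref{alg:univariate-variance-line2}--\ref{alg:univariate-variance-line3}, the effective ``corruption rate'' in the derived dataset $D'$ becomes $1-(1-\gamma)^2$ rather than $\gamma$).

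First, I would establish (i) by a direct invocation of \Lemma{stable-histogram}. The algorithm calls \stable{} on the dataset $D'$ with accuracy parameter $\eta = (1-\gamma)^2/24$, failure probability $\beta/2$, and privacy parameters $(\eps, \delta)$. Since $|D'| = n$, \Lemma{stable-histogram} guarantees $|\overline{p}_i - \wtilde{p}_i| \leq (1-\gamma)^2/24$ for all $i \in \bZ$ with probability at least $1-\beta/2$, provided that
\[
    n = \Omega\left(\frac{\log(1/\beta\delta)}{(1-\gamma)^2 \eps}\right),
\]
which matches the stated sample complexity.

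Next, I would derive (ii) as a deterministic consequence of (i) by the same contrapositive argument used in \Claim{univariate2}. Conditioning on the event that (i) holds, suppose $i \in H$, meaning $\wtilde{p}_i > (1-\gamma)^2/8$. Then by (i), $\overline{p}_i \geq \wtilde{p}_i - (1-\gamma)^2/24 > (1-\gamma)^2/8 - (1-\gamma)^2/24 = (1-\gamma)^2/12$. Because the $\overline{p}_i$'s are empirical fractions over the disjoint bins $\{B_i\}_{i \in \bZ}$ we have $\sum_{i \in \bZ} \overline{p}_i \leq 1$, so the number of indices $i$ for which $\overline{p}_i > (1-\gamma)^2/12$ is at most $12/(1-\gamma)^2$. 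Hence $|H| \leq 12/(1-\gamma)^2$. Combining this with (i), both statements hold simultaneously with probability at least $1-\beta/2$.

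There is no serious obstacle here; the only subtlety is bookkeeping: ensuring that the error parameter $(1-\gamma)^2/24$ fed into \stable{} is small enough that the ``heavy'' threshold $(1-\gamma)^2/8$ still leaves a gap of at least $(1-\gamma)^2/24$ above the threshold $(1-\gamma)^2/12$ implied by the normalization $\sum_i \overline{p}_i \leq 1$. The constants have been chosen precisely so that this gap works out, exactly mirroring the setup in \Claim{univariate2}.
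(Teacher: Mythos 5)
Your proof is correct and follows essentially the same route as the paper's: invoke the stable-histogram guarantee with $\eta = (1-\gamma)^2/24$ to get (i), then deduce (ii) deterministically from (i) via the normalization $\sum_i \overline{p}_i \leq 1$. The only difference is that you argue the implication ``$i \in H \Rightarrow \overline{p}_i > (1-\gamma)^2/12$'' directly rather than by contrapositive as the paper does, which is purely stylistic (and in fact your arithmetic $\frac{(1-\gamma)^2}{8} - \frac{(1-\gamma)^2}{24} = \frac{(1-\gamma)^2}{12}$ is stated more cleanly than the corresponding line in the paper, which has a small typo in the final bound).
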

\begin{proof}
    The first assertion directly follows from \Lemma{stable-histogram} with $\eta = (1-\gamma)^2 / 24$.
    In the event that $|\overline{p}_i - \wtilde{p}_i| \leq (1-\gamma)^2 / 24$, we now show that $|H| \leq 12/(1-\gamma)^2$.
    Note that it suffices to argue that if $i \in H$ then $\overline{p}_i > (1-\gamma)^2 / 12$.
    Since $\sum_{i \in \bN} \overline{p}_i = 1$, this implies that $|H| \leq 12/(1-\gamma)^2$.
    Indeed, we argue the contrapositive.
    If $\overline{p}_i \leq (1-\gamma)^2 / 12$ then $\widetilde{p}_{i} \leq\overline{p}_{i} + (1-\gamma)^2/24 \leq (1-\gamma)^2/12$ and, hence, $i \notin H$.
\end{proof}

Given \Claim{univariate-var1} and \Claim{univariate-var2}, we now prove \Lemma{univariate-variance}.
\begin{proof}[Proof of \Lemma{univariate-variance}]
We briefly prove that the algorithm is private before proceeding to the other assertions of the lemma.
\paragraph{Privacy.}
Line~\ref{alg:univariate-variance-line5} is the only part of the algorithm that looks at the data and it is $(\eps, \delta)$-DP by \Lemma{stable-histogram}.
The remainder of the algorithm can be viewed as post-processing (\Lemma{post-processing}) so does not affect the privacy.

\paragraph{Bound on $|\wtilde{V}|$.}
For the bound on $|\wtilde{V}|$, observe that if $|H| > 12/(1-\gamma)^2$ then the algorithm fails so $|\wtilde{V}| \leq 12/(1-\gamma)^2$ deterministically.

\paragraph{Accuracy.}
Let $g, g', \sigma$ be as defined in the statement of the lemma.
We now show that there exists $\wtilde{\sigma} \in \wtilde{V}$ such that $\wtilde{\sigma} \in [\sigma, 2\sigma)$.
Let $j$ be the unique integer such that $\sigma \in (2^j, 2^{j+1}]$.
For the remainder of the proof, we assume that $n = \Omega( \log(1/\beta \delta) / (1-\gamma)^2 \eps)$.

\Claim{univariate-var1} asserts that, with probability $1-\beta/2$, we have $\overline{p}_j > (1-\gamma)^2 / 6$.
\Claim{univariate-var2} asserts that, with probability $1-\beta/2$, $\wtilde{p}_j \geq \overline{p}_j - (1-\gamma)^2 / 24$ \emph{and} that $|H| \leq 12/(1-\gamma)^2$.
By a union bound, with probability $1-\beta$, we have that $\overline{p}_j > (1-\gamma)^2 /8$ and the algorithm does not fail.
This implies that $j \in H$ so $2^{j+1} \in \wtilde{V}$ and, by the choice of $j$, $\sigma \leq 2^{j+1} < 2\sigma$.
This completes the proof.
\end{proof}

\subsubsection{A list-decodable learner for univariate Gaussians}
Finally, in this this section, we use Algorithm~\ref{alg:univariate-mean} and Algorithm~\ref{alg:univariate-variance} to design a list-decodable learner for $\UnitGauss$.
The list-decodable learner is formally described in Algorithm~\ref{alg:gaussian-decoder}.

\begin{algorithm}
\caption{{\fontfamily{cmtt}\selectfont Univariate-Gaussian-Decoder}$(\alpha,\beta,\gamma,\eps,\delta,D)$.}
\label{alg:gaussian-decoder}
\DontPrintSemicolon
\SetKwInOut{Input}{Input}\SetKwInOut{Output}{Output}
\setstretch{1.35}
\Input{Parameters $\eps,\alpha,\beta,\gamma \in (0,1)$, $\delta \in (0,1/n)$ and a dataset $D$}
\Output{Set of approximate means $\what{M}$ and variances $\what{V}$.}

Set $T = 12/(1-\gamma)^2$\label{alg:decoder-line1}

Set $\eps ' = \eps/(2\sqrt{6T\log(2(T+1)/\delta)})$ and $\delta' = \delta/2(T+1)$\label{alg:decoder-line2}

Split $D$ into $D_{1},D_{2}$ where $|D_{1}| = n_{1}$, $|D_{2}| = n_2 = n-n_{1}$\label{alg:decoder-line3} \tcp*{$n_{1} = \Theta(\log(1/\beta\delta)/(1-\gamma)^{2}\eps)$.}

$\wtilde{V} \gets \text{{\fontfamily{cmtt}\selectfont Univariate-Variance-Decoder}}(\beta/2,\gamma,\eps/2,\delta/2,D_{1})$\label{alg:decoder-line4}

Initialize $\widehat{M} \gets \emptyset$

For $\wsigma_{i} \in \wtilde{V}$ \textbf{do}\label{alg:decoder-line5}

\quad $\wtilde{M}_{i}$ = \text{{\fontfamily{cmtt}\selectfont Univariate-Mean-Decoder}}$(\beta/2,\gamma,\eps ',\delta',\wsigma_{i},D_{2})$\label{alg:decoder-line6}

\quad $\widehat{M}_i \gets \{ \widetilde{\mu} + j \alpha \widetilde{\sigma}_i \,:\, \widetilde{\mu} \in \wtilde{M}_i,\, j \in \{0, \pm 1, \pm 2,\ldots, \pm \lceil 1/\alpha \rceil \}$\label{alg:decoder-line7net}

\quad $\what{M} \gets \what{M} \cup \what{M}_{i}$\label{alg:decoder-line8}

$C \gets \left\{\log_{2}(1+\alpha),2\log_{2}(1+\alpha),\dots,\lceil1/\log_{2}(1+\alpha)\rceil\cdot\log_{2}(1+\alpha)\right\}$ \label{alg:decoder-line9}

$\what{V} \gets \{\wsigma\cdot 2^{c-1} \::\: \wsigma\in \wtilde{V},\: c\in C\}$\label{alg:decoder-line10}

\textbf{Return} $\what{M}, \what{V}$\label{alg:decoder-line11}
\end{algorithm}
\begin{lemma}\LemmaName{univariate-gaussian}
Algorithm~\ref{alg:gaussian-decoder} is an $(\eps,\delta)$-DP algorithm such for any $g=\cN(\mu,\sigma^{2})$ and $g'\in\cH_{\gamma}(g)$, when it is given parameters $\eps,\alpha, \beta, \gamma \in (0,1)$, $\delta \in (0,1/n)$ and dataset $D$ of $n$ i.i.d.\ samples from $g'$ as inputs, it outputs a set $\what{M}$ of real numbers and a set $\what{V}$ of positive real numbers such that
\[
    |\what{M}| \leq \frac{144\cdot(2\cdot \lceil 1/\alpha\rceil+1)}{(1-\gamma)^{3}}\quad \text{and}\quad |\what{V}| \leq \frac{12\cdot\lceil\log_{1+\alpha}(2) \rceil}{(1-\gamma)^{2}}.
\]
Furthermore, with probability no less than $1-\beta$, we have the following:
\begin{enumerate}
    \item $\exists \widehat{\mu} \in \what{M}$ such that $|\widehat{\mu}-\mu| \leq\alpha\sigma$
    \item $\exists \widehat{\sigma} \in \what{V}$ such that $|\widehat{\sigma}-\sigma| \leq \alpha\sigma$
\end{enumerate}
so long as 
\[
n =\Omega\left(\frac{\log(1/\beta\delta)}{(1-\gamma)^{2}\eps}+\frac{\log(1/(1-\gamma)\beta\delta)\sqrt{\log(1/(1-\gamma)\delta)}}{(1-\gamma)^{2}\eps}\right) = \widetilde{\Omega}\left(\frac{\log^{3/2}(1/\beta\delta)}{(1-\gamma)^{2}\eps}\right).
\]
\end{lemma}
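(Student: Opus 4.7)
The plan is to separate the analysis into three phases matching the structure of the algorithm: (i) variance decoding, producing a coarse list $\wtilde V$; (ii) adaptive mean decoding, invoked once per candidate in $\wtilde V$ to produce sets $\wtilde M_i$; and (iii) net refinements producing $\what M$ and $\what V$. Privacy will follow by combining phases (i) and (ii) via advanced composition, with phase (iii) being post-processing. Accuracy will follow by tracing a single ``good'' standard deviation $\wsigma^{*}\in \wtilde V$ through the pipeline.

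For privacy, \Lemma{univariate-variance} gives that phase (i) is $(\eps/2, \delta/2)$-DP. The key observation is that $|\wtilde V| \leq T = 12/(1-\gamma)^2$ holds \emph{deterministically}, so phase (ii) is an adaptive composition of at most $T$ mean-decoder calls, each $(\eps', \delta')$-DP by \Lemma{univariate-mean}. The parameters $\eps' = \eps/(2\sqrt{6T\log(2(T+1)/\delta)})$ and $\delta' = \delta/(2(T+1))$ are chosen precisely so that part~(2) of \Lemma{composition} yields $(\eps/2, \delta/2)$-DP for the whole loop (since $T\delta' + \delta/(2(T+1)) \leq \delta/2$). Basic composition of phases (i) and (ii), plus post-processing (\Lemma{post-processing}) for phase (iii), then delivers $(\eps,\delta)$-DP.

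For accuracy, I first invoke \Lemma{univariate-variance} on $D_1$ with failure budget $\beta/2$ to obtain some $\wsigma^{*} \in \wtilde V$ with $\sigma \leq \wsigma^{*} < 2\sigma$. Conditioning on this event, the particular mean-decoder call using $\wsigma^{*}$ satisfies the preconditions of \Lemma{univariate-mean}, so with failure budget $\beta/2$ it returns $\widetilde\mu \in \wtilde M_{i^{*}}$ with $|\widetilde\mu - \mu| \leq \sigma$; a union bound then gives both events with probability $1-\beta$. The net $\what M_{i^{*}}$ is an $\alpha\wsigma^{*}$-spaced grid of half-width $\lceil 1/\alpha\rceil \cdot \alpha\wsigma^{*} \geq \wsigma^{*} \geq \sigma$ centered at each $\widetilde\mu$, so some $\widehat\mu \in \what M_{i^{*}} \subseteq \what M$ lies within $\alpha\wsigma^{*}/2 \leq \alpha\sigma$ of $\mu$. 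For the variance, write $\wsigma^{*} = 2^{c^{*}}\sigma$ with $c^{*} \in (0,1]$; the set $C$ is an additive $\log_2(1+\alpha)$-net of $(0,1]$, so some $c \in C$ makes $\widehat\sigma := \wsigma^{*} \cdot 2^{c-1}$ satisfy $\widehat\sigma/\sigma \in (1/(1+\alpha), 1+\alpha)$, whence $|\widehat\sigma - \sigma| \leq \alpha\sigma$. The list-size bounds are immediate from $|\wtilde V| \leq T$, $|\wtilde M_i| \leq 12/(1-\gamma)$, and the cardinalities of the two refinements.

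The main (mild) obstacle is the sample-complexity bookkeeping: \Lemma{univariate-variance} requires $n_1 = \Theta(\log(1/\beta\delta)/(1-\gamma)^2 \eps)$ and each mean-decoder call requires $n_2 = \Theta(\log(1/\beta\delta')/(1-\gamma)\eps')$. Substituting $1/\eps' \sim \sqrt{\log(1/(1-\gamma)\delta)}/((1-\gamma)\eps)$ and $\delta' \sim (1-\gamma)^2\delta$ into the formula for $n_2$ and simplifying reproduces the stated $\widetilde\Omega(\log^{3/2}(1/\beta\delta)/(1-\gamma)^2 \eps)$ bound. Once the composition accounting is in place, the proof is conceptually straightforward; the only nontrivial arithmetic is pulling the $\log^{3/2}$ factor out of the substitution cleanly.
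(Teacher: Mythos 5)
Your proposal is correct and mirrors the paper's own proof almost step for step: privacy via basic composition of the variance-decoder stage with the advanced-composed loop of mean-decoder calls (exploiting the deterministic bound $|\wtilde V|\leq 12/(1-\gamma)^2$) plus post-processing; accuracy by tracing a good $\wsigma^*\in[\sigma,2\sigma)$ into one mean-decoder call and then through the two net refinements, with a union bound on the two $\beta/2$ failure events; and sample complexity by summing $n_1$, $n_2$ and substituting the values of $\eps',\delta'$. The only cosmetic discrepancy is that $c^*=\log_2(\wsigma^*/\sigma)\in[0,1)$ rather than $(0,1]$, but since what matters is that $1-c^*\in(0,1]$ is covered by the net $C$, the argument is unaffected.
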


Before we prove the lemma, we make a few simple observations.
Fix $g = \cN(\mu, \sigma^2)$ and $g' \in \cH_{\gamma}(g)$.
We assume that the algorithm receives $D \sim (g')^{2n}$ as input.
\begin{claim}
    \ClaimName{univariate-decoder1}
    If $n_1 = \Omega(\log(1/\beta\delta) / (1-\gamma)^2 \eps)$ then with probability $1 - \beta/2$,
    (i) there exists $\wtilde{\sigma} \in \wtilde{V}$ such that $\wtilde{\sigma} \in [\sigma, 2\sigma)$ and
    (ii) there exists $\widehat{\sigma} \in \what{V}$ such that that $|\widehat{\sigma} - \sigma| \leq \alpha \sigma$.
\end{claim}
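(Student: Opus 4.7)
The claim decouples cleanly: part (i) is a direct consequence of the accuracy guarantee of Algorithm~\ref{alg:univariate-variance}, and part (ii) is a deterministic post-processing fact about the geometric grid that comprises $\what V$. I would prove them in sequence, since (ii) only depends on the event established in (i).

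\emph{Part (i).} Line~\ref{alg:decoder-line4} of Algorithm~\ref{alg:gaussian-decoder} invokes {\fontfamily{cmtt}\selectfont Univariate-Variance-Decoder} on $D_1$ with parameters $(\beta/2,\gamma,\eps/2,\delta/2)$. Applying \Lemma{univariate-variance} directly: with probability at least $1-\beta/2$, the output $\wtilde V$ contains some $\wtilde\sigma$ with $\sigma \leq \wtilde\sigma < 2\sigma$, provided the number of samples fed in is $\Omega(\log(1/\beta\delta)/(1-\gamma)^{2}\eps)$. The parameter halving and the factor-of-two from pair-processing inside the subroutine are absorbed in the $\Omega(\cdot)$, so $n_1 = \Omega(\log(1/\beta\delta)/(1-\gamma)^{2}\eps)$ suffices.

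\emph{Part (ii).} Condition on the event in (i) and fix $\wtilde\sigma \in \wtilde V$ with $\sigma \leq \wtilde\sigma < 2\sigma$; equivalently set $r := \sigma/\wtilde\sigma \in (1/2, 1]$. By Lines~\ref{alg:decoder-line9}--\ref{alg:decoder-line10}, $\what V$ contains the set
\[
\{\wtilde\sigma \cdot 2^{c-1} : c \in C\} = \{\wtilde\sigma(1+\alpha)^j/2 : j=1,\ldots,\lceil 1/\log_2(1+\alpha)\rceil\},
\]
a geometric progression of common ratio $1+\alpha$, with smallest element $\wtilde\sigma(1+\alpha)/2$ and largest element at least $\wtilde\sigma$ (since $(1+\alpha)^{1/\log_2(1+\alpha)} = 2$). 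I would now split into two cases. If $r \geq (1+\alpha)/2$, take $\widehat\sigma$ to be the largest grid element that is $\leq \sigma$; the next element of the grid still lies in $\what V$ and exceeds $\sigma$, hence $\widehat\sigma \in [\sigma/(1+\alpha), \sigma]$, giving $|\widehat\sigma-\sigma| \leq \alpha\sigma/(1+\alpha) \leq \alpha \sigma$. If $r < (1+\alpha)/2$, take $\widehat\sigma = \wtilde\sigma(1+\alpha)/2$; then $|\widehat\sigma-\sigma| \leq \wtilde\sigma \alpha/2 < \alpha\sigma$ using $\sigma > \wtilde\sigma/2$.

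\emph{Main difficulty.} There is no real technical obstacle here: part (i) is a black-box invocation of the lemma for the previous algorithm, and part (ii) is routine arithmetic. The only point that requires brief care is checking that the geometric grid truly covers the full range $(1/2,1]$ of $\sigma/\wtilde\sigma$ at multiplicative resolution $1+\alpha$, which reduces to the inequality $\lceil 1/\log_2(1+\alpha)\rceil \cdot \log_2(1+\alpha) \geq 1$ — immediate from the definition of the ceiling.
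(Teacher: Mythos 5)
Your proof is correct and takes essentially the same approach as the paper: part (i) is a direct black-box invocation of \Lemma{univariate-variance}, and part (ii) verifies that the grid $\what V$ resolves $[\sigma,2\sigma)$ at multiplicative granularity $1+\alpha$. The only presentational difference is that the paper phrases the net argument additively in $\log_2$-scale --- observing that $C$ is a $\log_2(1+\alpha)$-net of $[0,1]$ --- while you unroll the geometric progression explicitly and split into two cases; both boil down to the same inequality $\alpha/(1+\alpha)\le\alpha$ (equivalently $1/(1+\alpha)\ge 1-\alpha$).
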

\begin{proof}
    \Lemma{univariate-variance} directly implies that in line~\ref{alg:decoder-line4}, with probability $1 - \beta/2$, there is some $\wtilde{\sigma} \in \wtilde{V}$ such that $\wtilde{\sigma} \in [\sigma, 2\sigma)$.
    
    For the final assertion, suppose that $\wtilde{\sigma} \in [\sigma, 2\sigma)$.
    In particular, $\log_2(2\sigma / \wtilde{\sigma}) \in (0, 1]$.
    Note that $C$ is $\log_2(1+\alpha)$-net of the interval $[0,1]$.
    Hence, there exists some $c \in C$ such that $|c - \log_2(2\sigma / \wtilde{\sigma})| \leq \log_2(1+\alpha)$.
    For such a value of $c$, we have $(\wtilde{\sigma} / \sigma) \cdot 2^{c-1} \in \left[ 1/(1+\alpha), 1+\alpha\right]$, which upon rearranging gives $\wtilde{\sigma} 2^{c-1} \in [ \sigma/(1+\alpha), \sigma(1+\alpha)]$.
    As $1/(1+\alpha) \geq 1-\alpha$, this shows that $|\wtilde{\sigma} 2^{c-1} - \sigma| \leq \alpha \sigma$.
    This completes the proof since $\wtilde{\sigma} 2^{c-1} \in \what{V}$.
\end{proof}
\begin{claim}
    \ClaimName{univariate-decoder2}
    Let $\eps', \delta'$ be as defined in Algorithm~\ref{alg:gaussian-decoder}.
    Suppose that there exists $\wtilde{\sigma}_{i} \in \wtilde{V}$ such that $\wtilde{\sigma}_{i} \in [\sigma, 2\sigma)$.
    If $n_2 = \Omega(\log(1/\beta \delta') / (1-\gamma) \eps')$ then with probability $1-\beta/2$ there exists $\widehat{\mu} \in \what{M}$ such that $|\widehat{\mu}-\mu| \leq \alpha \sigma$.
\end{claim}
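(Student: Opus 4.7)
The plan is to isolate the single iteration of the loop in line~\ref{alg:decoder-line5} corresponding to the good candidate standard deviation $\wtilde{\sigma}_i \in [\sigma, 2\sigma)$ guaranteed by the hypothesis, apply \Lemma{univariate-mean} to that iteration, and then argue that the $\alpha\wtilde{\sigma}_i$-net that is added around each returned mean in line~\ref{alg:decoder-line7net} is fine enough and wide enough to ensure that some $\widehat{\mu} \in \what{M}_i \subseteq \what{M}$ is within $\alpha\sigma$ of the true mean $\mu$.

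More concretely, the plan proceeds in three steps. First, I would fix the index $i$ with $\wtilde{\sigma}_i \in [\sigma, 2\sigma)$. Since the call in line~\ref{alg:decoder-line6} runs {\fontfamily{cmtt}\selectfont Univariate-Mean-Decoder} on $D_2$ with privacy parameters $(\eps', \delta')$, failure probability parameter $\beta/2$, corruption parameter $\gamma$, and scale parameter $\wtilde{\sigma}_i$ which lies in $[\sigma, 2\sigma)$, I can invoke \Lemma{univariate-mean} directly. That lemma guarantees, provided $n_2 = \Omega(\log(1/\beta\delta')/(1-\gamma)\eps')$, that with probability at least $1 - \beta/2$ there is some $\wtilde{\mu} \in \wtilde{M}_i$ with $|\wtilde{\mu} - \mu| \leq \sigma$.

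Second, I would show that the discretization in line~\ref{alg:decoder-line7net} captures $\mu$ to within $\alpha\sigma$. The set $\what{M}_i$ contains the points $\wtilde{\mu} + j\alpha\wtilde{\sigma}_i$ for $j \in \{0, \pm 1, \ldots, \pm \lceil 1/\alpha \rceil\}$, which forms a grid of spacing $\alpha\wtilde{\sigma}_i$ centered at $\wtilde{\mu}$ with total half-width $\lceil 1/\alpha \rceil \alpha \wtilde{\sigma}_i \geq \wtilde{\sigma}_i \geq \sigma$. Since $|\wtilde{\mu} - \mu| \leq \sigma$, the point $\mu$ lies within this grid's range, so there exists $j$ with $|\wtilde{\mu} + j\alpha\wtilde{\sigma}_i - \mu| \leq \alpha\wtilde{\sigma}_i/2$. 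Using $\wtilde{\sigma}_i < 2\sigma$, this is at most $\alpha\sigma$, and the corresponding point lies in $\what{M}_i \subseteq \what{M}$.

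Finally, I note that only the single iteration indexed by the good $i$ needs to succeed to produce a suitable $\widehat{\mu}$, so no union bound over $\wtilde{V}$ is required for correctness; the failure probability is exactly the $\beta/2$ from the one application of \Lemma{univariate-mean}. The subtle point that warrants care is not a difficult one but rather the bookkeeping around the net: one must verify both the spacing (so that the discretization error $\alpha\wtilde{\sigma}_i/2$ is at most $\alpha\sigma$, which uses $\wtilde{\sigma}_i < 2\sigma$) and the range (so that $\mu$ lies inside the grid, which uses $\wtilde{\sigma}_i \geq \sigma$ combined with $\lceil 1/\alpha \rceil \alpha \geq 1$). Both of these follow from the two-sided bound $\wtilde{\sigma}_i \in [\sigma, 2\sigma)$, which is precisely why the variance decoder was designed to produce a constant-factor approximation on both sides.
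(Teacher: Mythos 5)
Your proof is correct and takes essentially the same approach as the paper: invoke \Lemma{univariate-mean} for the iteration with the good $\wtilde{\sigma}_i \in [\sigma, 2\sigma)$ to obtain $\wtilde{\mu}$ with $|\wtilde{\mu} - \mu| \leq \sigma$, then observe that the $\alpha\wtilde{\sigma}_i$-spaced grid built in line~\ref{alg:decoder-line7net} covers the interval $[\wtilde{\mu} - \sigma, \wtilde{\mu} + \sigma]$ and has discretization error at most $\alpha\wtilde{\sigma}_i/2 < \alpha\sigma$. Your added bookkeeping verifying the range condition (half-width $\lceil 1/\alpha \rceil \alpha \wtilde{\sigma}_i \geq \wtilde{\sigma}_i \geq \sigma$) is a detail the paper leaves implicit, but the argument is the same.
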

\begin{proof}
    The condition that there exists $\wtilde{\sigma}_{i} \in \wtilde{V}$ such that $\wtilde{\sigma}_{i} \in [\sigma, 2\sigma)$ implies that
    one of the runs of \text{{\fontfamily{cmtt}\selectfont Univariate-Mean-Decoder}} on line~\ref{alg:decoder-line6} uses $\wsigma_{i} \in[\sigma, 2\sigma)$.
    The guarantee of \Lemma{univariate-mean} shows that with probability $1-\beta/2$, there is some $ \widetilde{\mu} \in \widetilde{M}_i$ satisfying $|\widetilde{\mu}-\mu| \leq \sigma$. 
    Finally, on line~\ref{alg:decoder-line7net}, the algorithm constructs $\what{M}_i$ which is a $(\alpha\wsigma_{i}/2)$-net of the interval $[\widetilde{\mu} - \wsigma_{i}, \widetilde{\mu} + \wsigma_{i}] \supset [\widetilde{\mu} - \sigma, \widetilde{\mu} + \sigma]$.
    Hence, there exists $\widehat{\mu} \in \widehat{M}_i$ such that $|\widehat{\mu}-\mu| \leq \alpha \wsigma / 2 < \alpha \sigma$ where the latter inequality used that $\wsigma < 2\sigma$.
    Since $\widehat{M}_i \subset \widehat{M}$, this implies the claim.
\end{proof}

\begin{proof}[Proof of \Lemma{univariate-gaussian}]
~
\paragraph{Privacy.}
We first prove that the algorithm is $(\eps,\delta)$-DP. By \Lemma{univariate-mean}, line~\ref{alg:decoder-line4} satisfies $(\eps/2,\delta/2)$-DP. The loop on line~\ref{alg:decoder-line5} runs at most $12/(1-\gamma)^2$ times since $|\wtilde{V}| \leq 12/(1-\gamma)^2$ (see \Lemma{univariate-variance}). So, by our choice of $\eps'$, $\delta'$ (line~\ref{alg:decoder-line2}) and advanced composition (\Lemma{composition}), all the iterations of line~\ref{alg:decoder-line6} collectively satisfy $(\eps/2,\delta/2)$-DP. No subsequent part of the algorithm accesses the data so by basic composition (\Lemma{composition}) and post processing~(\Lemma{post-processing}), the entire algorithm is $(\eps, \delta)$-DP.

\paragraph{Bound on $|\what{M}|$ and $|\what{V}|$.}
We now prove the claimed upper bounds on the sizes of $\what{M}$ and $\what{V}$.
First, we have $|\wtilde{V}| \leq 12/(1-\gamma)^2$ by \Lemma{univariate-variance}.
Since $|C|= \lceil 1/\log_2(1+\alpha) \rceil = \lceil \log_{1+\alpha}(2) \rceil$, this gives $|\what{V}| = |\wtilde{V}| \cdot |C| \leq 12 \cdot \lceil \log_{1+\alpha}(2) \rceil / (1-\gamma)^2$.
Next, we have that each $|\wtilde{M}_i| \leq 12/(1-\gamma)$ in Line~\ref{alg:decoder-line7net} by \Lemma{univariate-mean}, so $|\what{M}_i| \leq 12 \cdot (2\cdot \lceil 1/\alpha \rceil + 1) /(1-\gamma)$.
Hence, $|\what{M}| \leq |\wtilde{V}| \cdot 12\cdot (2\cdot \lceil 1/\alpha \rceil + 1)/(1-\gamma)  \leq 144\cdot (2\cdot \lceil 1/\alpha \rceil + 1)/(1-\gamma)^3 $.

\paragraph{Existence of $\widehat{\mu}$ and $\widehat{\sigma}$.}
\Claim{univariate-decoder1} asserts that with probability $1-\beta/2$, there is $\wtilde{\sigma} \in \wtilde{V}$ such that $\wsigma \in [\sigma, 2\sigma)$ and that there exists $\widehat{\sigma} \in \what{V}$ such that $|\widehat{\sigma} - \sigma| \leq \alpha \sigma$.
The latter statement is the bound that we asserted for $\widehat{\sigma}$ in the statement of the lemma.

Next, conditioning on the event that there exists $\wtilde{\sigma} \in \wtilde{V}$ such that $\wsigma \in [\sigma, 2\sigma)$,
\Claim{univariate-decoder2} implies that with probability $1-\beta/2$, there is some $\widehat{\mu} \in \what{M}$ such that $|\widehat{\mu} - \mu| \leq \alpha \sigma$.

To conclude, taking a union bound shows that with probability $1-\beta$, there exists $\widehat{\mu} \in \what{M}, \widehat{\sigma} \in \what{V}$ satisfying $|\widehat{\mu} - \mu| \leq \alpha \sigma$ and $|\widehat{\sigma} - \sigma| \leq \alpha \sigma$.

\paragraph{Sample complexity.}
Finally, we argue about the sample complexity.
For \Claim{univariate-decoder1}, we needed $n_1 = \Omega( \log(1/\beta \delta) / (1-\gamma)^2 \eps)$ samples and for \Claim{univariate-decoder2}, we needed $n_2 = \Omega(\log(1/\beta \delta') / (1-\gamma) \eps')$ samples.
Adding $n_1, n_2$ and plugging in the values for $\eps', \delta'$ as defined in Algorithm~\ref{alg:gaussian-decoder} gives the claimed bound on the number of samples required.
\end{proof}

\begin{corollary}\CorollaryName{univariate-list-decoder}
For any $\eps\in (0,1)$ and $\delta\in (0,1/n)$, there is an $(\eps,\delta)$-DP $L$-list-decodable learner for $\UnitGauss$ where 
\[
L = O\left( \frac{1}{(1-\gamma)^5 \alpha^2} \right),
\] 
and the algorithm uses 
\[
m_{\textsc{List}}(\alpha,\beta,\gamma,\eps,\delta) = O\left(\frac{\log(1/\beta\delta)}{(1-\gamma)^{2}\eps}+\frac{\log(1/(1-\gamma)\beta\delta)\sqrt{\log(1/(1-\gamma)\delta)}}{(1-\gamma)^{2}\eps}\right) = \widetilde{O}\left(\frac{\log^{3/2}(1/\beta\delta)}{(1-\gamma)^{2}\eps}\right)
\] 
samples.
\end{corollary}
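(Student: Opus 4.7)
The plan is to simply post-process the output of Algorithm~\ref{alg:gaussian-decoder}. Concretely, I would run {\fontfamily{cmtt}\selectfont Univariate-Gaussian-Decoder}$(\alpha',\beta,\gamma,\eps,\delta,D)$ with a rescaled accuracy parameter $\alpha' = c\alpha$ for a small absolute constant $c > 0$ to be determined, obtain the sets $\what{M}$ and $\what{V}$, and return the candidate family
\[
\what{\cF} = \{\cN(\widehat{\mu}, \widehat{\sigma}^2) \::\: \widehat{\mu} \in \what{M},\; \widehat{\sigma} \in \what{V}\}.
\]

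For privacy, the only data-dependent step is the call to Algorithm~\ref{alg:gaussian-decoder}, which is $(\eps,\delta)$-DP by \Lemma{univariate-gaussian}. Forming $\what{\cF}$ from $\what{M}$ and $\what{V}$ is pure post-processing, so \Lemma{post-processing} ensures the overall learner remains $(\eps,\delta)$-DP. For the list-size bound, \Lemma{univariate-gaussian} gives $|\what{M}| = O(1/((1-\gamma)^3\alpha))$ and, using $\log_{1+\alpha}(2) = \ln 2 / \ln(1+\alpha) = \Theta(1/\alpha)$ for $\alpha \in (0,1)$, also $|\what{V}| = O(1/((1-\gamma)^2 \alpha))$. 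Multiplying yields $|\what{\cF}| \leq |\what{M}|\cdot |\what{V}| = O(1/((1-\gamma)^5\alpha^2))$, matching the claimed $L$.

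For accuracy, condition on the high-probability event of \Lemma{univariate-gaussian}, under which there exist $\widehat{\mu} \in \what{M}$ and $\widehat{\sigma} \in \what{V}$ with $|\widehat{\mu}-\mu| \leq \alpha'\sigma$ and $|\widehat{\sigma}-\sigma|\leq \alpha'\sigma$. Now I would invoke the standard bound on the total variation distance between two univariate Gaussians (the same \Proposition{TV-univariate-gaussians} used in the proof of \Corollary{univariate-list-decoder-uniform}), which converts parameter closeness of the form $|\widehat{\mu}-\mu|, |\widehat{\sigma}-\sigma| \leq \alpha'\sigma$ into a bound $\TV(\cN(\mu,\sigma^2), \cN(\widehat{\mu}, \widehat{\sigma}^2)) = O(\alpha')$; choosing $c$ small enough makes this at most $\alpha$. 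Since the lemma's event has probability at least $1-\beta$, we get $\TV(g, \what{\cF}) \leq \alpha$ with probability $1-\beta$, as required.

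Finally, the sample complexity is inherited directly from \Lemma{univariate-gaussian} applied at accuracy $\alpha'$; as $\alpha' = \Theta(\alpha)$ and the sample complexity in \Lemma{univariate-gaussian} does not depend on $\alpha$, the bound stated in the corollary follows unchanged. There is no real obstacle here: the nontrivial work is already encapsulated in Lemma~\ref{lem:univariate-gaussian}, and the only care needed is to pick the rescaling constant so that the parameter-error guarantees translate through the Gaussian TV bound into the target accuracy $\alpha$.
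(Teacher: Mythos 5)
Your proposal matches the paper's proof essentially step for step: run \texttt{Univariate-Gaussian-Decoder}, form the Cartesian product of candidate Gaussians, and invoke \Lemma{univariate-gaussian} with post-processing for privacy, the size bounds for $L$, and \Proposition{TV-univariate-gaussians} for accuracy. The only cosmetic difference is your rescaling $\alpha' = c\alpha$, which is unnecessary here because \Proposition{TV-univariate-gaussians} yields $\TV \leq \alpha$ exactly (not merely $O(\alpha)$) when the parameter errors are at most $\alpha\sigma$; the one thing your rescaling does not address, and which the paper handles via a footnote in \Corollary{univariate-list-decoder-uniform}, is that the proposition requires $\alpha \leq 2/3$, so for larger target accuracy one simply caps at $2/3$.
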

\begin{proof}
The algorithm is simple; we run {\fontfamily{cmtt}\selectfont Univariate-Gaussian-Decoder}$(\alpha,\beta,\eps,\delta,\gamma,D)$ and obtain the sets $\what{M}$ and $\what{V}$. We then output $\what{\cF} = \{\cN(\widehat{\mu},\widehat{\sigma})\: : \: \widehat{\mu}\in \what{M},\: \widehat{\sigma}\in \what{V}\}$. The algorithm is $(\eps,\delta)$-DP by the guarantee of \Lemma{univariate-gaussian} and post processing (\Lemma{post-processing}). We have from the guarantee of \Lemma{univariate-gaussian} that 
\[
|\what{\cF}| = |\what{M}|\cdot|\what{V}| \leq \left(\frac{1728}{(1-\gamma)^{5}}\right)\cdot\left\lceil\log_{1+\alpha}(2)\right\rceil\cdot(2\left\lceil 1/\alpha\right\rceil+1).
\]
Note that $\log_{1+\alpha}(2) = \frac{\ln(2)}{\ln(1+\alpha)} \leq \frac{2\ln(2)}{\alpha}$ where the last inequality follows from the inequality $\ln(1+x) \geq x/2$ valid for $x \in [0,1]$.
This gives the claimed bound that $L = |\what{\cF}| = O\left( \frac{1}{(1-\gamma)^5 \alpha^2} \right)$.

For any $g\in\UnitGauss$ and $g'\in\cH_{\gamma}(g)$, given $n$ samples from $g'$ as input, we have from the guarantee of \Lemma{univariate-gaussian} and \Proposition{TV-univariate-gaussians} that the algorithm outputs $\what{\cF}$ satisfying $\TV(g,\what{\cF}) \leq \alpha$ so long as 
\[
n = \Omega\left(\frac{\log(1/\beta\delta)}{(1-\gamma)^{2}\eps}+\frac{\log(1/(1-\gamma)\beta\delta)\sqrt{\log(1/(1-\gamma)\delta)}}{(1-\gamma)^{2}\eps}\right) = \widetilde{\Omega}\left(\frac{\log^{3/2}(1/\beta\delta)}{(1-\gamma)^{2}\eps}\right).
\]
This proves the corollary.
\end{proof}

We can now use \Corollary{univariate-list-decoder} and \Theorem{reduction} to immediately get the following Theorem.
\begin{theorem}\TheoremName{univariate-pac-learner}
For any $\eps\in(0,1)$ and $\delta\in(0,1/n)$, there is an $(\eps,\delta)$-DP PAC learner for \emph{\kmix}$\left(\UnitGauss\right)$ that uses 
\begin{align*}
m(\alpha,\beta,\eps,\delta)&=\widetilde{O}\left(\frac{k^{2}\log^{3/2}(1/\beta\delta)}{\alpha^{2}\eps}\right)
\end{align*}
samples.
\end{theorem}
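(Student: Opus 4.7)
The proof is essentially a direct application of the reduction in \Theorem{reduction} to the private list-decodable learner for univariate Gaussians established in \Corollary{univariate-list-decoder}. The plan is to invoke \Theorem{reduction} with $\cF = \UnitGauss$ and privacy budget $(\eps, \delta)$, using \Corollary{univariate-list-decoder} as the required $(\eps/2, \delta)$-DP list-decodable learner, and then to verify that the resulting sample complexity simplifies to the claimed $\widetilde{O}(k^2 \log^{3/2}(1/\beta\delta)/\alpha^2 \eps)$.

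First I would instantiate the list-decoding subroutine. \Theorem{reduction} calls $\cA_{\textsc{List}}$ with parameters $(\alpha/18, \beta/2k, 1-\alpha/18k, \eps/2, \delta)$, so $1-\gamma = \alpha/(18k)$. Plugging this into the sample bound of \Corollary{univariate-list-decoder} gives
\[
m_{\textsc{List}}\!\left(\tfrac{\alpha}{18},\tfrac{\beta}{2k},1-\tfrac{\alpha}{18k},\tfrac{\eps}{2},\delta\right) = \widetilde{O}\!\left(\frac{k^{2}\log^{3/2}(k/\beta\delta)}{\alpha^{2}\eps}\right) = \widetilde{O}\!\left(\frac{k^{2}\log^{3/2}(1/\beta\delta)}{\alpha^{2}\eps}\right),
\]
where the polylogarithmic factors in $k$ and $1/\alpha$ are absorbed into $\widetilde{O}$.

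Second I would handle the additive hypothesis-selection cost from \Theorem{reduction}. \Corollary{univariate-list-decoder} guarantees a list of size $L = O(1/((1-\gamma)^{5}\alpha^{2})) = O(k^{5}/\alpha^{7})$, so $\log(Lk/\alpha) = O(\log(k/\alpha))$. The additive term then becomes
\[
O\!\left(\frac{k\log(k/\alpha)+\log(1/\beta)}{\alpha^{2}}+\frac{k\log(k/\alpha)+\log(1/\beta)}{\alpha\eps}\right) = \widetilde{O}\!\left(\frac{k+\log(1/\beta)}{\alpha^{2}}+\frac{k+\log(1/\beta)}{\alpha\eps}\right),
\]
which is strictly dominated by the list-decoding term above. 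Summing the two contributions yields the claimed $\widetilde{O}(k^{2}\log^{3/2}(1/\beta\delta)/\alpha^{2}\eps)$ bound, and privacy is inherited directly from \Theorem{reduction}.

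There is really no obstacle here, as all the technical work was already carried out in \Theorem{reduction} and \Corollary{univariate-list-decoder}. The only point requiring mild care is the bookkeeping when substituting $1-\gamma = \alpha/(18k)$: the factor $(1-\gamma)^{-2}$ from the list-decoder contributes the $k^{2}/\alpha^{2}$ blow-up, and one should check that the $\log(1/(1-\gamma))$ terms inside the $\widetilde{O}(\cdot)$ of \Corollary{univariate-list-decoder} contribute only polylogarithmic factors in $k/\alpha$ that are absorbed into the $\widetilde{O}$ notation.
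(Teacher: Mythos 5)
Your proposal is correct and follows exactly the paper's route: the paper obtains \Theorem{univariate-pac-learner} by directly combining \Corollary{univariate-list-decoder} with \Theorem{reduction}, and your bookkeeping (substituting $1-\gamma = \alpha/18k$, noting $L = O(k^5/\alpha^7)$ so $\log(Lk/\alpha) = O(\log(k/\alpha))$, and observing the hypothesis-selection term is dominated) is precisely the calculation the paper leaves implicit.
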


\section{Learning Mixtures of Axis-Aligned Gaussians}\label{sec:multivariate}

In this section, we prove the following result, which is a formal version of \Theorem{main-informal}.

\begin{theorem}\TheoremName{multivariate-pac-learner} For any $\eps\in(0,1)$ and $\delta\in(0,1/n)$, there is an $(\eps,\delta)$-DP PAC learner for \emph{\kmix}$\left(\dGauss\right)$ that uses 
\begin{align*}
m(\alpha,\beta,\eps,\delta)&=\widetilde{O}\left(\frac{k^{2}d\log^{3/2}(1/\beta\delta)}{\alpha^{2}\eps}\right)
\end{align*}
samples.
\end{theorem}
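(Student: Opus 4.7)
The plan is to reduce to \Theorem{reduction} by constructing an $(\eps/2, \delta)$-DP list-decodable learner for $\dGauss$. The key structural observation is that every $f \in \dGauss$ factors as a product $f = \prod_{j=1}^d f_j$ of univariate Gaussians, and the projection onto coordinate $j$ of any $g' \in \cH_\gamma(f)$ remains a $\gamma$-corrupted version of $f_j$. This lets me lift the univariate list-decoder {\fontfamily{cmtt}\selectfont Univariate-Gaussian-Decoder} (\Lemma{univariate-gaussian}) to the multivariate setting coordinate-by-coordinate.

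Concretely, given a dataset $D$, for each $j \in [d]$ I would project $D$ onto the $j$-th coordinate to form $D_j$ and run the univariate list-decoder on $D_j$ with per-coordinate accuracy $\alpha'' = \Theta(\alpha/\sqrt{d})$, producing candidate sets $\what{M}_j$ and $\what{V}_j$. The final output would be the Cartesian product $\what{\cF} = \{\cN(\widehat{\mu}, \widehat{\Sigma}) : \widehat{\mu}_j \in \what{M}_j,\ \widehat{\Sigma}_{jj} = \widehat{\sigma}_j^2,\ \widehat{\sigma}_j \in \what{V}_j\}$. Accuracy would follow from the tensorization of squared Hellinger distance, $H^2(\prod_j P_j, \prod_j Q_j) \leq \sum_j H^2(P_j, Q_j)$, combined with the standard estimate $H^2(\cN(\mu, \sigma^2), \cN(\widehat{\mu}, \widehat{\sigma}^2)) = O((\alpha'')^2)$ whenever $|\widehat{\mu}-\mu|, |\widehat{\sigma}-\sigma| \leq \alpha''\sigma$; this yields TV distance $O(\sqrt{d}\,\alpha'') = O(\alpha)$ for the product distribution.

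Privacy is the most delicate step: since all $d$ coordinate calls access the same dataset, I would apply advanced composition (\Lemma{composition}) with per-call budget $\eps_0 = \Theta(\eps/\sqrt{d \log(1/\delta)})$ and $\delta_0 = \Theta(\delta/d)$ to achieve $(\eps/2, \delta)$-DP overall. The crucial observation here is that the sample complexity of the univariate decoder in \Lemma{univariate-gaussian} is \emph{independent} of the accuracy parameter, so the $\sqrt{d}$ shrinkage of $\alpha''$ is free; only the $\sqrt{d}$ shrinkage of $\eps_0$ enters the sample count. This gives a multivariate list-decoder with sample complexity $\widetilde{O}(\sqrt{d}\log^{3/2}(1/\beta\delta)/(1-\gamma)^2\eps)$ and list size $L = \exp(O(d\log(dk/\alpha)))$.

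Finally, plugging this into \Theorem{reduction} with $1-\gamma = \alpha/(18k)$, the list-decoding phase contributes $\widetilde{O}(k^2\sqrt{d}\log^{3/2}(1/\beta\delta)/\alpha^2\eps)$ samples, while private hypothesis selection contributes $\widetilde{O}(kd/\alpha^2 + kd\log(kd/\alpha)/\alpha\eps)$ samples (using $k\log(Lk/\alpha) = O(kd\log(kd/\alpha))$). Since $\alpha, \eps \leq 1 \leq k$, both terms are dominated by the claimed $\widetilde{O}(k^2 d\log^{3/2}(1/\beta\delta)/\alpha^2\eps)$. The main obstacle I anticipate is precisely the interplay between the two $\sqrt{d}$ factors arising from Hellinger tensorization and advanced composition: naively these could multiply to a $d$ factor in $1/\eps$, and it is only because the univariate sample bound is $\alpha$-free---a consequence of the stability-based histogram approach---that they end up contributing additively, keeping the final dependence at the optimal-looking $k^2 d / \alpha^2 \eps$.
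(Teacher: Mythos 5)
Your proposal is correct in spirit and takes a genuinely different route from the paper. The paper's Algorithm~5 runs \text{{\fontfamily{cmtt}\selectfont Univariate-Gaussian-Decoder}} on each coordinate with per-coordinate accuracy $\alpha/d$, privacy budget $\eps/d$ via \emph{basic} composition, and bounds the total variation distance of the product approximation by the sum of per-coordinate TV distances (\Proposition{TV-product-distribution}). You instead use per-coordinate accuracy $\Theta(\alpha/\sqrt{d})$ with Hellinger tensorization to control TV, and \emph{advanced} composition to make the per-coordinate privacy budget $\Theta(\eps/\sqrt{d\log(1/\delta)})$. Both are valid; your crucial observation---that the sample complexity in \Lemma{univariate-gaussian} is independent of the accuracy parameter $\alpha$, so only the $\eps$-shrinkage costs anything---is exactly right, and it is what makes the Hellinger/advanced-composition route attractive. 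Your route gives a list-decoder sample complexity scaling as $\sqrt{d}$ rather than $d$, which is a genuine improvement in the $d$-dependence.

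One small caveat worth noting: because \text{{\fontfamily{cmtt}\selectfont Univariate-Gaussian-Decoder}} already applies advanced composition internally (across its loop over $\wtilde{V}$), stacking another layer of advanced composition across the $d$ coordinates gives a $\sqrt{d}\cdot\log^{2}(1/\beta\delta)$ scaling, not $\sqrt{d}\cdot\log^{3/2}(1/\beta\delta)$ as you wrote. So the bound you actually derive is $\widetilde{O}\bigl(k^{2}\sqrt{d}\,\log^{2}(1/\beta\delta)/\alpha^{2}\eps\bigr)$, which is incomparable in general to the theorem's $\widetilde{O}\bigl(k^{2}d\,\log^{3/2}(1/\beta\delta)/\alpha^{2}\eps\bigr)$: yours is better when $\log(1/\delta)\lesssim d$ (the typical regime) and worse when $\delta$ is extremely small. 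The paper sidesteps this by using basic composition, which avoids the extra $\sqrt{\log(1/\delta)}$ but pays the full $d$; if you want to literally match the theorem statement uniformly over all parameters, you would either use basic composition as the paper does, or present your bound as an alternative incomparable upper bound. Also, your closing remark ``$\alpha,\eps\leq 1\leq k$'' is not literally needed and not always true ($k=1$ is allowed); the dominance of the $k^{2}d/\alpha^{2}\eps$ term over the hypothesis-selection term $kd/\alpha\eps$ already follows from $k/\alpha\geq 1$.
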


We now demonstrate how to construct an $(\eps,\delta)$-DP list-decodable learner for the class of $d$-dimensional axis-aligned Gaussians, $\dGauss$. Recall that the class of $d$-dimensional axis-aligned Gaussian is the class of all Gaussians with a diagonal covariance matrix, where the diagonals are arbitrary positive real numbers.

\begin{algorithm}
\caption{{\fontfamily{cmtt}\selectfont Multivariate-Gaussian-Decoder}$(\alpha,\beta,\gamma,\eps,\delta,D)$.}
\label{alg:multivariate-decoder}
\DontPrintSemicolon
\SetKwInOut{Input}{Input}\SetKwInOut{Output}{Output}
\setstretch{1.35}
\Input{Parameters $\eps,\alpha,\beta,\gamma \in (0,1)$, $\delta \in (0,1/n)$, and a dataset $D$}
\Output{Set of distributions $\what{\cF} \subset \dGauss$.}

Initialize $\what{V}_{j} \gets \emptyset$, $\what{M}_{j} \gets \emptyset$ for $j\in[d]$

Set $D_{i} \gets \{X_{i} \: : \: X \in D\}$ for $i \in [d]$ \tcp*{Split dataset by dimension.}

For $i \in [d]$ \textbf{do}

\quad $\what{M}_{i}$, $\what{V}_{i} \gets$ {\fontfamily{cmtt}\selectfont Univariate-Gaussian-Decoder}$(\alpha/d,\beta/d,\gamma,\eps/d,\delta/d,D_{i})$\label{alg:multivariate-decoder-line5}

$\what{M} \gets \{(\widehat{\mu}_{1},\dots,\widehat{\mu}_{d})\: : \: \widehat{\mu}_{i}\in \what{M}_{i},\: i\in[d] \}$

$\what{\Lambda} \gets \{\text{diag}(\widehat{\sigma}^{2}_{1},\dots,\widehat{\sigma}^{2}_{d})\: : \: \widehat{\sigma}_{i}\in \what{V}_{i},\: i\in[d] \}$

$\what{\cF} \gets \left\{\cN(\widehat{\mu},\widehat{\Sigma})\::\:\widehat{\mu}\in \what{M}, \: \widehat{\Sigma}\in \what{\Lambda}\right\}$\label{alg:multivariate-decoder-line6}

\textbf{Return} $\what{\cF}$
\end{algorithm}

\begin{lemma}\LemmaName{multivariate-list-decoder}
For any $\eps\in (0,1)$ and $\delta\in (0,1/n)$, Algorithm~\ref{alg:multivariate-decoder} is an $(\eps,\delta)$-DP $L$-list-decodable learner for $\dGauss$ where 
\[
L = O\left( \frac{d^2}{(1-\gamma)^5 \alpha^2} \right)^d,
\]
and the algorithm uses $$m_{\emph{\text{List}}}(\alpha,\beta,\gamma,\eps,\delta) = O\left(\frac{d\log(d/\beta\delta)}{(1-\gamma)^{2}\eps}+\frac{d\log(d/(1-\gamma)\beta\delta)\sqrt{\log(d/(1-\gamma)\delta)}}{(1-\gamma)^{2}\eps}\right) = \widetilde{O}\left(\frac{d\log^{3/2}(1/\beta\delta)}{(1-\gamma)^{2}\eps}\right)$$ samples.
\end{lemma}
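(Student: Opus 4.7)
The plan is to reduce the multivariate problem to $d$ independent univariate instances by operating dimension by dimension. For a target $g = \cN(\mu, \Sigma) \in \dGauss$ with $\Sigma = \mathrm{diag}(\sigma_1^2, \ldots, \sigma_d^2)$ and corrupted input $g' = (1-\gamma) g + \gamma h \in \cH_\gamma(g)$, the key observation is that projection to the $i$-th coordinate gives $g'_i = (1-\gamma) g_i + \gamma h_i$, where $g_i = \cN(\mu_i, \sigma_i^2)$ and $h_i$ is the $i$-th marginal of $h$. Thus $g'_i \in \cH_\gamma(g_i)$, and \Lemma{univariate-gaussian} applied to $D_i$ with parameters $(\alpha/d, \beta/d, \gamma, \eps/d, \delta/d)$ yields sets $\what{M}_i, \what{V}_i$ such that, with probability $1-\beta/d$, some $\widehat{\mu}_i \in \what{M}_i$ satisfies $|\widehat{\mu}_i - \mu_i| \leq (\alpha/d)\sigma_i$ and some $\widehat{\sigma}_i \in \what{V}_i$ satisfies $|\widehat{\sigma}_i - \sigma_i| \leq (\alpha/d)\sigma_i$.

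For privacy, each of the $d$ calls to {\fontfamily{cmtt}\selectfont Univariate-Gaussian-Decoder} on line~\ref{alg:multivariate-decoder-line5} is $(\eps/d, \delta/d)$-DP by \Lemma{univariate-gaussian}. Since every record in $D$ contributes to every projection $D_i$, parallel composition does not apply, but basic composition (\Lemma{composition}) over the $d$ calls gives $(\eps, \delta)$-DP. The construction of $\what{M}, \what{\Lambda}, \what{\cF}$ is pure post-processing and preserves privacy by \Lemma{post-processing}.

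For accuracy, I will first union-bound the $d$ coordinate guarantees to conclude that with probability $1 - \beta$ there exist $\widehat{\mu} \in \what{M}$ and $\what{\Sigma} \in \what{\Lambda}$ whose $i$-th entries meet the bounds above simultaneously for every $i \in [d]$. Writing $g = \prod_{i=1}^d g_i$ and $\widehat{g} = \prod_{i=1}^d \widehat{g}_i$ with $\widehat{g}_i = \cN(\widehat{\mu}_i, \widehat{\sigma}_i^2)$, I will then invoke the standard subadditivity of total variation distance over product measures, $\TV(g, \widehat{g}) \leq \sum_{i=1}^d \TV(g_i, \widehat{g}_i)$, combined with \Proposition{TV-univariate-gaussians} to convert the coordinate-wise parameter bounds $|\widehat{\mu}_i - \mu_i| \leq (\alpha/d)\sigma_i$ and $|\widehat{\sigma}_i - \sigma_i| \leq (\alpha/d)\sigma_i$ into $\TV(g_i, \widehat{g}_i) \leq \alpha/d$, yielding $\TV(g, \widehat{g}) \leq \alpha$ as required. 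This TV-subadditivity step is the one non-routine ingredient beyond direct appeal to the univariate lemma, but it is a well-known consequence of the chain rule for total variation of product distributions.

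The size and sample bounds are then bookkeeping. From \Lemma{univariate-gaussian} at accuracy $\alpha/d$, we have $|\what{M}_i| = O(d/(\alpha(1-\gamma)^3))$ and $|\what{V}_i| = O(d/(\alpha(1-\gamma)^2))$ (using $\log_{1+\alpha/d}(2) = O(d/\alpha)$), so taking Cartesian products over the $d$ coordinates yields $|\what{\cF}| = |\what{M}| \cdot |\what{\Lambda}| = O(d^2/(\alpha^2(1-\gamma)^5))^d$. Because the projections $D_i$ reuse the same $n$ records of $D$ rather than partitioning them, the sample complexity of the whole algorithm equals that of a single univariate call with parameters $(\alpha/d, \beta/d, \gamma, \eps/d, \delta/d)$, which by \Lemma{univariate-gaussian} is $\widetilde{O}\!\left( d \log^{3/2}(d/\beta\delta) / ((1-\gamma)^2 \eps) \right) = \widetilde{O}\!\left( d \log^{3/2}(1/\beta\delta) / ((1-\gamma)^2 \eps) \right)$ after absorbing $\mathrm{polylog}(d)$ factors into $\widetilde{O}$.
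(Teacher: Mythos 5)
Your proof follows essentially the same route as the paper's: dimension-wise reduction with parameters $(\alpha/d, \beta/d, \gamma, \eps/d, \delta/d)$, union bound over coordinates, basic composition for privacy, and \Proposition{TV-product-distribution} plus \Proposition{TV-univariate-gaussians} for accuracy, with the same bookkeeping for $L$ and the sample complexity. Your explicit observation that the $i$-th marginal of $g'\in\cH_\gamma(g)$ lies in $\cH_\gamma(g_i)$ (and your note that parallel composition does not apply) makes explicit a step the paper leaves implicit, but the argument is otherwise the same.
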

\begin{proof}
~
\paragraph{Privacy.} We first prove the algorithm is $(\eps,\delta)$-DP. By the guarantee of \Lemma{univariate-gaussian}, each run of line~\ref{alg:multivariate-decoder-line5} in the loop is $(\eps/d,\delta/d)$-DP. No subsequent part of the algorithm accesses the data, so by post processing (\Lemma{post-processing}) and basic composition (\Lemma{composition}) the entire algorithm is $(\eps,\delta)$-DP. 

\paragraph{Bound on $|\what{\cF}|$.} We now prove the claimed upper bound on the size of $\what{\cF}$. By the guarantee of \Lemma{univariate-gaussian}, each $\what{M}_{i}$ and $\what{V}_{i}$ obtained on line~\ref{alg:multivariate-decoder-line5} satisfy $|\what{M}_{i}| \leq 144\cdot (2\cdot \lceil d/\alpha \rceil + 1)/(1-\gamma)^3$ and $|\what{V}_{i}| \leq 12\cdot\lceil \log_{1+\alpha/d}(2)\rceil/(1-\gamma)^{2}$. This immediately gives us \[|\widetilde{\cF}| = |\what{M}|\cdot|\what{\Lambda}| = \left(\prod_{i=1}^{d}|\what{M}_{i}|\right)\cdot\left(\prod_{i=1}^{d}|\what{V}_{i}|\right) \leq \left(\left(\frac{1728}{(1-\gamma)^{5}}\right)\cdot\left\lceil\log_{1+\alpha/d}(2)\right\rceil\cdot(2\cdot\left\lceil d/\alpha\right\rceil+1)\right)^{d}.\]
To get the bound on $L = |\what{\cF}|$ as stated in the lemma, we use the fact that $\log_{1+\alpha / d}(2) = \frac{\ln(2)}{\ln(1+\alpha / d)} \leq \frac{2\ln(2)}{\alpha / d}$, where the inequality uses the fact that $\ln(1+x) \geq x/2$ for $x \in [0,1]$.

\paragraph{Accuracy and sample complexity.} We now prove that the algorithm is a list-decodable learner. Fix some $g=\prod_{i=1}^{d}\cN(\mu_{i},\sigma_{i}^{2})\in \dGauss$ and $g'\in \cH_{\gamma}(g)$. By our choice of parameters and the guarantee of \Lemma{univariate-gaussian}, a single run of algorithm {\fontfamily{cmtt}\selectfont Univariate-Gaussian-Decoder} on line~\ref{alg:multivariate-decoder-line5} outputs lists $\what{M}_{i}$ and $\what{V}_{i}$ such that there exist $\widehat{\mu}_{i} \in \what{M}_{i}$ and $\widehat{\sigma}_{i} \in \what{V}_{i}$ satisfying $|\widehat{\mu}_{i}-\mu_{i}| \leq \alpha\sigma_{i}/d$ and $|\widehat{\sigma}_{i}-\sigma_{i}| \leq \alpha\sigma_{i}/d$ with probability at least $1-\beta/d$ so long as $$n = \Omega\left(\frac{d\log(d/\beta\delta)}{(1-\gamma)^{2}\eps}+\frac{d\log(d/(1-\gamma)\beta\delta)\sqrt{\log(d/(1-\gamma)\delta)}}{(1-\gamma)^{2}\eps}\right).$$ By a union bound, we have with probability no less than $1-\beta$ that for all $i\in [d]$, $|\widehat{\mu}_{i} - \mu_{i}| \leq \alpha\sigma_{i}/d$ and $|\widehat{\sigma}_{i}-\sigma_{i}| \leq \alpha\sigma_{i}/d$.
By a standard argument, this implies that with probability at least $1-\beta$ there is some $\what{g} \in \what{\cF}$ such that $\TV(\what{g}, g) \leq \alpha$ (see \Proposition{TV-univariate-gaussians} and \Proposition{TV-product-distribution}).
\end{proof}

We can now put together \Lemma{multivariate-list-decoder} and \Theorem{reduction} to immediately get \Theorem{multivariate-pac-learner}.
\bibliography{biblio}
\appendix
\section{Useful Facts}
\label{app:useful}
\begin{proposition}[Lemma 2.11, \cite{AshtianiBHLMP20}]\PropositionName{TV-univariate-gaussians}
For any $\mu,\widetilde{\mu} \in \bR$ and $\sigma,\wsigma>0$ with $|\widetilde{\mu}-\mu| \leq \alpha\sigma$ and $|\wsigma-\sigma| \leq \alpha\sigma$ where $\alpha \in [0,2/3]$, the Gaussians $\cN(\mu,\sigma^{2})$ and $\cN(\widetilde{\mu},\wsigma^{2})$ statisfy $$\TV\left(\cN(\mu,\sigma^{2}),\cN(\widetilde{\mu},\wsigma^{2})\right) \leq \alpha.$$
\end{proposition}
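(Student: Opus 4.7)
The plan is to bound the KL divergence between the two Gaussians and invoke Pinsker's inequality. Concretely, for $P = \mathcal{N}(\mu, \sigma^2)$ and $Q = \mathcal{N}(\widetilde{\mu}, \widetilde{\sigma}^2)$, a standard computation gives
\[
  D_{\mathrm{KL}}(Q \| P) = \tfrac{1}{2}\bigl( g(r) + u^2 \bigr), \qquad g(r) := r^2 - 1 - 2\ln r,
\]
where $r := \widetilde{\sigma}/\sigma$ and $u := (\widetilde{\mu} - \mu)/\sigma$. By hypothesis $r \in [1-\alpha, 1+\alpha]$ and $|u| \leq \alpha$. If I can show $g(r) \leq 3\alpha^2$ on this interval whenever $\alpha \leq 2/3$, then $D_{\mathrm{KL}}(Q\|P) \leq 2\alpha^2$, and Pinsker's inequality $\TV(P,Q) \leq \sqrt{D_{\mathrm{KL}}/2}$ immediately yields $\TV \leq \alpha$.

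The technical heart is therefore the scalar inequality $g(r) \leq 3\alpha^2$ on $[1-\alpha, 1+\alpha]$. Since $g$ is strictly convex on $(0,\infty)$ with minimum $g(1) = 0$, its maximum over this interval is attained at one of the endpoints. For $r = 1+\alpha$, define $k(\alpha) := g(1+\alpha) - 3\alpha^2 = 2\alpha - 2\alpha^2 - 2\ln(1+\alpha)$; then $k(0) = k'(0) = 0$ and $k''(\alpha) = -4 + 2/(1+\alpha)^2 < 0$ for $\alpha > 0$, so $k \leq 0$ on $[0,\infty)$. For $r = 1-\alpha$, define $h(\alpha) := g(1-\alpha) - 3\alpha^2 = -2\alpha - 2\alpha^2 - 2\ln(1-\alpha)$; the equation $h'(\alpha) = 0$ simplifies to $\alpha(1 - 2\alpha) = 0$, so the only interior critical point on $(0,1)$ is $\alpha = 1/2$, which is a local minimum since $h''(1/2) = 4 > 0$. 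With $h(0) = 0$ and the direct check $h(2/3) = 2\ln 3 - 20/9 \approx -0.025 < 0$, the maximum of $h$ over $[0, 2/3]$ is $0$, which gives the desired bound.

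The main difficulty lies in this lower-endpoint analysis: the slack at $\alpha = 2/3$ is only about $0.025$, so the inequality $g(1-\alpha) \leq 3\alpha^2$ is essentially tight at the boundary, and this is precisely what forces the hypothesis $\alpha \leq 2/3$ in the statement. A coarser approach---say, the second-order Taylor estimate $g(r) \approx 2(r-1)^2$, which is fine near $r=1$---badly underestimates $g(r)$ as $r \to 1/3$, so there appears to be no way around working with the endpoint functions $h$ and $k$ directly.
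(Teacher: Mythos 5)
The paper does not give its own proof of this proposition: it is imported verbatim as Lemma 2.11 of \cite{AshtianiBHLMP20}, so there is no in-paper argument to compare against. Judged on its own, your KL--plus--Pinsker proof is correct. The formula $D_{\mathrm{KL}}(\cN(\widetilde{\mu},\widetilde{\sigma}^2)\,\|\,\cN(\mu,\sigma^2)) = \tfrac{1}{2}\bigl(g(r)+u^2\bigr)$ with $g(r)=r^2-1-2\ln r$, $r=\widetilde{\sigma}/\sigma$, $u=(\widetilde{\mu}-\mu)/\sigma$ is the standard one; $g$ is strictly convex with $g(1)=g'(1)=0$, so its maximum on $[1-\alpha,1+\alpha]$ is at an endpoint. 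Your endpoint analysis also checks out: $k(\alpha)=g(1+\alpha)-3\alpha^2$ has $k(0)=k'(0)=0$ and $k''<0$ for $\alpha>0$, giving $k\le 0$; and the factorization $h'(\alpha)=\dfrac{2\alpha(2\alpha-1)}{1-\alpha}$ shows $h$ decreases on $(0,1/2)$ and increases on $(1/2,1)$, so on $[0,2/3]$ its maximum is $\max\{h(0),\,h(2/3)\}=\max\{0,\,2\ln 3-20/9\}=0$. Hence $g(r)\le 3\alpha^2$, $D_{\mathrm{KL}}\le 2\alpha^2$, and Pinsker's inequality $\TV\le\sqrt{D_{\mathrm{KL}}/2}$ yields $\TV\le\alpha$. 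Your observation about why the hypothesis $\alpha\le 2/3$ is nearly forced by this route (the slack $|h(2/3)|\approx 0.025$) is also accurate: the unique positive zero of $h$ is only slightly above $2/3$, so the left endpoint $r=1-\alpha$ is what makes the threshold bite. Since \cite{AshtianiBHLMP20} state the lemma exactly in this form, a Pinsker-based argument of this shape is almost certainly what they have in mind as well, so this is a faithful reconstruction rather than a genuinely different route.
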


\begin{proposition}[Lemma 3.3.7, \cite{Reiss}]\PropositionName{TV-product-distribution}
For $i\in [d]$ let $p_{i}$ and $q_{i}$ be distributions over the same domain $\cX$. Then $$\TV\left(\prod_{i=1}^{d}p_{i},\prod_{i=1}^{d}q_{i}\right) \leq \sum_{i=1}^{d}\TV\left(p_{i},q_{i}\right).$$
\end{proposition}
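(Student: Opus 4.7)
The plan is to prove the subadditivity by induction on $d$, with the heart of the argument being the case $d=2$. For the base case $d=1$, there is nothing to show. For the inductive step, it suffices to prove the two-variable statement $\TV(p_1 \times p_2, q_1 \times q_2) \leq \TV(p_1, q_1) + \TV(p_2, q_2)$; then viewing $\prod_{i=1}^{d} p_i = (\prod_{i=1}^{d-1} p_i) \times p_d$ and applying this two-variable bound together with the inductive hypothesis gives the full claim.

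For the $d=2$ case, I would insert the intermediate product distribution $p_1 \times q_2$ and apply the triangle inequality for TV distance:
\[
    \TV(p_1 \times p_2, q_1 \times q_2) \leq \TV(p_1 \times p_2, p_1 \times q_2) + \TV(p_1 \times q_2, q_1 \times q_2).
\]
Then I would use the tensorization identity that coupling one factor to itself does not inflate TV distance, namely $\TV(r \times a, r \times b) = \TV(a,b)$ and similarly for the second factor. This identity follows directly from the integral representation $\TV(\mu,\nu) = \tfrac12 \int |\mu - \nu|$ applied to product densities: for example, $\int\!\int |p_1(x) p_2(y) - p_1(x) q_2(y)|\, dx\, dy = \int p_1(x)\, dx \cdot \int |p_2(y) - q_2(y)|\, dy = 2\TV(p_2,q_2)$. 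Plugging this into the two-term bound yields exactly $\TV(p_1,q_1) + \TV(p_2,q_2)$.

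An equivalent and arguably slicker alternative is the coupling approach: for each $i$, fix a coupling $(X_i, Y_i)$ of $(p_i, q_i)$ with $\Pr[X_i \neq Y_i] \leq \TV(p_i,q_i) + \eps$, taken independently across $i$. Then $((X_1,\ldots,X_d),(Y_1,\ldots,Y_d))$ is a coupling of $\prod p_i$ and $\prod q_i$, and by a union bound
\[
    \TV\!\left(\textstyle\prod_i p_i,\; \prod_i q_i\right) \leq \Pr\!\left[\exists i:\, X_i \neq Y_i\right] \leq \sum_{i=1}^{d} \Pr[X_i \neq Y_i] \leq \sum_{i=1}^{d}\TV(p_i,q_i) + d\eps,
\]
and sending $\eps \to 0$ finishes the proof. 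I expect no real obstacle here; the only mild subtlety is justifying the tensorization identity (or, in the coupling version, the existence of couplings that nearly achieve the TV distance), both of which are standard consequences of the density/coupling characterizations of total variation.
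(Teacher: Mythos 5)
Your proof is correct. Note that the paper does not prove this proposition at all---it is imported as Lemma 3.3.7 of the cited reference [Reiss]---so there is no in-paper argument to compare against. Both of your routes are standard and sound: the induction reduces cleanly to the $d=2$ case, the triangle inequality through the hybrid $p_1\times q_2$ is valid, and your tensorization identity $\TV(r\times a,\,r\times b)=\TV(a,b)$ follows exactly as you compute, since $|r(x)a(y)-r(x)b(y)|=r(x)\,|a(y)-b(y)|$ factorizes under the integral. The coupling variant also works; the only point worth noting is that the $\eps$-slack is unnecessary, because a maximal coupling attaining $\Pr[X_i\neq Y_i]=\TV(p_i,q_i)$ exists for distributions on $\bR$ (or any standard Borel space), and the union bound then gives the claim directly. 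Either argument would serve as a self-contained proof of the proposition.
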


\alphaNet*
\begin{proof}
    We will give an algorithmic proof of this fact.
    Let $r = \lceil 1/\alpha \rceil$ and fix $x \in \Delta_k$.
    Let $\ell = \sum_{i=1}^k rx_i - \lfloor rx_i \rfloor$.
    Note that $\sum_{i=1}^k rx_i = r$ and $rx_i - \lfloor rx_i \rfloor \in [0, 1)$ so $\ell$ is an integer in the interval $[0, r-1]$.
    Now define $\hat{x}$
    \[
        \hat{x}_i = \begin{cases}
            \frac{\lfloor rx_i \rfloor + 1}{r} & i \leq \ell \\
            \frac{\lfloor rx_i \rfloor}{r} & i > \ell
        \end{cases}.
    \]
    Clearly, $\|x-\hat{x}\|_{\infty} \leq 1/r \leq \alpha$.
    It remains to check that $\hat{x} \in \Delta_k$.
    Indeed,
    \[
        \sum_{i=1}^k \hat{x}_i
        = \sum_{i=1}^k \frac{\lfloor rx_i \rfloor}{r} + \frac{\ell}{r}
        = \sum_{i=1}^k \frac{\lfloor rx_i \rfloor}{r} + \sum_{i=1}^k \frac{rx_i - \lfloor rx_i \rfloor}{r}
        = 1,
    \]
    where in the second equality, we used the definition of $\ell$.
    Note that for each $i$, $\hat{x}_i \in \{0, 1/r, 2/r, \ldots, 1\}$ so this shows that
    \[
        \widehat{\Delta}_k =
        \{ ( t_1/r, \ldots, t_k/r ) \,:\, t \in \bZ^k_{\geq 0}, \|t\|_1 = r \},
    \]
    is an $\alpha$-net for $\Delta_k$ of size $(r+1)^k$.
    To obtain the bound as asserted in the claim, note that $r+1 = \lceil 1/\alpha \rceil +1 \leq 1/\alpha + 2 \leq 3/\alpha$ for $\alpha \in (0,1]$.
\end{proof}
\begin{lemma}[{Chernoff bound; see \cite[Exercise~2.3.6]{Vershynin18}}]
    \LemmaName{Chernoff}
    Let $X_1, \ldots, X_n$ be independent Bernoulli random variables. Let $S_n = \sum_{i=1}^n X_i$ and $\mu = S_n$. Then for any $\delta \in (0,1]$ and some absolute constant $c > 0$
    \[
        \Prob[|S_n - \mu| \geq \delta \mu] \leq 2e^{-c\mu \delta^2}.
    \]
\end{lemma}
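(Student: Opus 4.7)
The plan is to prove this via the standard Chernoff (exponential Markov) method, treating the upper and lower tails separately and combining them with a union bound. First I note that the statement as written has a small typo ($\mu = S_n$ should be $\mu = \mathbb{E}[S_n]$), so I would interpret $\mu = \mathbb{E}[S_n] = \sum_{i=1}^n p_i$, where $p_i = \Prob[X_i = 1]$.

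For the upper tail, I would start from Markov's inequality applied to $e^{tS_n}$ for an arbitrary $t > 0$: $\Prob[S_n \geq (1+\delta)\mu] \leq e^{-t(1+\delta)\mu}\, \mathbb{E}[e^{tS_n}]$. Using independence, $\mathbb{E}[e^{tS_n}] = \prod_{i=1}^n \mathbb{E}[e^{tX_i}] = \prod_{i=1}^n (1 + p_i(e^t-1))$. Applying the elementary inequality $1+x \leq e^x$ termwise gives $\mathbb{E}[e^{tS_n}] \leq \exp\bigl(\mu(e^t - 1)\bigr)$. Optimizing over $t$ by taking $t = \ln(1+\delta)$ yields the classical bound
\[
\Prob[S_n \geq (1+\delta)\mu] \leq \exp\bigl(-\mu\bigl((1+\delta)\ln(1+\delta) - \delta\bigr)\bigr).
\]
The lower-tail bound $\Prob[S_n \leq (1-\delta)\mu] \leq \exp\bigl(-\mu\bigl((1-\delta)\ln(1-\delta) + \delta\bigr)\bigr)$ is derived analogously by applying Markov's inequality to $e^{-tS_n}$ and choosing $t = -\ln(1-\delta)$.

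To reach the form stated in the lemma, I then need the calculus fact that both $\phi_+(\delta) := (1+\delta)\ln(1+\delta) - \delta$ and $\phi_-(\delta) := (1-\delta)\ln(1-\delta) + \delta$ are lower-bounded by $c\delta^2$ for some absolute constant $c > 0$ and all $\delta \in (0,1]$. This is a short Taylor/convexity argument: both functions vanish with vanishing derivative at $\delta = 0$, and a computation of $\phi_\pm''(\delta)$ shows $\phi_\pm''(\delta) \geq 1/2$ on $[0,1]$ (indeed $\phi_+''(\delta) = 1/(1+\delta) \geq 1/2$ and $\phi_-''(\delta) = 1/(1-\delta) \geq 1$), so integrating twice gives $\phi_\pm(\delta) \geq \delta^2/4$. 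With $c = 1/4$, each one-sided tail is bounded by $e^{-c\mu\delta^2}$, and a union bound produces the factor of $2$ and yields the stated inequality.

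The proof is entirely routine; the only mildly delicate step is verifying the quadratic lower bound on $\phi_\pm$ uniformly on $(0,1]$, but that reduces to a one-line second-derivative check as above, so there is no real obstacle. No private-learning or distributional machinery from the body of the paper is needed here — this is a black-box concentration inequality invoked earlier in the Chernoff bounds of \Claim{univariate1} and \Claim{univariate-var1}.
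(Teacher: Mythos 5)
Your proof is correct: the exponential-moment (Chernoff) argument, the second-derivative bound $\phi_\pm(\delta)\geq\delta^2/4$, and the union bound are all sound, and you rightly read $\mu=\mathbb{E}[S_n]$ as a typo fix. The paper gives no proof of this lemma, citing it as Exercise~2.3.6 of Vershynin, and your argument is exactly the standard derivation that reference intends, so there is nothing to compare beyond noting agreement.
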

\section{Locally Small Covers for Mixtures}\label{sec:impossibility}

To formally state and prove the impossibility result, we first introduce some useful definitions and results.
\begin{definition}[TV ball]
The total variation ball of radius $\gamma\in(0,1)$, centered at a distribution $g$ with respect to a set of distributions $\cF$, written $\ball{\gamma}{g}{\cF}$, is the following subset of $\cF$:
\begin{equation*}
\ball{\gamma}{g}{\cF}~\coloneqq~\left\{f \in \cF : \TV(g,\cF) \leq \gamma \right\}.
\end{equation*}
\end{definition}
In this paper we consider coverings and packings of sets of distributions with respect to the total variation distance.
\begin{definition}[$\gamma$-covers and $\gamma$-packings]
  For any $\gamma\in(0,1)$ a \emph{$\gamma$-cover} of a set of distributions $\mathcal{F}$ is a set of distributions $\mathcal{C}_\gamma$, such that for every $f \in \mathcal{F}$, there exists some $\widehat{f} \in \mathcal{C}_\gamma$ such that $\TV(f,\widehat{f}) \leq \gamma$.
  
  A \emph{$\gamma$-packing} of a set of distributions $\mathcal{F}$ is a set of distributions $\mathcal{P}_\gamma \subseteq \mathcal{F}$, such that for every pair of distributions $f, f' \in \mathcal{P}_\gamma$, we have that $\TV(f,f') \geq \gamma$.
\end{definition}
\begin{definition}[$\gamma$-covering and $\gamma$-packing number]
    For any $\gamma \in (0,1)$, the $\gamma$-\emph{covering number} of a set of distributions $\mathcal{F}$, $N(\mathcal{F},\gamma) := \min \{n\in\mathbb{N} : \exists \mathcal{C}_{\gamma} \emph{\text{ s.t. }} |\mathcal{C}_{\gamma}| = n\}$, is the size of the smallest possible $\gamma$-covering of $\mathcal{F}$. Similarly, the $\gamma$-\emph{packing number} of a set of distributions $\mathcal{F}$, $M(\mathcal{F},\gamma) := \max \{n\in\mathbb{N} : \exists \mathcal{P}_{\gamma}  \emph{\text{ s.t. }} |\mathcal{P}_{\gamma}| = n\}$, is the size of the largest subset of $\mathcal{F}$ that forms a packing for $\mathcal{F}$.
\end{definition}
The following Proposition follows directly from a well known relationship between packings and covers of metric spaces (see~\cite[Lemma 4.2.8]{Vershynin18}).
\begin{proposition}\PropositionName{pack-n-cover}
  For a set of distributions $\mathcal{F}$ with $\gamma$-covering number $M(\mathcal{F},\gamma)$ and $\gamma$-packing number $N(\mathcal{F},\gamma)$, the following holds:
  $$M(\mathcal{F},2\gamma) \leq N(\mathcal{F},\gamma) \leq M(\mathcal{F},\gamma).$$
\end{proposition}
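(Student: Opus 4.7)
The proposition is the classical packing--covering comparison, and the plan is to verify the two inequalities separately, each by a short argument combining the triangle inequality for total variation distance with a maximality/minimality property.

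For the right inequality $N(\mathcal{F},\gamma) \leq M(\mathcal{F},\gamma)$, I would show that any \emph{maximum} $\gamma$-packing is automatically a $\gamma$-cover. Let $\mathcal{P}_\gamma$ be a $\gamma$-packing of $\mathcal{F}$ of size $M(\mathcal{F},\gamma)$. If there were some $f \in \mathcal{F}$ with $\TV(f,p) > \gamma$ for every $p \in \mathcal{P}_\gamma$, then $\mathcal{P}_\gamma \cup \{f\}$ would be a strictly larger $\gamma$-packing, contradicting maximality. Hence every $f \in \mathcal{F}$ has some $p \in \mathcal{P}_\gamma$ with $\TV(f,p) \leq \gamma$, which by definition makes $\mathcal{P}_\gamma$ a $\gamma$-cover, yielding $N(\mathcal{F},\gamma) \leq |\mathcal{P}_\gamma| = M(\mathcal{F},\gamma)$.

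For the left inequality $M(\mathcal{F},2\gamma) \leq N(\mathcal{F},\gamma)$, the plan is a pigeonhole argument through the triangle inequality. I would fix a $\gamma$-cover $\mathcal{C}_\gamma$ of minimum size $N(\mathcal{F},\gamma)$ and a $2\gamma$-packing $\mathcal{P}_{2\gamma}$ of maximum size $M(\mathcal{F},2\gamma)$. For each $p \in \mathcal{P}_{2\gamma}$ pick some $\phi(p) \in \mathcal{C}_\gamma$ satisfying $\TV(p,\phi(p)) \leq \gamma$, which exists because $\mathcal{C}_\gamma$ covers $\mathcal{F}$. The claim is that $\phi$ is injective, which immediately gives $|\mathcal{P}_{2\gamma}| \leq |\mathcal{C}_\gamma|$. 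Indeed, if $\phi(p)=\phi(p')=c$ for distinct $p, p' \in \mathcal{P}_{2\gamma}$, then the triangle inequality for TV gives
\[
\TV(p,p') \leq \TV(p,c) + \TV(c,p') \leq 2\gamma,
\]
which contradicts the $2\gamma$-packing separation of distinct elements.

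The main (minor) subtlety, rather than a real obstacle, is that the paper defines packings and covers with non-strict inequalities, so the chain $\TV(p,p') \leq 2\gamma$ derived above is not strictly incompatible with the packing bound $\TV(p,p') \geq 2\gamma$. This is the standard strict-vs-nonstrict gap that arises in all packing--covering comparisons; following the convention in \cite[Lemma~4.2.8]{Vershynin18}, one either treats the packing condition as strict on distinct pairs or handles the borderline case $\TV(p,p') = 2\gamma$ by an infinitesimal perturbation. With this fixed, both inequalities reduce to the one-line arguments above.
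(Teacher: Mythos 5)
Your proof is correct and is exactly the standard argument behind the result the paper invokes: the paper itself gives no proof of \Proposition{pack-n-cover}, deferring entirely to \cite[Lemma~4.2.8]{Vershynin18}, and your two steps (a maximal $\gamma$-packing is a $\gamma$-cover; map each element of a $2\gamma$-packing to a nearby cover point and use the triangle inequality to get injectivity) are precisely the proof of that cited lemma. The subtlety you flag is genuine and worth stating slightly more strongly: with the paper's non-strict definitions the left inequality can actually \emph{fail} at the borderline, not merely resist this proof technique --- take $\mathcal{F}=\{f_1,f_2\}$ with $\TV(f_1,f_2)=2\gamma$, so $M(\mathcal{F},2\gamma)=2$, while the mixture $\tfrac12(f_1+f_2)$ covers both at distance exactly $\gamma$ (covers need not be subsets of $\mathcal{F}$ here), giving $N(\mathcal{F},\gamma)=1$. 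So one really must adopt Vershynin's convention of strict separation for packings (or shrink the cover radius), as you suggest; this has no effect on the paper's use of the proposition in \Proposition{impossibility}, where the packing distances can be made strictly larger than $\gamma/2$ by enlarging $C$.
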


We now formally define what it means for a set of distributions to be ``locally small''.
\begin{definition}[$\gamma$-locally small]
Fix some $\gamma \in (0,1)$. We say a set of distributions $\cF$ is $\gamma$-\emph{locally small} if $$\sup_{f\in \cF}|\ball{\gamma}{f}{\cF}| \leq k,$$ for some $k\in\mathbb{N}$. If no such $k$ exists, we say $\cF$ is \emph{not $\gamma$-locally small}.
\end{definition}

\begin{proposition}\PropositionName{impossibility}
For every $\gamma \in (0,1)$,  any $(\gamma/2)$-cover for $\twomix(\UnitGauss)$ is not $\gamma$-locally small.
\end{proposition}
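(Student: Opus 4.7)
The plan is to show that, for any $(\gamma/2)$-cover $\cC$ of $\twomix(\UnitGauss)$, some $\gamma$-ball in $\cC$ contains infinitely many cover points. The mechanism I will exploit is the ``small-weight degeneracy'' of the class of 2-mixtures: taking a fixed univariate Gaussian and perturbing it by adding a second component with a small --- but not too small --- weight produces a one-parameter family of 2-mixtures that is compressed into a tiny TV neighborhood of the class yet genuinely requires infinitely many distinct cover elements as the second component's mean varies.

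Concretely, I will fix a mixing weight $w \in (\gamma/2, \gamma)$ --- for definiteness $w = 3\gamma/4$ --- and consider the one-parameter family $\cH = \{h_\mu : \mu \in \bR\} \subset \twomix(\UnitGauss)$ defined by $h_\mu = (1-w)\cN(0,1) + w\cN(\mu, 1)$. Two properties of this family drive the argument: every $h_\mu$ lies within TV distance $w < \gamma$ of the reference $\cN(0,1)$, yet pairwise $\TV(h_\mu, h_{\mu'})$ tends to $w > \gamma/2$ as $|\mu - \mu'| \to \infty$, so sufficiently distant members of $\cH$ cannot be simultaneously covered by a single element of $\cC$.

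The first step is a mass-concentration argument showing that $\cC$ must supply infinitely many distinct cover points for $\cH$. If $c \in \cC$ satisfies $\TV(c, h_\mu) \leq \gamma/2$ then, since $h_\mu$ places mass $\approx w$ in a fixed-radius window $W_\mu$ around $\mu$ and a TV slack of $\gamma/2$ translates into an $L^1$-slack of at most $\gamma$, the distribution $c$ must place at least $w - \gamma/2 > 0$ of its total mass inside $W_\mu$. Since $c$ has total mass $1$, this can hold simultaneously for at most $\lfloor 1/(w - \gamma/2)\rfloor = O(1/\gamma)$ disjoint windows. Choosing a sequence $\mu_i = iM$ with $M$ large enough that the $W_{\mu_i}$ are pairwise disjoint, the cover $\cC$ is forced to produce an infinite sequence of distinct points $c_i \in \cC$ with $\TV(c_i, h_{\mu_i}) \leq \gamma/2$.

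The second --- and main --- step is to show that all of these $c_i$'s sit inside a single $\gamma$-ball of $\cC$, so that $\{c_i\} \subseteq \ball{\gamma}{c_1}{\cC}$ and this ball is therefore infinite. The intuition is that any cover point within $\gamma/2$ of $h_{\mu_i}$ is essentially of the form $(1-w)\cN(0,1) + w\nu_i$, with the heavy $(1-w)\cN(0,1)$-part shared across the family; subtracting two such cover points cancels this shared piece and leaves $w(\nu_i - \nu_j)$, whose $L^1$-norm is at most $2w$ and hence whose TV distance is at most $w < \gamma$. This is the step I expect to be the main obstacle, because the cover is free to place each $c_i$ anywhere inside the $(\gamma/2)$-TV-ball of $h_{\mu_i}$, so the naive triangle inequality only yields $\TV(c_i, c_j) \leq w + \gamma$. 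The cleanest way to close the gap is probably via a Hahn--Jordan decomposition of the signed measure $c_i - (1-w)\cN(0,1)$, combined with the fact that $c_i$ is itself a probability measure, to argue that the signed-measure ``slack'' cannot be arranged to both push pairwise distances of cover points above $\gamma$ and keep each $c_i$ within $\gamma/2$ of its own $h_{\mu_i}$. Once Step~2 is in place, the infinite set $\{c_i\}$ witnesses $|\ball{\gamma}{c_1}{\cC}| = \infty$, contradicting $\gamma$-local smallness.
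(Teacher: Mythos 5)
Your overall strategy --- a one-parameter family $h_\mu = (1-w)\cN(0,1) + w\cN(\mu,1)$ with a small fixed weight on a far-away component --- is exactly the family the paper uses (with $w = \gamma$ rather than your $w = 3\gamma/4$). Your Step~1 is sound and is actually a nicer, more direct version of the paper's argument: the paper establishes infiniteness of the covering number of $\ball{\gamma}{\cN(0,1)}{\twomix(\UnitGauss)}$ via the packing/covering duality (\Proposition{pack-n-cover}), whereas your mass-concentration argument ($c$ must put at least $w - \gamma/2 > 0$ of its mass near each $\mu_i$, so one cover point can only serve $O(1/(w-\gamma/2))$ of them) shows this directly and is robust to the constant-factor issues lurking in the duality step.

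The genuine gap is Step~2, which you flag yourself but which is in fact unfixable as stated: it is simply \emph{false} that all the $c_i$ land in a single TV ball of radius $\gamma$. To see this, note that nothing constrains the cover to place $c_i$ ``symmetrically'' around $h_{\mu_i}$; in particular the adversary can take
\[
c_i = \bigl(1 - w - \tfrac{\gamma}{2}\bigr)\,\cN(0,1) + \bigl(w + \tfrac{\gamma}{2}\bigr)\,\cN(\mu_i,1),
\]
which satisfies $\TV(c_i,h_{\mu_i}) = \tfrac{\gamma}{2}\,\TV\bigl(\cN(0,1),\cN(\mu_i,1)\bigr) < \gamma/2$ (so these are valid cover points for $h_{\mu_i}$), while
\[
\TV(c_i,c_j) = \bigl(w + \tfrac{\gamma}{2}\bigr)\,\TV\bigl(\cN(\mu_i,1),\cN(\mu_j,1)\bigr) \longrightarrow w + \tfrac{\gamma}{2} = \tfrac{5\gamma}{4} > \gamma
\]
as $|\mu_i - \mu_j| \to \infty$. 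So the $c_i$'s can be pairwise more than $\gamma$ apart, and no Hahn--Jordan massaging of $c_i - (1-w)\cN(0,1)$ will save the claim: the ``shared'' $(1-w)\cN(0,1)$ piece is not actually shared, because the cover is free to move mass away from it. Your own triangle-inequality bound $\TV(c_i,c_j) \leq w + \gamma$ is essentially tight, not slack. The paper sidesteps this entirely: rather than trying to exhibit an explicit large ball by chasing specific cover points, it argues at the level of packing and covering numbers of the ambient TV ball $\ball{\gamma}{f}{\twomix(\UnitGauss)}$ and then invokes the definition of local smallness together with \Proposition{pack-n-cover}. You should replace Step~2 with that duality argument (keeping your Step~1 as the justification for why the covering number of the ball is infinite, which is the part the duality alone doesn't quite deliver with the stated constants).
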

\begin{proof}
Fix some $\gamma \in (0,1)$. Let $f = \cN(0,1)$ and define $g(\mu)\coloneqq (1-\gamma)\cN(0,1)+\gamma\cN(\mu,1)$ (note that $f = g(0)$). We will show that the following two statements hold for every $\mu, \mu' \in \bR$: 
\begin{enumerate}
    \item $\TV(g(\mu),g(\mu')) \leq \gamma$, and
    \item If $|\mu-\mu'| \geq C$ for a sufficiently large constant $C$, $\TV(g(\mu),g(\mu')) \geq \gamma/2$.
\end{enumerate}

Consider the set of distributions $ \cF = \{ g(\mu) \, : \, \mu \in \{C, 2C, \dots \} \}$ for some large positive constant $C$. For every $g,g' \in \cF$, it follows from claim 1 that $g,g' \in \ball{\gamma}{f}{\twomix(\UnitGauss)}$ and from claim 2 that $\TV(g,g')\geq \gamma/2$ for sufficiently large $C$. Thus, the $(\gamma/2)$-packing number of $\ball{\gamma}{f}{\twomix(\UnitGauss)}$ is unbounded, and by \Proposition{pack-n-cover}, the $(\gamma/2)$-covering number of $\ball{\gamma}{f}{\twomix(\UnitGauss)}$ is also unbounded. This implies that \emph{every} $(\gamma/2)$-cover for $\twomix(\UnitGauss)$ is not $\gamma$-locally small by definition.

It remains to prove the two claims above. From the definition of the TV distance we have
\begin{align}
    \TV(g(\mu),g(\mu')) &= \frac{1}{2}\left\|(1-\gamma)\cN(0,1)+\gamma\cN(\mu,1)-(1-\gamma)\cN(0,1)-\gamma\cN(\mu',1)\right\|_{1}\nonumber\\
    &=\frac{\gamma}{2}\left\|\cN(\mu,1)-\cN(\mu',1)\right\|_{1}\nonumber\\
    &=\gamma\TV(\cN(\mu,1),\cN(\mu',1)).\label{eq:tv-impossible}
\end{align}
Using the trivial upper bound on the TV distance between any two distributions, we have from Eq.~(\ref{eq:tv-impossible}) that $\TV(g(\mu),g(\mu')) \leq  \gamma$, which proves the first claim. If $|\mu -\mu'| \geq C$ for sufficiently large $C$, it follows from Gaussian tail bounds that $\TV(\cN(\mu,1),\cN(\mu',1)) = 1 - \exp(-\Omega(C^{2}))$. Thus, by choosing $C$ to be sufficiently large, it follows from Eq.~(\ref{eq:tv-impossible}) that $\TV(g(\mu),g(\mu')) \geq \gamma/2$.
\end{proof}
\section{Omitted Results from Section~\ref{sec:reduction}}
\begin{proposition}\PropositionName{TV-mixtures}
Let $\alpha \in (0,1)$ and $k \in \bN$.
Let $g = \sum_{i=1}^k w_i f_i$ and $\widetilde{g} = \sum_{i=1}^k \widetilde{w}_i \widetilde{f}_i$ be two mixture distributions that satisfy
\begin{enumerate}
\item $\|w - \widetilde{w}\|_{\infty} \leq \alpha / k$; and
\item $\TV(f_i, \widetilde{f}_i) \leq \alpha$ for $i \in [k]$ such that $w_i \geq \alpha / k$.
\end{enumerate}
Then $\TV(g, \widetilde{g}) \leq 3\alpha$.
\end{proposition}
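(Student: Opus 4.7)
The plan is to use the triangle inequality to split $\TV(g, \widetilde{g})$ into two pieces: one that measures the error from the components $f_i$ vs.\ $\widetilde{f}_i$, and one that measures the error from the weights $w$ vs.\ $\widetilde{w}$. Concretely, I would introduce the intermediate mixture $h = \sum_{i=1}^k w_i \widetilde{f}_i$ and bound
\[
    \TV(g, \widetilde{g}) \;\leq\; \TV(g, h) + \TV(h, \widetilde{g}).
\]
The first term can be bounded by $\sum_i w_i \TV(f_i, \widetilde{f}_i)$ (by linearity of $\|\cdot\|_1$ and the identity $\TV = \tfrac{1}{2}\|\cdot\|_1$), and the second term can be bounded by $\tfrac{1}{2}\sum_i |w_i - \widetilde{w}_i|$ (since the $\widetilde{f}_i$ are probability densities).

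For the component term, I would partition $[k]$ into the non-negligible indices $N = \{i : w_i \geq \alpha/k\}$ and its complement. For $i \in N$, hypothesis 2 gives $\TV(f_i, \widetilde{f}_i) \leq \alpha$, so
\[
    \sum_{i \in N} w_i \TV(f_i, \widetilde{f}_i) \;\leq\; \alpha \sum_{i \in N} w_i \;\leq\; \alpha.
\]
For $i \notin N$, I use only the trivial bound $\TV(f_i, \widetilde{f}_i) \leq 1$, combined with the fact that there are at most $k$ such indices each of weight less than $\alpha/k$, giving $\sum_{i \notin N} w_i \TV(f_i, \widetilde{f}_i) \leq \alpha$. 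So the component term contributes at most $2\alpha$.

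For the weight term, hypothesis 1 gives $|w_i - \widetilde{w}_i| \leq \alpha/k$ for each $i \in [k]$, hence $\tfrac{1}{2}\sum_i |w_i - \widetilde{w}_i| \leq \alpha/2$. Adding the two contributions yields $\TV(g, \widetilde{g}) \leq 2\alpha + \alpha/2 \leq 3\alpha$, as required. There is no real obstacle here; the only thing to be careful about is handling the negligible indices separately, because the hypothesis on $\TV(f_i, \widetilde{f}_i)$ is only given for $w_i \geq \alpha/k$ — one must absorb the remaining contribution via the total weight bound rather than a per-component bound.
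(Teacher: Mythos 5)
Your proof is correct, and it follows the same overall strategy as the paper: decompose $\TV(g,\widetilde{g})$ into a component-error piece and a weight-error piece, then split the component-error piece over $N=\{i : w_i\ge\alpha/k\}$ and its complement, using the trivial bound $\TV\le 1$ on the complement. The one genuine difference is the choice of intermediate. You insert $h=\sum_i w_i\widetilde{f}_i$, so the component error is $\sum_i w_i\,\TV(f_i,\widetilde{f}_i)$; the paper instead groups the difference as $\sum_i\widetilde{w}_i(\widetilde{f}_i-f_i)+\sum_i(\widetilde{w}_i-w_i)f_i$, i.e.\ weights the component error by $\widetilde{w}_i$. Your choice is the cleaner one for the negligible indices: for $i\notin N$ you can invoke $w_i<\alpha/k$ directly, whereas the paper has to control $\widetilde{w}_i$ there and writes $\widetilde{w}_i\le\alpha/k$, which the hypotheses only give as $\widetilde{w}_i\le w_i+\alpha/k<2\alpha/k$. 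Your version therefore closes with room to spare (it actually yields $\TV(g,\widetilde{g})\le 5\alpha/2$), and is arguably the preferable way to write the argument.
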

\begin{proof}
Let $N = \{i\in[k]:w_{i}\geq \alpha/k\}$.
We have that
\begin{align*}
    \TV(\what{g},g) &=\frac{1}{2}\left\|\sum_{i=1}^{k}\what{w}_{i}\what{f}_{i}-\sum_{i=1}^{k}w_{i}f_{i} \right\|_{1}\\
    & = \frac{1}{2} \left\| \sum_{i=1}^k \what{w}_i (\what{f}_i - f_i) + \sum_{i=1}^k (\what{w}_i - w_i) f_i \right\|_1 \\
    &\leq\frac{1}{2} \left\|\sum_{i=1}^{k}\what{w}_{i}(\what{f}_{i}-f_{i})\right\|_{1} +\frac{1}{2}\left\|\sum_{i=1}^{k}(\what{w}_{i}-w_{i})f_{i}\right\|_{1}\\
    &\leq \frac{1}{2}\left\|\sum_{i\not\in N}\what{w}_{i}(\what{f}_{i}-f_{i})\right\|_{1} + \frac{1}{2}\left\|\sum_{i\in N}\what{w}_{i}(\what{f}_{i}-f_{i})\right\|_{1} +\frac{1}{2}\left\|\sum_{i=1}^{k}(\what{w}_{i}-w_{i})f_{i}\right\|_{1}\\
    &\leq\frac{1}{2}\sum_{i\not\in N} \what{w}_{i}\left\|\what{f}_{i}-f_{i}\right\|_{1} + \frac{1}{2}\sum_{i\in N}\what{w}_{i}\left\|\what{f}_{i}-f_{i}\right\|_{1} +\frac{1}{2}\sum_{i=1}^{k}|\what{w}_{i}-w_{i}|\left\|\what{f}_{i}\right\|_{1}\\
    &\leq\sum_{i\not\in N} \frac{\alpha}{k}\cdot 1 + \sum_{i\in N}\what{w}_{i}\cdot\alpha +\sum_{i=1}^{k}\frac{\alpha}{k}\cdot 1\\
    &\leq \alpha +\alpha +\alpha = 3\alpha.
\end{align*}
Note that in the second-to-last inequality, we used that for $i \notin N$, $\what{w}_i \leq \alpha / k$ and the trivial 
bound $\|\what{f}_i - f_i\|_1 \leq 2$ while for $i \in N$, we have $\|\what{f}_i - f_i\|_1 \leq \alpha$.
\end{proof}

\section{Omitted Results from Section~\ref{sec:univariate}}
\begin{proposition}\PropositionName{gaussian-mean-bin}
Fix some univariate Gaussian $g = \cN(\mu, \sigma^{2})$. Let $\wsigma$ satisfy $\sigma \leq \wsigma < 2\sigma$. Partition $\bR$ into disjoint bins $\{B_{i}\}_{i\in\bN}$ where $B_{i} = ((i-0.5)\wsigma,(i+0.5)\wsigma]$ and let $j = \lceil \mu/\wsigma \rfloor$, where $\lceil \cdot \rfloor$ denotes rounding to the nearest integer. It follows that:
\begin{enumerate}
    \item $\Prob_{X\sim g}[X \in B_{j}]  \geq 1/3$,
    \item $\mu \in [(j-0.5)\wsigma,(j+0.5)\wsigma]$.
\end{enumerate}
\end{proposition}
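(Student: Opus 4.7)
The second assertion is essentially the definition of nearest-integer rounding: writing $j = \lceil \mu/\wsigma \rfloor$ means $|\mu/\wsigma - j| \leq 1/2$, which rearranges to $\mu \in [(j-0.5)\wsigma, (j+0.5)\wsigma]$. No work needed beyond unwinding the notation.

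For the first assertion, the plan is to parameterize where $\mu$ sits inside the bin $B_j = ((j-0.5)\wsigma, (j+0.5)\wsigma]$. Set $a = \mu - (j-0.5)\wsigma$ and $b = (j+0.5)\wsigma - \mu$; by claim 2, $a, b \geq 0$ and $a+b = \wsigma$. Writing $X = \mu + \sigma Z$ with $Z \sim \cN(0,1)$ and letting $\Phi$ denote the standard normal CDF, the probability of interest becomes
\[
\Prob_{X\sim g}[X \in B_j] \;=\; \Phi(b/\sigma) + \Phi(a/\sigma) - 1.
\]

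The remaining task is to lower bound this quantity over all $(a,b)$ with $a+b=\wsigma \in [\sigma, 2\sigma)$. Without loss of generality take $b \geq a$, so $b \geq \wsigma/2 \geq \sigma/2$. The key observation is that for fixed $a+b = \wsigma$, the map $b \mapsto \Phi(b/\sigma) + \Phi((\wsigma - b)/\sigma)$ has derivative $\tfrac{1}{\sigma}[\varphi(b/\sigma) - \varphi((\wsigma - b)/\sigma)]$, where $\varphi$ is the standard Gaussian density. For $b \geq \wsigma/2$ we have $b/\sigma \geq (\wsigma-b)/\sigma \geq 0$, and since $\varphi$ is decreasing on $[0,\infty)$, this derivative is nonpositive. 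Hence the probability is minimized at the extremal configuration $b = \wsigma$, $a = 0$, where it equals $\Phi(\wsigma/\sigma) - 1/2$.

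Finally, using $\wsigma \geq \sigma$ and the numerical fact $\Phi(1) > 5/6$, we get
\[
\Prob_{X\sim g}[X \in B_j] \;\geq\; \Phi(\wsigma/\sigma) - 1/2 \;\geq\; \Phi(1) - 1/2 \;>\; 1/3,
\]
as desired. There is no real obstacle here; the only subtlety is correctly identifying that the minimum over valid $(a,b)$ is achieved at the corner of $B_j$ rather than at the centered configuration, which is the opposite of one's first intuition, and then appealing to a standard numerical bound on $\Phi(1)$.
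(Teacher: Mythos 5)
Your proof is correct and takes essentially the same route as the paper: both show the minimum over placements of $\mu$ within the bin occurs at the endpoint (you via the $(a,b)$ parameterization and a derivative sign check, the paper via the reparameterization $\xi=(j\wsigma-\mu)/\sigma$ and the observation that $\Phi(\xi+\wsigma/2\sigma)-\Phi(\xi-\wsigma/2\sigma)$ is decreasing in $|\xi|$), and both land on exactly the same extremal value $\Phi(\wsigma/\sigma)-\Phi(0)\geq\Phi(1)-1/2>1/3$. One small quibble with your closing remark: the fact that the minimum occurs at the corner of the bin, not the center, is arguably the expected intuition (a bin catches the most Gaussian mass when centered on the mean), so calling it ``the opposite of one's first intuition'' overstates the subtlety.
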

\begin{proof}
We first prove item 1.
\begin{align*}
    \mathbf{P}_{X\sim g}[X\in B_{j}] &= \Phi\left(\frac{(j+0.5)\wsigma}{\sigma}-\frac{\mu}{\sigma}\right) - \Phi\left(\frac{(j-0.5)\wsigma}{\sigma}-\frac{\mu}{\sigma}\right)\\
    &= \Phi\left(\frac{j\wsigma-\mu}{\sigma}+\frac{\wsigma}{2\sigma}\right) - \Phi\left(\frac{j\wsigma-\mu}{\sigma}-\frac{\wsigma}{2\sigma}\right)\\
    &\vcentcolon= f\left(\frac{j\wsigma-\mu}{\sigma}\right).
\end{align*}
Notice that $f(\xi) = \Phi(\xi+\wsigma/2\sigma)-\Phi(\xi-\wsigma/2\sigma)$ is decreasing with $|\xi|$. Furthermore, by the definition of $j$ we have, 
\begin{align*}
    \left|\frac{j\wsigma-\mu}{\sigma}\right| &= \frac{\wsigma}{\sigma}\left|j'-\frac{\mu}{\wsigma}\right|\\
    &\leq \frac{\wsigma}{\sigma}\cdot\frac{1}{2} = \frac{\wsigma}{2\sigma}.
\end{align*}
So, 
\begin{align*}
    \mathbf{P}_{X\sim g}[X\in B_{j}] &= f\left(\frac{j\wsigma-\mu}{\sigma}\right)\\
    &\geq f\left(\frac{\wsigma}{2\sigma}\right) \\
    & = \Phi\left(\frac{\wsigma}{\sigma}\right)-\Phi(0) \\
    &\geq \Phi(1) - \Phi(0) \geq 1/3,
\end{align*}
where the second last inequality follows from the fact that $\wsigma/\sigma \geq 1$ together with the monotonicity of the c.d.f. and the last inequality follows from a direct calculation.

We now prove the second claim that $\mu \in [(j-0.5)\wsigma, (j+0.5)\wsigma)]$. As we saw above, it follows that 
\begin{equation*}
    \frac{1}{\sigma}\left|j\wsigma-\mu\right| \leq \frac{\wsigma}{2\sigma}
    \implies \mu \in [(j-0.5)\wsigma,(j+0.5)\wsigma].
\end{equation*}
\end{proof}
\begin{proposition}\PropositionName{gaussian-variance-bin}
Fix some univariate Gaussian $g = \cN(0,\sigma^{2})$. Partition $\bR_{>0}$ into disjoint bins $\{B_{i}\}_{i\in\bZ}$ where $B_{i} = (2^{i},2^{i+1}]$ and let $j\in\bN$ satisfy $2^{j} < \sigma \leq 2^{j+1}$. It follows that: $$\Prob_{X \sim g}[|X| \in B_{j}] \geq \frac{1}{4}.$$
\end{proposition}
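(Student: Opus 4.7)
The plan is to use the symmetry of $g = \cN(0,\sigma^2)$ to reduce the claim to a single-variable estimate on the standard normal CDF. Since the density of $g$ is symmetric about $0$, we have $\Prob_{X \sim g}[|X| \in B_j] = 2\Prob_{X \sim g}[X \in (2^j, 2^{j+1}]]$. Standardizing via $Z = X/\sigma \sim \cN(0,1)$ and letting $a = 2^j/\sigma$, the hypothesis $2^j < \sigma \leq 2^{j+1}$ is exactly $a \in [1/2, 1)$ and $2a = 2^{j+1}/\sigma \in [1,2)$. So the claim reduces to showing
\[
F(a) := \Phi(2a) - \Phi(a) \;\geq\; \tfrac{1}{8} \quad \text{for all } a \in [1/2, 1].
\]

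Next, I would show that $F$ is unimodal on $[1/2,1]$ and that its minimum is attained at an endpoint. Differentiating gives $F'(a) = 2\phi(2a) - \phi(a) = \tfrac{1}{\sqrt{2\pi}}(2e^{-2a^2} - e^{-a^2/2})$, so $F'(a) = 0$ iff $a = a^* := \sqrt{2\ln 2 / 3}$, which lies in $(1/2, 1)$. A sign check at the endpoints ($F'(1/2) > 0$ since $2e^{-1/2} > e^{-1/8}$, and $F'(1) < 0$ since $2e^{-2} < e^{-1/2}$) shows $F$ is increasing on $[1/2, a^*]$ and decreasing on $[a^*, 1]$, so $\min_{a \in [1/2,1]} F(a) = \min\{F(1/2), F(1)\} = \min\{\Phi(1) - \Phi(1/2),\; \Phi(2) - \Phi(1)\}$.

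The final step is to verify that both endpoint values exceed $1/8$. The main obstacle here is quantitative tightness: these values are $\approx 0.150$ and $\approx 0.136$ respectively, uncomfortably close to $1/8 = 0.125$, so the naive constant-density bound $\int_1^2 \phi \geq \phi(2) \approx 0.054$ is too weak by a factor of two. To get a rigorous bound, I would exploit convexity properties of $\phi$ on each subinterval. Since $\phi''(z) = \phi(z)(z^2-1)$, the density $\phi$ is convex on $[1,2]$, so the midpoint rule gives $\Phi(2) - \Phi(1) = \int_1^2 \phi(z)\,dz \geq \phi(3/2) = (2\pi)^{-1/2} e^{-9/8} \geq 1/8$; and $\phi$ is concave on $[1/2, 1]$, so the trapezoid rule gives $\Phi(1) - \Phi(1/2) \geq \tfrac{1}{4}(\phi(1/2) + \phi(1)) = \tfrac{1}{4\sqrt{2\pi}}(e^{-1/8} + e^{-1/2}) \geq 1/8$. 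Both numerical inequalities are then reduced to simple inequalities on elementary expressions (as done for the analogous endpoint bound in \Proposition{gaussian-mean-bin}). Combining everything, $\Prob[|X| \in B_j] = 2F(a) \geq 2 \cdot 1/8 = 1/4$, which is the claim.
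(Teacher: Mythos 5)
Your proof is correct and has the same overall skeleton as the paper's: reduce by symmetry and standardization to lower bounding $F(a) = \Phi(2a) - \Phi(a)$ on $[1/2,1]$, argue that the minimum of $F$ is attained at an endpoint, and then verify both endpoint values exceed $1/8$. The differences are in the two sub-steps. For the ``minimum at an endpoint'' step, the paper checks concavity of $F$ (sign of $F''$ on the interval), while you instead establish unimodality via a first-derivative analysis: $F'$ has a unique zero $a^* = \sqrt{2\ln 2/3} \in (1/2,1)$, with $F'(1/2) > 0$ and $F'(1) < 0$. Both are valid; the concavity check is a bit more direct since it requires no sign analysis, but yours is equally short. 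The more substantive difference is your treatment of the endpoint numerics. The paper's proof ends with an appeal to ``a direct calculation,'' implicitly relying on numerical values of $\Phi$; you replace this with a self-contained analytic bound using the midpoint rule on $[1,2]$ (where $\phi$ is convex) and the trapezoid rule on $[1/2,1]$ (where $\phi$ is concave), reducing each endpoint bound to an elementary exponential inequality. Since the constants are tight (e.g.\ $\Phi(2)-\Phi(1) \approx 0.136$ against the required $0.125$, and your midpoint bound $8e^{-9/8} > \sqrt{2\pi}$ holds with only a few percent margin), this explicit verification is a genuine improvement in rigor over ``direct calculation,'' and is the part of your argument that adds something the paper does not.
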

\begin{proof}
Since $2^{j} < \sigma \leq 2^{j+1}$, we can write $\sigma = 2^{j+c}$ for some $c \in (0,1]$. Let $x = 2^{-c}$ and notice $x \in [1/2,1)$. We have the following:
\begin{align}
    \mathbf{P}_{X\sim g}[|X|\in B_{j}] &= 2\left(\Phi\left(\frac{2^{j+1}}{\sigma}\right)-\Phi\left(\frac{2^{j}}{\sigma}\right)\right)\nonumber\\
    &= 2\left(\Phi\left(2^{1-c}\right)-\Phi\left(2^{-c}\right)\right)\nonumber\\
    &= 2f(2^{-c}),\label{eq:gaussian-variance-bin}
\end{align}
where we define $f(x) = \Phi(2x)-\Phi(x)$. We now aim to lower bound $f(x)$. By taking the derivative of $f(x)$ twice, we have that $f''(x) =\sqrt{(1/2\pi)}(x\text{exp}(-x^{2}/2)- 8 x\text{exp}(-2x^{2}))$. By a simple calculation, we have that $f''(x) \leq 0$ when $x\in[0,2\ln 8/3] \supset [1/2,1)$, so $f(x)$ is concave when $x \in [1/2,1)$. This implies that $f(x) \geq \min\{f(1/2),f(1)\}$ for any $x\in[1/2,1)$, so from Eq.~(\ref{eq:gaussian-variance-bin}) we have
\begin{align*}
   \mathbf{P}_{X\sim g}[|X|\in B_{j}] &\geq 2\min\left\{f(1/2),f(1)\right\}\\
   &= 2 \min\left\{\Phi\left(1\right)-\Phi\left(\frac{1}{2}\right),\Phi\left(2\right)-\Phi\left(1\right)\right\}\\
    &>\frac{1}{4},
\end{align*}
where the last inequality follows from a direct calculation.
\end{proof}
\begin{proposition}
\PropositionName{centered-samples-prob}
    Fix $g = \cN(\mu, \sigma^2)$ and $g' \in \cH_{\gamma}(g)$.
    Let $Z = (X_1 - X_2)/\sqrt{2}$ where $X_1, X_2 \sim g'$ i.i.d. Let $Y \sim \cN(0, \sigma^2)$.
    Then for any measurable $S \subseteq \bR$
    \[
        \Prob[|Z| \in S]
        \geq (1-\gamma)^2 \cdot \Prob[|Y| \in S].
    \]
\end{proposition}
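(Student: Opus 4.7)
The plan is to exploit the mixture structure of $g' \in \cH_{\gamma}(g)$ by coupling the two samples so that, with probability $(1-\gamma)^2$, both samples are drawn from the ``clean'' Gaussian $g$, in which case $Z$ is exactly distributed as $Y$. This reduces the statement to a lower bound obtained by discarding the contribution from the corrupted part.

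More concretely, first I would write $g' = (1-\gamma) g + \gamma h$ using the definition of $\cH_{\gamma}(g)$. Then I would describe $X_1, X_2 \sim g'$ i.i.d.\ via the standard coupling: introduce independent Bernoulli random variables $B_1, B_2$ with $\Prob[B_i = 1] = 1-\gamma$, and set $X_i \sim g$ if $B_i = 1$ and $X_i \sim h$ if $B_i = 0$. Let $E = \{B_1 = B_2 = 1\}$, so $\Prob[E] = (1-\gamma)^2$, and conditional on $E$ the pair $(X_1, X_2)$ is i.i.d.\ from $\cN(\mu, \sigma^2)$.

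The second step is the routine Gaussian calculation: conditional on $E$, we have $X_1 - X_2 \sim \cN(0, 2\sigma^2)$ by independence and the fact that the sum/difference of independent Gaussians is Gaussian, so $Z = (X_1 - X_2)/\sqrt{2} \sim \cN(0, \sigma^2)$, which is exactly the law of $Y$. Thus for any measurable $S$,
\begin{equation*}
\Prob[|Z| \in S \mid E] = \Prob[|Y| \in S].
\end{equation*}

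The final step is to drop the non-$E$ contribution, which is non-negative:
\begin{equation*}
\Prob[|Z| \in S] \geq \Prob[|Z| \in S,\, E] = \Prob[E] \cdot \Prob[|Z| \in S \mid E] = (1-\gamma)^2 \Prob[|Y| \in S].
\end{equation*}
I do not anticipate any real obstacle here; the only thing to be slightly careful about is to set up the coupling explicitly so that the independence of $X_1$ and $X_2$ conditional on $E$ is transparent, which is what makes the standard identity $X_1 - X_2 \sim \cN(0, 2\sigma^2)$ applicable.
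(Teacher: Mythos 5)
Your proof is correct and takes essentially the same approach as the paper: both use the coupling via independent Bernoulli indicators $B_1, B_2$ with parameter $1-\gamma$, condition on the event $\{B_1 = B_2 = 1\}$ where $Z$ has exactly the law of $Y$, and drop the remaining non-negative contribution.
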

\begin{proof}
    We prove this via a coupling argument.
    Since $g' \in \cH_{\gamma}(g)$ we have $g' = (1-\gamma)g + \gamma h$ for some distribution $h$.

    Let $Y_1, Y_2 \sim g$ i.i.d.~so that $Y = \frac{Y_1-Y_2}{\sqrt{2}} \sim \cN(0, \sigma^2)$.
    Also, let $H_1, H_2 \sim h$ i.i.d.
    Finally, let $B_1, B_2$ be independent Bernoulli random variables with parameter $1-\gamma$, i.e.~$B_i = 1$ with probability $1-\gamma$ and $B_i = 0$ with probability $\gamma$.
    
    Now let $X_i = Y_i \cdot B_i + H_i \cdot (1-B_i)$ and note that $X_i \sim g'$.
    If $B_1 = B_2 = 1$ and $|Y| \in S$ then certainly $|Z| = |X_1 - X_2| / \sqrt{2} \in S$.
    Hence,
    \[
        \Prob[|Z| \in S]
        \geq \Prob[\{B_1 = 1\} \cap \{B_2 = 1\} \cap \{|Y| \in S\}]
        = (1-\gamma)^2 \Prob[|Y| \in S],
    \]
    where the last equality uses the fact that $B_1, B_2, Y$ are mutually independent random variables.
\end{proof}

\section{Learning Mixtures of Gaussians with Known Covariance}\label{app:identity-gaussians}

In this section, we prove the following result, which is a formal version of \Theorem{identity-informal}. Let $\UnitGauss_{1}^d$ be the class of Gaussians with identity covariance matrix.

\begin{theorem}\TheoremName{identity-pac-learner} For any $\eps\in(0,1)$ and $\delta\in(0,1/n)$, there is an $(\eps,\delta)$-DP PAC learner for \emph{\kmix}$\left(\cG_1^d \right)$ that uses 
\begin{align*}
m(\alpha,\beta,\eps,\delta) &= \widetilde{O}\left(\frac{kd \log(1/\beta)}{\alpha^2} + \frac{kd+\log(1/\beta\delta)}{\alpha\eps}\right)
\end{align*}
samples.
\end{theorem}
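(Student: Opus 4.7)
The plan is to follow the same blueprint used in Section~\ref{sec:univariate} and Section~\ref{sec:multivariate}: first build a differentially private list-decodable learner for $\cG_1^d$, then invoke the reduction in \Theorem{reduction} to obtain a private PAC learner for $\kmix(\cG_1^d)$. Because the covariance is known, the list-decoder is simpler than the one in \Lemma{multivariate-list-decoder}; in particular, we do not need to invoke the variance-estimation subroutine, which is precisely where the quantitative improvement comes from.

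The list-decoder I would design proceeds coordinate-wise, mirroring Algorithm~\ref{alg:multivariate-decoder}. Given the dataset $D$ of samples from $g' \in \cH_\gamma(g)$ with $g = \cN(\mu, I_d)$, project $D$ onto each axis to obtain $D_i = \{X_i : X \in D\}$, and observe that $D_i$ consists of i.i.d.\ samples from a $\gamma$-corrupted univariate Gaussian with variance $1$. Run the {\fontfamily{cmtt}\selectfont Univariate-Mean-Decoder} of Algorithm~\ref{alg:univariate-mean} on each $D_i$ with known $\wsigma = 1$, target per-coordinate accuracy $\alpha/d$, failure probability $\beta/d$, and privacy budget $(\eps/d, \delta/d)$, yielding a short set $\wtilde{M}_i$ of at most $O(d/((1-\gamma)\alpha))$ candidate coordinates; refine each such value to an $(\alpha/d)$-net of an interval of width $2$ around it. Set $\what{M} = \what{M}_1 \times \cdots \times \what{M}_d$ and output $\what{\cF} = \{\cN(\hat\mu, I_d) \,:\, \hat\mu \in \what{M}\}$. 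Privacy follows from \Lemma{univariate-mean}, basic composition (\Lemma{composition}), and post-processing (\Lemma{post-processing}); the per-coordinate guarantee of \Lemma{univariate-mean} combined with a union bound ensures that with probability $1-\beta$ there is some $\hat\mu \in \what{M}$ with $|\hat\mu_i - \mu_i| \leq \alpha/d$ for every $i$, and then \Proposition{TV-univariate-gaussians} and \Proposition{TV-product-distribution} give $\TV(g, \what{\cF}) \leq \alpha$. The sample complexity of this list-decoder is
\[
    m_{\textsc{List}}(\alpha,\beta,\gamma,\eps,\delta) = O\!\left(\frac{d\log(d/\beta\delta)}{(1-\gamma)\eps}\right),
\]
and the list size is $L = O(d/((1-\gamma)\alpha))^d$, so $\log L = O(d\log(dk/\alpha))$.

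Plugging this list-decoder into \Theorem{reduction} with $1-\gamma = \alpha/(18k)$ then gives the result. The list-decoding term contributes
\[
    m_{\textsc{List}}\!\left(\tfrac{\alpha}{18}, \tfrac{\beta}{2k}, 1-\tfrac{\alpha}{18k}, \tfrac{\eps}{2}, \delta\right) = O\!\left(\frac{dk\log(dk/\beta\delta)}{\alpha\eps}\right),
\]
while the hypothesis-selection term contributes $O\!\left(\frac{k\log(Lk/\alpha) + \log(1/\beta)}{\alpha^2} + \frac{k\log(Lk/\alpha) + \log(1/\beta)}{\alpha\eps}\right) = \widetilde{O}\!\left(\frac{kd + \log(1/\beta)}{\alpha^2} + \frac{kd + \log(1/\beta)}{\alpha\eps}\right)$, using $\log L = O(d\log(dk/\alpha))$. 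Summing and absorbing logarithmic factors of $k, d, 1/\alpha$ into the $\widetilde{O}$ notation yields exactly the claimed bound.

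The only real subtlety is selecting the right privacy and accuracy budgets for the coordinate-wise calls so that (i) the per-coordinate TV error of $\alpha/d$ sums to $\alpha$ through \Proposition{TV-product-distribution}, and (ii) the overall privacy cost after composing $d$ invocations of the univariate mean decoder stays at $(\eps,\delta)$. Basic composition suffices here and gives a clean $kd$ scaling; advanced composition would save only logarithmic factors that are hidden in $\widetilde{O}$, so there is no reason to introduce it. The bookkeeping is routine and mirrors the proof of \Lemma{multivariate-list-decoder} almost verbatim, with {\fontfamily{cmtt}\selectfont Univariate-Mean-Decoder} in place of {\fontfamily{cmtt}\selectfont Univariate-Gaussian-Decoder}.
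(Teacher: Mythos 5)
Your proposal is correct and follows essentially the same route as the paper: coordinate-wise application of {\fontfamily{cmtt}\selectfont Univariate-Mean-Decoder} with a per-coordinate privacy/accuracy/failure budget of $(\eps/d,\delta/d,\beta/d,\alpha/d)$, an $(\alpha/d)$-net refinement, a product over coordinates to form $\what{\cF}$, and then the reduction of \Theorem{reduction} with $1-\gamma = \alpha/18k$; this matches \Corollary{identity-covariance} and its use in the paper. Your observation that skipping the variance-estimation subroutine (and hence the advanced-composition step inside {\fontfamily{cmtt}\selectfont Univariate-Gaussian-Decoder}) is exactly where the quantitative saving over \Theorem{multivariate-pac-learner} comes from is also the right diagnosis.
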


Note that the theorem also implies the case where the covariance matrix $\Sigma$ is an arbitrary but known covariance matrix.
Indeed, given samples $X_1, \ldots, X_m$, one can apply the algorithm of \Theorem{identity-pac-learner} to $\Sigma^{-1/2} X_1, \ldots, \Sigma^{-1/2} X_m$ instead.

The proof of \Theorem{identity-pac-learner} follows from \Theorem{reduction} and \Corollary{identity-covariance}, which is a corollary of \Lemma{univariate-mean}. 

\begin{corollary}
\CorollaryName{identity-covariance}
For any $\eps\in (0,1)$ and $\delta\in (0,1/n)$, there is an $(\eps,\delta)$-DP $L$-list-decodable learner for $\UnitGauss_{1}^d$
where $L = O(d/(1-\gamma)\alpha)^d$, and the number of samples used is 
\[
    m_{\textsc{List}}(\alpha,\beta,\gamma,\eps,\delta)
    = O\left(\frac{d\log(d/\beta\delta)}{(1-\gamma)\eps}\right).
\]
\end{corollary}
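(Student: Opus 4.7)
The plan is to reduce to the univariate identity-variance case by exploiting the product structure of $\cN(\mu, I)$. The key observation is that projection commutes with the corruption model: if $g \in \cH_{\gamma}(\cN(\mu, I))$, so that $g = (1-\gamma)\cN(\mu, I) + \gamma h$ for some $h$, then the marginal $g_i$ of $g$ onto coordinate $i$ satisfies $g_i = (1-\gamma)\cN(\mu_i, 1) + \gamma h_i \in \cH_{\gamma}(\cN(\mu_i, 1))$. Hence projecting a dataset onto its $i$-th coordinate yields an i.i.d.\ sample from a $\gamma$-corrupted univariate Gaussian whose true variance is known to be $1$.

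The algorithm I would use mirrors \text{{\fontfamily{cmtt}\selectfont Multivariate-Gaussian-Decoder}} but without the variance-estimation step. For each $i \in [d]$, split off the coordinate-$i$ dataset $D_i = \{X_i : X \in D\}$ and invoke the univariate list-decoder of \Corollary{univariate-list-decoder-uniform} (which assumes a known $\sigma$; here $\sigma = 1$) with parameters $(\alpha/d, \beta/d, \gamma, \eps/d, \delta/d)$ to obtain a list $\what{M}_i$ of candidate means with $|\what{M}_i| = O(d/(1-\gamma)\alpha)$. Then take the Cartesian product $\what{M} = \what{M}_1 \times \cdots \times \what{M}_d$ and output $\what{\cF} = \{\cN(\widehat{\mu}, I) : \widehat{\mu} \in \what{M}\}$.

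Privacy follows from basic composition (\Lemma{composition}) applied to the $d$ invocations, each of which is $(\eps/d, \delta/d)$-DP by \Corollary{univariate-list-decoder-uniform}; the construction of $\what{M}$ and $\what{\cF}$ is post-processing (\Lemma{post-processing}). The list size bound $L = O(d/(1-\gamma)\alpha)^d$ is immediate from $|\what{\cF}| = \prod_i |\what{M}_i|$. For accuracy, \Corollary{univariate-list-decoder-uniform} guarantees for each $i$ that, with probability at least $1-\beta/d$, some $\widehat{\mu}_i \in \what{M}_i$ satisfies $\TV(\cN(\mu_i, 1), \cN(\widehat{\mu}_i, 1)) \leq \alpha/d$. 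A union bound over $i \in [d]$, combined with \Proposition{TV-product-distribution}, yields $\TV(\cN(\mu, I), \cN(\widehat{\mu}, I)) \leq d \cdot (\alpha/d) = \alpha$ with probability $\geq 1-\beta$, where $\widehat{\mu} = (\widehat{\mu}_1, \dots, \widehat{\mu}_d) \in \what{M}$. The sample complexity per invocation is $O\bigl(\log(d^2/\beta\delta) / (1-\gamma)(\eps/d)\bigr) = O\bigl(d\log(d/\beta\delta) / (1-\gamma)\eps\bigr)$, and because all $d$ calls reuse the same dataset (only the coordinate projection differs), this is also the total sample complexity.

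There is no real obstacle here: the known-covariance assumption eliminates the variance-decoding step that complicated the general univariate analysis, and the identity-covariance product structure cleanly decouples the $d$ coordinates. The only subtlety is the elementary verification that the coordinate marginal of a $\gamma$-corrupted distribution remains $\gamma$-corrupted from the corresponding marginal, which follows directly from the linearity of marginalization applied to the mixture decomposition of $g$.
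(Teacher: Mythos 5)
Your proof is correct and takes essentially the same approach as the paper: project onto each coordinate, run the known-variance univariate mean decoder (the paper invokes \Lemma{univariate-mean} directly and rebuilds the net in place, whereas you invoke \Corollary{univariate-list-decoder-uniform} as a black box, but this is the same construction), take the Cartesian product of the per-coordinate candidate sets, and combine via basic composition, post-processing, a union bound, and \Proposition{TV-product-distribution}. The observation that coordinate marginalization commutes with the $\gamma$-corruption model is the same step the paper uses implicitly, and your parameter bookkeeping matches the stated bounds on $L$ and $m_{\textsc{List}}$.
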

\begin{proof}
For each $i \in [d]$ let $D_i = \{X_i \,:\, X \in D\}$ be the dataset consisting of the $i$th coordinate of each element in $D$.
We run {\fontfamily{cmtt}\selectfont Univariate-Mean-Decoder}$(\eps/d,\delta/d,\beta/d,\gamma,\sigma,D_i)$ to obtain the set $\wtilde{M}_i$.
Let $\what{M}_i$ be an $\alpha/d$-net of the set of intervals $\{[\widetilde{\mu}_i-1,\widetilde{\mu}_i+1] \,:\, \wtilde{\mu}_i \in \wtilde{M}_i\}$ of size $|\wtilde{M}_i| \cdot (2 \cdot \lceil d/2\alpha \rceil + 1)$, i.e.~
\[
    \what{M}_i = \{ \widetilde{\mu}_i + 2j\alpha/d \,:\, \widetilde{\mu}_i \in 
    \wtilde{M}_i, \, j \in \{0, \pm 1, \ldots, \pm \lceil d/2\alpha \rceil \}.
\]
Let $\what{M} = \{ (\what{\mu}_1, \ldots, \what{\mu}_d) \,:\, \what{\mu}_i \in \what{M}_i \}$.
We then return $\what{\cF} = \{ \cN(\what{\mu}, I) \,:\, \widehat{\mu} \in \what{M}\}$.
Finally, \Lemma{univariate-mean} (with a union bound over the $d$ coordinates), basic composition (\Lemma{composition}), and post-processing (\Lemma{post-processing}) imply that the algorithm is $(\eps, \delta)$-DP while \Lemma{univariate-mean}, \Proposition{TV-product-distribution}, and \Proposition{TV-univariate-gaussians} imply the accuracy guarantee.
\end{proof}
\end{document}